\newcommand{\ba}{\begin{align}}
\newcommand{\ea}{\end{align}}
\newcommand{\ban}{\begin{align*}}
\newcommand{\ean}{\end{align*}}
\newcommand{\lbl}[1]{\label{}}
\newcommand{\br}{\begin{remark}}
\newcommand{\er}{\end{remark}}
\newtheorem*{rep@theorem}{\rep@title}
\newcommand{\newreptheorem}[2]{%
\newenvironment{rep#1}[1]{%
 \def\rep@title{#2 \ref{##1}}%
 \begin{rep@theorem}}%
 {\end{rep@theorem}}}
\newtheorem{theorem}{Theorem}
\newtheorem{corollary}[theorem]{Corollary}
\newtheorem*{corollary*}{Corollary}
\newtheorem*{observation*}{Observation}
\newtheorem{lemma}[theorem]{Lemma}
\newtheorem*{lemma*}{Lemma}
\theoremstyle{remark}
\newtheorem{remark}{Remark}
\newtheorem*{remark*}{Remark}
\newtheorem*{remarks*}{Remarks}
\theoremstyle{definition}
\def\undertilde#1{\mathord{\vtop{\ialign{##\crcr
$\hfil\displaystyle{#1}\hfil$\crcr\noalign{\kern1.5pt\nointerlineskip}
$\hfil\tilde{}\hfil$\crcr\noalign{\kern1.5pt}}}}}
\newcommand{\Sigmafield}{\Sigma}
\newcommand{\Samplespace}{\Omega}
\newcommand{\btotal}{m}
\newcommand{\alphabet}{{\cF}}
\newcommand{\bsize}{n}
\newcommand{\advfrac}{\beta}
\newcommand{\allbatches}{{B}}
\newcommand{\bgood}{{\allbatches_G}}
\newcommand{\badv}{\allbatches_A}
\newcommand{\Bsc}{\allbatches^\prime}
\newcommand{\Gsc}{{\allbatches^\prime_G}}
\newcommand{\Asc}{\allbatches^\prime_A}
\newcommand{\size}[1]{|{#1}|}
\newcommand{\bgoodsize}{\size{\bgood}}
\newcommand{\alphsubset}{S}
\newcommand{\alphsubsetdef}{ \alphsubset \in \alphabet}
\newcommand{\probestbatch}{\targetdis_b}
\newcommand{\targetdis}{p}
\newcommand{\batchdis}{p_b}
\newcommand{\targetdistance}{\eta}
\newcommand{\medV}{\text{med}}
\newcommand{\med}{\text{med}(\bar{\mu}(\alphsubset))}
\newcommand{\subsetprobtarget}{\targetdis(\alphsubset)}
\newcommand{\Uempprob}{{\bar{\targetdis}_{\Bsc}(\alphsubset)}}
\newcommand{\UGempprob}{{\bar{\targetdis}_{\Gsc}(\alphsubset)}}
\newcommand{\bempprob}{{\bar{\mu}_{b}(\alphsubset)}}
\newcommand{\VUempprob}{{\bar{\targetdis}_{\Bsc}}}
\newcommand{\Vbempprob}{{\bar{\mu}_{b}}} %previously  probestbatchsub
\newcommand{\empprob}[1]{\bar{\targetdis}_{#1}}
\newcommand{\empvarsub}[1]{\overline{\text{V}}_{#1}(\alphsubset)}
\newcommand{\var}[1]{\text{V}(#1)}
\newcommand{\corruption}{\psi}
\newcommand{\indcorruption}{\corruption_b(\alphsubset)}
\newcommand{\genoverallcorruptionsub}{\corruption_{\Bsc}(\alphsubset)}
\newtheorem{Theorem*}{Theorem}
\newtheorem{Claim*}[Theorem]{Claim}
\newtheorem{CounterExample*}{$\overline{\hbox{\bf Example}}$}
\newtheorem{Example*}[Theorem]{Example}
\newtheorem{Intuition*}[Theorem]{Intuition}
\newtheorem{Joke*}[Theorem]{Joke}
\newtheorem{Lemma*}[Theorem]{Lemma}
\newtheorem{Open problem}[Theorem]{Open problem}
\newtheorem{Question*}[Theorem]{Question}
\newcommand{\ignore}[1]{}
\newcommand{\RR}{\mathbb{R}}
\def \ba     {{\bf a}}
\newcommand{\cC}{{\mathcal C}}
\newcommand{\cF}{{\mathcal F}}
\newcommand{\cI}{{\mathcal I}}
\newcommand{\cO}{{\mathcal O}}
\newcommand{\cP}{{\mathcal P}}
\newcommand{\cS}{{\mathcal S}}
\newcommand{\cY}{{\mathcal Y}}
\newcommand{\reals}{\RR}
\newcommand{\eg}{\textit{e.g.,}\xspace}
\definecolor{light}{gray}{.75}
\def \upto  {{,}\ldots{,}}
\def \sets#1{{\{#1\}}}
\def \Paren#1{{\left({#1}\right)}}
\newcommand{\ed}{\stackrel{\mathrm{def}}{=}}
\def \half    {{\frac12}}
\def\ignore#1{}
\newcommand{\bi}{\begin{itemize}}
\newcommand{\ei}{\end{itemize}}
\def\orpro{\mathop{\mathchoice
   {\vee\kern-.49em\raise.7ex\hbox{$\cdot$}\kern.4em}
   {\vee\kern-.45em\raise.63ex\hbox{$\cdot$}\kern.2em}
   {\vee\kern-.4em\raise.3ex\hbox{$\cdot$}\kern.1em}
   {\vee\kern-.35em\raise2.2ex\hbox{$\cdot$}\kern.1em}}\limits}
\def\andpro{\mathop{\mathchoice
 {\wedge\kern-.46em\lower.69ex\hbox{$\cdot$}\kern.3em}
 {\wedge\kern-.46em\lower.58ex\hbox{$\cdot$}\kern.25em}
 {\wedge\kern-.38em\lower.5ex\hbox{$\cdot$}\kern.1em}
 {\wedge\kern-.3em\lower.5ex\hbox{$\cdot$}\kern.1em}}\limits}
\def\simge{\mathrel{%
   \rlap{\raise 0.511ex \hbox{$>$}}{\lower 0.511ex \hbox{$\sim$}}}}
\def\simle{\mathrel{
   \rlap{\raise 0.511ex \hbox{$<$}}{\lower 0.511ex \hbox{$\sim$}}}}
\def\namedlabel#1#2{\begingroup
   \def\@currentlabel{#2}%
   \label{#1}\endgroup
}
\setlist[enumerate]{leftmargin=5.5mm}
\title{\mbox{A General Method for Robust Learning from Batches}}
\begin{document}
\author{
Ayush Jain and Alon Orlitsky\\
  University of California, San Diego\\
  \texttt{\{ayjain,alon\}@eng.ucsd.edu}
}
\maketitle

\begin{abstract}
In many applications, data is collected in batches, some of which are
corrupt or even adversarial. Recent work derived optimal robust algorithms for estimating discrete distributions in this setting.
We consider a general framework of robust learning from batches, and 
determine the limits of both classification and distribution estimation over arbitrary, including continuous, domains.
Building on these results, we derive the first robust agnostic computationally-efficient learning algorithms for piecewise-interval classification, and for piecewise-polynomial, monotone, log-concave, and gaussian-mixture distribution estimation. 
\end{abstract}

\section{Introduction}

%Huber In recent years, the robust approach was extended from learning to more general distribution learning. 
%Gaussians
%For general distributions there isn't - example
%Batches 

\subsection{Motivation}
In many learning applications, some samples are inadvertently or maliciously corrupted. A simple and intuitive example shows that this erroneous data limits the extent to which a distribution can be learned, even with infinitely many samples. Consider $p$ that could be one of two possible binary distributions: $(1,0)$ and $(1-\advfrac,\advfrac)$. Given any number of samples from $p$, an adversary who observes a $1-\beta$ fraction of the samples and can determine the rest, could use the observed samples to learn $p$, and set the remaining samples to make the distribution always appear to be $(1-\advfrac,\advfrac)$. Even with arbitrarily many samples, any 
estimator for $\targetdis$ fails to decide which $p$ is in effect, hence incurs a \emph{total-variation (TV)} distance $\ge \advfrac/2$, that 
we call the \emph{adversarial lower bound}.

The example may seem to suggest the pessimistic conclusion that if an adversary can corrupt a $\advfrac$ fraction of the data, a TV-loss of $\ge\advfrac/2$ is inevitable. Fortunately, that is not necessarily so.

In the following applications, and many others, data is collected in batches, most of which are genuine, but some possibly corrupted.
Data may be gathered by sensors, each providing a large amount of data, and some sensors may be faulty. The word frequency of an author may be estimated from several large texts,  some of which are mis-attributed. Or user preferences may be learned by querying several users, but some users may intentionally bias their feedback.
Interestingly, for data arriving in batches, even when a $\advfrac$-fraction of which are corrupted, more can be said. 

Recently,~\cite{qiao2017learning} formalized the problem for discrete domains. They considered estimating a distribution $\targetdis$ over $[k]$
in TV-distance when the samples are provided in batches of size $\ge\bsize$. A total of $\btotal$ batches are provided, of which a fraction $\le\beta$ may be arbitrarily and adversarially corrupted, while in every other batch $b$ the samples are drawn according a distribution $\batchdis$ satisfying  $||\batchdis-\targetdis||_{TV}\le\eta$, allowing for the possibility that slightly different distributions generate samples in each batch.

For $\advfrac\!<\!1/900$, they derived an estimation
algorithm that approximates any $\targetdis$ over a discrete domain 
to TV-distance $\epsilon =\cO(\eta+\advfrac/\sqrt{\bsize})$,  surprisingly, much lower than the individual samples limit of 
$\Theta(\eta+\advfrac)$.
They also derived a matching lower bound, showing that even for binary distributions, for any number $\btotal$ of batches, and hence for general discrete distributions, the lowest achievable total variation distance is $\ge \eta+ \frac\advfrac{2\sqrt{2\bsize}}$. We refer to this result as the \emph{adversarial batch lower bound}.

Their estimator requires $\cO(\frac{\bsize+k}{n\cdot\epsilon^2})$ batches of samples, or equivalently $\cO(\frac{\bsize+k}{\epsilon^2})$ samples in total, which is not always optimal. It also runs in time exponential in the domain size, rendering it impractical. 

Recently,~\cite{chen2019efficiently} reduced the exponential time complexity. 
Allowing quasi-polymoially many samples, they derived an estimator that achieves TV distance $\epsilon =\cO(\eta+\advfrac\sqrt{(\ln1/\advfrac)/\bsize)}$ and runs in 
quasi-polynomial time.
When a sufficiently larger distance is permitted, their estimator 
has polynomial time and sample complexities.
Concurrently,~\cite{jain2019robust} derived a polynomial-time, hence computationally efficient, estimator, that 
achieves the same 
$\cO(\eta+\advfrac\sqrt{(\ln1/\advfrac)/\bsize)}$ 
TV distance, and for domain size $k$
uses the optimal $\cO(k/\epsilon^2)$ samples. 

When learning general distributions in TV-distance, the sample complexity's linear dependence on the domain size is inevitable even when all samples are genuine. Hence, learning general distributions over large discrete, let alone continuous domains, is infeasible. 
To circumvent this difficulty,~\cite{chen2019efficiently} considered robust batch learning of structured discrete distributions, and 
studied the class of $t$-piecewise degree-$d$ polynomials over  the discrete set $[k]=\sets{1\upto k}$. 

They first reduced the noise with respect to an $\cF_k$ distance described later, and used existing methods on this cleaned data to estimate the distribution. This allowed them to construct an estimator that approximates these distributions with number of batches $m$ that grows only poly-logarithmically in the domain size $k$. Yet this number still grows with $k$, and is quasi-polynomial in other parameters $t$, $d$, batch size $n$, and $1/\advfrac$. Additionally, its computational complexity is quasi-polynomial in these parameters and the domain size $k$. 
Part of our paper generalizes and improves this technique. 

%To do so, they applied  $\cF_{k'}$-distance robustly in the batch setting. But, their estimator works only for discrete the domains and need number of batches $m$ quasi-polynomial in $k'$ and batch size $n$ and poly-logarithmic in the domain size $k$ and has computational complexity quasi-polynomial in all parameters $k'$, domain size $k$ and batch size $\bsize$.  

The above results suffer several setbacks. 
While for general distributions there are sample-optimal polynomial-time algorithms, for structured distributions 
existing algorithms have suboptimal quasi-polynomial sample 
and time complexity.
Furthermore both their sample- and time-complexities grow
to infinity in the domain size, making them impractical 
for many complex applications, and essentially impossible 
for the many practical applications with continuous domains such as $\reals$ or $\reals^d$. 

This leaves several natural questions.
For sample efficiency, can distributions over non-discrete spaces, 
be estimated in to the adversarial batch lower bound using finitely many samples, and if so, what is their sample complexity? 
For computational efficiency, are there estimators whose computational complexity is independent of the domain size, and can their run time be polynomial rather than quasi-polynomial in the other parameters. 
More broadly, can similar robustness results be derived 
for other important learning scenarios, such as classification?
And most importantly, is there a more general theory of robust learning from batches?

\subsection{Summary of techniques and contributions}
To answer these questions, we first briefly foray into VC theory.
Consider estimation of an unknown \emph{target distribution} $\targetdis$ to a small $\cF$-distance, where $\cF$ is a family of subsets with finite VC-dimension. %$V_{\cF}$. 
Without adversarial batches, the empirical distribution of samples from $p$ estimates it to a small $\cF$-distance. When some of the batches are adversarial, the empirical distribution could be far from $\targetdis$.
We construct an algorithm that "cleans" the batches and returns a sub-collection of batches whose empirical distribution approximates $p$ to near optimal $\cF$-distance. 

While the algorithm is near sample optimal, as expected from the setting's 
broad generality, for some subset families, the it is necessarily not computationally efficient.
We then consider the natural and important family $\cF_k$ of all unions of at most $k$ intervals in $\reals$. We provide a computationally efficient 
algorithm that estimates distributions to near-optimal $\cF_k$ distance
and requires only a small factor more samples than the best possible. 

Building on these techniques, we return to estimation in total variation (TV) distance. 
We consider the family of distributions whose Yatracos Class~\cite{yatracos1985rates} has finite VC dimension. 
This family consists of both discrete and continuous distributions, and includes piecewise polynomials, Gaussians in one or more dimensions, and
arguably most practical distribution families.
We provide a nearly-tight upper bound on the TV-distance 
to which these distributions can be learned robustly from batches.
%\tcr{We show that the number of samples required to achieve this 
%distance is similar to a log factor to the sample complexity
%upper bounds given by this general method for non-adversarial settings.}

Here too, the algorithms' broad generality makes them computationally inefficient some distribution classes. 
For one-dimensional $t$-piecewise degree-$d$ polynomials, we derive a
polynomial-time algorithm whose sample complexity has optimal linear dependence on $td$ and moderate dependence on other parameters. 
This is the first efficient algorithm for robust learning
of general continuous distributions from batches.

The general formulation also allows us to extend robust distribution-estimation results to other learning tasks. 
We apply this framework to derive the first robust classification results,
where the goal is to minimize the excess risk in comparison to the best hypothesis, in the presence of adversarial batches. 
We obtain tight upper bounds on the excess risk and number of samples required to achieve it for general binary classification problems. 
We then apply the results to derive a computationally efficient algorithm for hypotheses consisting of $k$ one-dimensional intervals using only $\cO(k)$ samples. 

%\subsection{Organization}
The rest of the paper is organized as follows. 
Section~\ref{sec:results} describes the paper's main technical results and their applications to distribution estimation and classification. 
Section~\ref{sec:preliminaries} introduces basic notation and techniques.
Section~\ref{sec:Vc} recounts basic tools from VC theory used to derive the results.
Section~\ref{sec:algmajg} derives a framework for robust distribution estimation in $\cF$-distance from corrupt and adversarial sample batches, and obtains upper bounds on the estimation accuracy and sample complexity. 
Finally, section~\ref{sec:F_k}, develops computationally efficient algorithms for learning in $\cF_k$ distance.

\subsection{General related work}
The current results extend several long lines of work on estimating structured distributions, including~\cite{o2016nonparametric,diakonikolas2016learning,ashtiani2018some}.
%For non robust setting... this is the line of long work. First devory, then structured distribution in one dimension and high dimension.. survey in this.
The results also relate to classical robust-statistics work~\cite{tukey1960survey,huber1992robust}. There has also been significant recent work leading to practical distribution learning algorithms 
that are robust to adversarial contamination of the data.
For example,~\cite{diakonikolas2016robust,lai2016agnostic} presented algorithms for learning the mean and covariance matrix of high-dimensional sub-gaussian and other
distributions with bounded fourth moments 
in presence of the adversarial samples. Their estimation guarantees are typically in terms of $L_2$, and do not yield the $L_1$- distance results required for discrete distributions. 

The work was extended in~\cite{charikar2017learning} to the case when more than half of the samples are adversarial.
Their algorithm returns a small set of candidate distributions one of which is a good approximate of the underlying distribution. For more extensive survey on robust learning algorithms in the continuous setting,  see~\cite{steinhardt2017resilience,diakonikolas2019robust}.

Another motivation for this work derives 
from the practical federated-learning problem, where information arrives in batches~\cite{mcmahan2016communication,mcmahan2017mac}.

%\subsection{Problem formulation} 
%Estimating a distribution $p$ to an $\mathcal{F}$-distance $\epsilon$ means to find a distribution that is within $\epsilon$ $\cF$ distance from it. %that the estimated probability of any subset $S\in\cF$ is within $\epsilon$ from its actual probability. 
%\tcr{
%We first view this problem from an information theoretic perspective and show tight upper bounds on the $\cF$-distance achievable and the number of samples required to achieve this bounds, both bounds matches the lower bound to a small log factors. \\
%Then we focus on the following family of subsets of reals. Let $\cF _k$ be the collection of all unions of $k$ real intervals. Note that a subset of $\reals$ is in $\cF _k$ iff it is a union of at most $k$ disjoint non-empty intervals, and that $\cF _k\subseteq\cF _{k+1}$ for all $k$. \\
%For family $\cF=\cF_k$, we give computationally efficient algorithm to achieve the above information theoretic upper bound on the distance and requires only $(1/\epsilon)$ times the number of samples compared to the sample complexity upper bounds.}

\section{Results}
\label{sec:results}
%Let $\goodfrac=\bgoodsize/\btotal$, 
%and $\advfrac=\badvsize/\btotal=1-\goodfrac$ 
%be the fractions of good and adversarial batches, respectively. 

We consider learning from batches of samples, when a $\advfrac-$fraction of batches are adversarial.

More precisely, 
$B$ is a collection of $m$ batches, composed of two \emph{unknown} sub-collections. 
A \emph{good sub-collection} $\bgood\subseteq B$ of $\ge(1-\advfrac)m$ \emph{good batches}, 
where each batch $b$ consists of $n$ independent samples, all distributed according to the same distribution 
$\probestbatch$ satisfying $||\probestbatch- \targetdis||_{TV}\le \targetdistance $.
And an \emph{adversarial sub-collection} $\badv = \allbatches \setminus \bgood$
of the remaining $\le\advfrac m$ batches, each consisting of the same number 
$n$ of arbitrary $\Omega$ elements, that for simplicity we call \emph{samples} as well. 
Note that the adversarial samples may be chosen in any way,
including after observing the the good samples.

Section 2.1 of~\cite{jain2019robust} shows that for discrete domains, 
results for the special case $\targetdistance = 0$, where all batch distributions
$\probestbatch$ are the target distribution $\targetdis$, 
can be easily extended to the general $\targetdistance>0$ case.
The same can be shown for our more general result,
hence for simplicity we assume that $\targetdistance = 0$.

The next subsection describes our main technical results for learning in $\cF$ distance. The subsections thereafter derive applications of these results for learning distributions in total variation distance and for binary classification.  

\subsection{Estimating distributions in $\cF$ distance}
Let $\cF$ be a family of subsets of a domain $\Samplespace$.
The $\cF$-\emph{distance} between two distributions $p$ and $q$ over $\Samplespace$ is the largest difference between the probabilities
$p$ and $q$ assign to any subset in $\cF$,
\[
||\targetdis-q||_\cF \triangleq \sup_{S\in \cF}|\targetdis(S)-q(S)|.
\]
The $\cF$-distance clearly generalizes the total-variation 
and $L_1$ distances. 
For the collection $\Sigma$ of all subsets of $\Omega$, 
$
||p-q||_{\Sigmafield}
=
||p-q||_{\text{TV}}
=
\textstyle{\half} ||p-q||_1.
$

Our goal is to use samples generated by a target distribution $\targetdis$ to approximate it to a small $\cF$-distance. 
For general families $\cF$, this goal cannot be accomplished even with
just good batches. 
Let $\cF=\Sigma$ be the collection of all subsets of the real interval domain $\Omega=[0,1]$. 
For any total number $t$ of samples, with high probability, it is impossible to distinguish the uniform distribution over $[0,1]$ from a uniform discrete distribution over a random collection of $\gg t^2$ elements in $[0,1]$. Hence any
estimator must incur TV-distance 1 for some distribution. 

This difficulty is addressed by Vapnik-Chervonenkis (VC) Theory. 
The collection $\cF$ \emph{shatters} a subset $S\subseteq\Samplespace$
if every subset of $S$ is the intersection of $S$ with a subset in $\cF$.
The VC-dimension $V_\cF$ of $\cF$ is the size of the largest subset shattered by $\cF$.
\begin{comment}
Consider a family $\cF$ of subsets of a sample space, or domain, $\Samplespace$.
A set $S\subseteq\Samplespace$ is \emph{shattered} by $\cF$ if each of its
subsets is the intersection of $S$ with a subset in $\cF$.
The VC-dimension $V_\cF$ of $\cF$ is the size of the largest subset shattered by $\cF$.
\end{comment}

Let $X^t=X_1\upto X_t$, be independent samples from a distribution $p$.
The empirical probability of $S\subseteq\Omega$ is
\[
\bar\targetdis_{t}(S)
\ed
\frac{|\sets{i:X_i\in S}|}{t}.
\]

The fundamental \emph{Uniform deviation inequality} of VC theory~\cite{vapnik1971uniform,talagrand1994sharper}
states that if $\cF$ has finite VC-dimension $V_\cF$, then 
$\bar{\targetdis}_t$ estimates $\targetdis$ well in $\cF$ distance.
For all $\delta > 0$, with probability $1-\delta$,
\[
 ||\targetdis-\bar\targetdis_{t}||_\cF \le \cO\Paren{\sqrt{\frac{V_\cF+\log 1/\delta}{t}}}.
\]
It can also be shown that $\bar\targetdis_t$ achieves the lowest possible 
$\cF$-distance, that we call the \emph{information-theoretic limit}.
%\emph{statistical estimation limit}.

In the adversarial-batch scenario, a fraction $\advfrac$ of the batches may be corrupted. It is easy to see that for any number $m$ of batches, however large, the adversary can cause $\bar\targetdis_{t}$ to approximate $\targetdis$ to $\cF$-distance $\ge\advfrac$, namely 
$||\empprob{t}-\targetdis||_\cF\ge\advfrac$.

Let $\empprob{\Bsc}$ be the empirical distribution induced by the
samples in a collection $\Bsc\subseteq\allbatches$. 
Our first result states that if $\cF$ has a finite VC-dimension, 
for $\btotal=\tilde\cO(V_\cF/\advfrac^2)$ batches, 
$B$ can be "cleaned" to a sub-collection $B'$ where
$||\VUempprob-\targetdis||_\cF=\tilde\cO(\advfrac/\sqrt n)$,
recovering $\targetdis$ with a simple empirical estimator.

\begin{theorem}\label{th:main1}
For any $\cF$, $\bsize$, $\advfrac\le 0.4$, $\delta>0$, and 
$\btotal\ge \cO\Paren{\frac{V_\cF \log(\bsize/\advfrac) +\log 1/\delta}{\advfrac^2 }}$,
there is an algorithm that with probability $\ge\! 1\!-\!\delta$ returns a sub-collection $\Bsc\!\subseteq\!B$ such that $|\Bsc\!\cap\!\bgood| \ge (1 -\frac\advfrac6) \bgoodsize$ and 
\[
||\VUempprob-\targetdis||_\cF \le \cO\Paren{\advfrac \sqrt{\frac{\log (1/\advfrac)}{\bsize}}}.
\] 
\end{theorem}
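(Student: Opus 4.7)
The strategy combines VC uniform-convergence tools with an iterative filtering algorithm that discards batches whose empirical behavior on some $S\in\cF$ is too extreme to plausibly come from $\targetdis$. The key intuition is that to shift the collection's empirical probability $\bar p_{B}(S)$ by $\Delta$, the adversary must make the mean batch-empirical $\bar\mu_b(S)$ over its $\advfrac\btotal$ adversarial batches roughly $\targetdis(S)+\Delta/\advfrac$. If $\Delta/\advfrac$ exceeds the $\advfrac$-tail of the natural fluctuations of $\bar\mu_b(S)$ for a good batch, which by Hoeffding is $t_\advfrac:=\Theta(\sqrt{\log(1/\advfrac)/\bsize})$, the adversarial batches stand out as outliers and can be filtered. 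Hence, after filtering, the adversary is restricted to $\Delta=\cO(\advfrac\sqrt{\log(1/\advfrac)/\bsize})$, which is precisely the target bound.

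First, I would use Sauer--Shelah to replace $\cF$ by the finite effective family of its traces on the $\bsize\btotal$ observed samples, of size at most $(\bsize\btotal)^{V_\cF}$. For each fixed $S$ in this family and each good batch $b$, Hoeffding gives $\Pr[|\bar\mu_b(S)-\targetdis(S)|>t]\le 2e^{-2\bsize t^2}$. A second-level Chernoff across the $\btotal$ batches followed by a union bound over the effective family then yields, with probability $1-\delta$, that for every $S\in\cF$ and every $t\ge t_\advfrac$ the fraction of good batches with $|\bar\mu_b(S)-\targetdis(S)|>t$ is tightly concentrated around $\cO(e^{-\bsize t^2})$. This is precisely where the sample-complexity condition $\btotal=\tilde\Omega(V_\cF/\advfrac^2)$ is needed. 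As an immediate consequence one obtains a \emph{resilience} statement: any sub-collection $G'\subseteq\bgood$ with $|G'|\ge(1-\advfrac/3)|\bgood|$ satisfies $\|\bar p_{G'}-\targetdis\|_\cF=\cO(\advfrac\sqrt{\log(1/\advfrac)/\bsize})$, because discarding any $\advfrac/3$ fraction of good batches can shift each $\bar p(S)$ by at most the tail-mass integral of the good-batch deviations, which is $\Theta(\advfrac\cdot t_\advfrac)$.

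The cleaning algorithm then maintains a sub-collection $B'\subseteq B$ and iteratively searches for a witness $(S,\tau)$ with $\tau\ge t_\advfrac$ for which the number of batches in $B'$ with $|\bar\mu_b(S)-\mathrm{med}_{b'\in B'}\bar\mu_{b'}(S)|>\tau$ exceeds the Hoeffding-permitted count for good batches by a constant factor; offending batches are removed. Since $\targetdis(S)$ is unknown, the algorithm leans on the empirical median, which stays within $\cO(t_\advfrac)$ of $\targetdis(S)$ as long as the adversarial fraction remains below $1/2$. The concentration bound of the previous step guarantees that at most an $\cO(\advfrac)$ fraction of any removed set is good, so accounting across the at-most $\cO(\log \btotal)$ rounds (each strictly reducing $|B'\cap\badv|$) the cumulative good-batch loss is at most $\advfrac/6$ of $|\bgood|$. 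At termination, the tails of $\{\bar\mu_b(S)\}_{b\in B'}$ on every $S\in\cF$ behave no worse than those of a purely good collection, which together with resilience of $B'\cap\bgood$ and the simple decomposition of $\bar p_{B'}(S)$ into a $\bgood$-part and a bounded $\badv$-contribution yields $\|\bar p_{B'}-\targetdis\|_\cF=\cO(\advfrac\sqrt{\log(1/\advfrac)/\bsize})$. The hardest step is designing the witness criterion so that it is simultaneously (a) checkable without knowledge of $\targetdis$, (b) triggered whenever the empirical is shifted by more than the target bound, and (c) accumulates only an $\advfrac/6$ good-batch loss across both tails of all $S\in\cF$ and across all iterations; keeping the tight $\sqrt{\log(1/\advfrac)}$ factor rather than a naive $\sqrt{\log |\cF'|}$ in the final bound is the delicate quantitative point.
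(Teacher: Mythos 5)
Your overall plan---median-based outlier filtering over a finitized version of $\cF$, plus a resilience statement for large sub-collections of good batches---is the same high-level route the paper takes, but two steps in your write-up have genuine gaps. First, the finitization. You propose replacing $\cF$ by its Sauer--Shelah trace on the observed samples and union bounding over that family. The statements you need to union bound are of the form ``the fraction of good batches with $|\bar\mu_b(S)-\targetdis(S)|>t$ is $\cO(e^{-\bsize t^2})$ for all $t\ge t_\advfrac$,'' and these involve the \emph{true} probability $\targetdis(S)$, which is not determined by the trace of $S$ on the sample; moreover the trace family is data-dependent (it even depends on the adversarial samples), so a naive union bound over it is not valid without a symmetrization or covering argument. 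The paper resolves exactly this by proving its concentration properties on a finite $\epsilon$-self-cover of $\cF$ taken with respect to the (unknown but fixed) distribution $\targetdis$ (Corollary~\ref{cor:intcovbou}, Lemma~\ref{lem:prophold}), extending them to all of $\cF$ via the relative-deviation VC inequality (Lemma~\ref{lem:impofrd}), and then separately handling the algorithmic, data-dependent cover with respect to $\empprob{\allbatches}$ through Theorem~\ref{th:relate}, which costs an extra $5\advfrac/\sqrt{\bsize}$ and forces $\epsilon\le\advfrac/\sqrt{\bsize}$. Some such two-cover (or symmetrization) device is needed; as written your reduction does not go through.

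Second, the filtering itself. Your witness criterion (``the count of batches deviating from the median by more than $\tau$ exceeds the Hoeffding-permitted count by a constant factor, remove the offenders'') leaves unproven precisely the two things that make the theorem work: (a) that the criterion fires whenever the adversary has shifted $\empprob{\Bsc}(S)$ by more than the target $\cO(\advfrac\sqrt{\log(1/\advfrac)/\bsize})$ --- a few batches with huge deviations can shift the mean while keeping every exceedance \emph{count} within a constant factor of the good-batch tail, which is why the paper scores batches by squared deviation beyond a threshold (the corruption score) rather than by counts; and (b) that the cumulative good-batch loss over all removals and all $S$ stays below $\frac{\advfrac}{6}\bgoodsize$ --- with a constant-factor trigger, a constant fraction of each removed set can be good, so the total good loss can be of order $\advfrac\btotal$ rather than $\advfrac\bgoodsize/6$ unless the removal rule is more careful. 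The paper's mechanism is exactly the missing piece: good batches have total corruption at most $\kappa_G$ (Property~\ref{con3}), deletion is randomized with probability proportional to corruption score, and the deletion loop only runs while the corruption exceeds $20\kappa_G$ (triggered at $25\kappa_G$), so each deleted batch is adversarial with probability $\ge 0.95$ and a concentration argument bounds the good-batch loss; the resulting small corruption then yields the $\cC$-distance bound via Lemma~\ref{lem:corruptionandell1}. Your proposal names this accounting as ``the hardest step'' but does not supply it, and your $\cO(\log\btotal)$ bound on the number of rounds is both unsupported and not the relevant quantity. So the approach is the right one in spirit, but the two load-bearing arguments (uniformity over $\cF$ and the removal accounting) are missing.
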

The $\cF$-distance bound matches the adversarial limit up to a small $O(\sqrt{\log (1/\advfrac)})$ factor. The bound on the number $\btotal$ of batches required to achieve this bound is also tight up to a logarithmic factor. 

The theorem applies to all families with finite VC dimension, and 
like most other results of this generality, it is necessarily non-constructive in nature.
Yet it provides a road map for constructing efficient algorithms for many
specific natural problems. 
In Section~\ref{sec:F_k} we use this approach to derive a polynomial-time
algorithm that learns distributions with respect to 
one of the most important and practical VC classes,
where $\Omega=\reals$, and $\cF=\cF_k$ is the collection of all
unions of at most $k$ intervals. 

\begin{theorem}\label{th:main2}
For any $\bsize$, $\advfrac\le 0.4$, $\delta>0$, $k>0$, and
$\btotal\ge \cO\Paren{\frac{k \log(\bsize/\advfrac) +\log 1/\delta}{\advfrac^3 }\cdot\sqrt{\bsize}}$,
there is an algorithm that runs in time polynomial in all parameters,
%$k,\bsize,1/\advfrac, 1/\delta$ and $m$, 
and with probability $\ge 1- \delta$ returns a sub-collection $\Bsc\subseteq\allbatches$,
such that $|\Bsc\cap\bgood| \ge (1 -\frac\advfrac6) \bgoodsize$ and 
\[
||\VUempprob-\targetdis||_{\cF_k} \le \cO\Paren{\advfrac \sqrt{\frac{\log (1/\advfrac)}{\bsize}}}.
\] 
\end{theorem}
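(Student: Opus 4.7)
The plan is to run the same iterative cleaning scheme that underlies Theorem~\ref{th:main1}, specialized to $\cF=\cF_k$, while replacing its (possibly brute-force) search over $\cF$ by a polynomial-time procedure that exploits the interval structure. The algorithm maintains a working sub-collection $\Bsc\subseteq\allbatches$, searches for a witness $S\in\cF_k$ on which the per-batch empirical probabilities $\bar\targetdis_b(S)$ exhibit more cross-batch spread than is consistent with a shared underlying distribution, and, if such a witness is found, discards or down-weights batches whose $\bar\targetdis_b(S)$ is an outlier. A standard potential-function argument then shows that each round removes strictly more adversarial than good batches, so the procedure halts after polynomially many rounds with $|\Bsc\cap\bgood|\ge(1-\advfrac/6)\bgoodsize$.

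The key technical step is an efficient witness-finder over $\cF_k$. The per-round objective depends on each batch $b$ only through $|\{i:X_{b,i}\in S\}|$, so for fixed $\Bsc$ it is a symmetric function of the $\bsize\btotal$ sample coordinates partitioned by whether they lie in $S$. Since every $S\in\cF_k$ is determined by at most $2k$ cut points and the objective only sees coordinate-wise indicators, one may restrict to cut points placed between consecutive sorted samples. A dynamic program over the sorted merged samples, tracking position, number of intervals used so far, and current inside/outside state, then maximizes the objective in time polynomial in $k$, $\bsize$, and $\btotal$; the same DP also supports the median-based outlier test on any candidate $S$.

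The main obstacle is controlling concentration under this restricted witness-finder. Whereas the general algorithm of Theorem~\ref{th:main1} can test against the exact per-batch variance of $\bar\targetdis_b(S)$ uniformly over $S\in\cF$, an efficient DP can only optimize a coordinate-separable surrogate for that variance. Showing that the surrogate is a valid upper bound uniformly over $\cF_k$ with high probability requires a VC-based Bernstein inequality (using $V_{\cF_k}=\cO(k)$), and the corresponding deviation rate is loose enough to dictate the extra $\sqrt{\bsize}/\advfrac$ factor in $\btotal$ appearing in the theorem statement. Once this uniform concentration is in place, the stopping condition together with the standard VC uniform-deviation inequality applied to the good batches of $\Bsc$ yields $||\VUempprob-\targetdis||_{\cF_k}=\cO(\advfrac\sqrt{\log(1/\advfrac)/\bsize})$, completing the proof.
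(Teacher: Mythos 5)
There is a genuine gap at the heart of your plan: the polynomial-time witness-finder. The per-round statistic your filter must maximize over $S\in\cF_k$ is the corruption score $\corruption_{\Bsc}(S)=\sum_{b\in\Bsc}\corruption_b(S)$, where each $\corruption_b(S)$ is a \emph{thresholded quadratic} in $\bar{\mu}_b(S)-\med$, and the median itself depends on $S$. This is a nonlinear function of the whole vector of per-batch counts $\{n_b(S)\}_{b\in\Bsc}$, not a sum over sample coordinates; a dynamic program whose state is only (position in the sorted merged sample, number of intervals used, inside/outside) cannot even evaluate the objective, since it would have to carry all $\btotal$ per-batch counts in its state. You acknowledge this by proposing to optimize an unspecified ``coordinate-separable surrogate'' and to justify it by a VC-Bernstein bound, but defining such a surrogate and proving that (i) filtering on it removes mostly adversarial batches and (ii) its small value upon termination certifies a small $\cF_k$-distance is precisely the computational crux of the theorem; as written, the proposal punts on it. Similarly, the claim that the surrogate's looseness ``dictates'' the extra $\sqrt{\bsize}/\advfrac$ factor in $\btotal$ is asserted, not derived: with $V_{\cF_k}=\cO(k)$ the natural batch complexity would be $\tilde{\cO}(k/\advfrac^2)$, and nothing in the argument pins down why the penalty is exactly $\sqrt{\bsize}/\advfrac$.

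The paper takes a different, and in this respect more concrete, route: it never searches $\cF_k$ directly. It sorts all $s=\bsize\btotal$ samples and forms the quantile partition $I^*$ of $\reals$ into $\ell=2k\sqrt{\bsize}/\advfrac$ intervals with $\Delta=s/\ell$ points each, shows that the family $\cS(I^*)$ of unions of these bins is a $\frac{2k}{\ell}=\frac{\advfrac}{\sqrt{\bsize}}$-cover of $\cF_k$ w.r.t.\ $\empprob{\allbatches}$, maps the data through $I^{*-1}$ to the discrete domain $[\ell]$ (where $\cS(I^*)$-distance becomes TV distance), and runs the already-known polynomial-time discrete filtering algorithm of~\cite{jain2019robust} (Theorem~\ref{th:ahsss}); Theorem~\ref{th:relate} then transfers the $\cS(I^*)$ guarantee back to all of $\cF_k$ at an additive cost $5\advfrac/\sqrt{\bsize}$. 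In that argument the extra $\sqrt{\bsize}/\advfrac$ factor has a transparent origin: the concentration properties of Lemma~\ref{lem:prophold} must hold for $\cF'=\cF_\ell$, whose VC dimension is $\cO(\ell)=\cO(k\sqrt{\bsize}/\advfrac)$, rather than for $\cF_k$ itself. If you want to salvage your direct-DP approach, you would need to exhibit an efficiently optimizable objective over $\cF_k$ together with the two-sided guarantees above — or, more simply, adopt the discretize-and-reduce step, which makes the witness search trivial (brute force is no longer over $\cF_k$ but handled inside the discrete algorithm) and yields the stated sample complexity.
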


The sample complexity in both the theorems are independent of the domain and depends linearly on the VC dimension of the family $\cF$.

\subsection{Approximating distributions in total-variation distance}

Our ultimate objective is to estimate 
the target distribution in total variation (TV) distance, one of the most common measures in distribution estimation. In this and the next subsection, we follow a framework developed in~\cite{devroye2001combinatorial}, see also ~\cite{diakonikolas2016learning}. 

The sample complexity of estimating   distributions in TV-distance grows with the domain size, becoming infeasible for large discrete domains and impossible for continuous domains. A natural approach to address this intractability is to assume that the underlying distribution belongs to, or is near, a structured class $\cP$ of distributions. 

Let $\text{opt}_{\cP}(p)\triangleq \inf_{q\in\cP}||p-q||_{TV}$ be the TV-distance of $p$ from the closest distribution in $\cP$. For example, for $p\in \cP$, $\text{opt}_{\cP}(p)=0$.
Given $\epsilon,\delta> 0$, we try to use samples from $p$ to find an estimate $\hat p$ such that, with probability $\ge 1-\delta$,
\[
||p-\hat p||_{TV} \le \alpha\cdot\text{opt}_{\cP}(p)+\epsilon
\]
for a universal constant $\alpha\!\ge\!1$, namely, to approximate $p$ about as well as the closest distribution in $\cP$.

Following~\cite{devroye2001combinatorial}, we utilize a connection between distribution estimation and VC dimension. 
Let $\cP$ be a class of distributions over $\Samplespace$. 
The \emph{Yatracos class}~\cite{yatracos1985rates} of $\cP$ is the family of $\Samplespace$  subsets 
\[
\cY(\cP)\triangleq \sets{\sets{\omega\in\Samplespace:p(\omega)\ge q(\omega)}:\,{p,q\in\cP}}.
\]
It is easy to verify that for distributions $p,q\in \cP$,
\[
||p-q||_{TV} = ||p-q||_{\cY(\cP)}.    
\]

The \emph{Yatracos minimizer} of a  distribution $p$ is its closest 
distribution, by $\cY(\cP)$-distance, in $\cP$,
\[
\psi_{\cP}(p) = \arg\min_{q\in \cP}||q-p||_{\cY(\cP)},
\]
where ties are broken arbitrarily. 
Using this definition and equations, and a sequence of triangle inequalities, Theorem 6.3 in~\cite{devroye2001combinatorial} shows that, for any distributions $p$, $p'$, and any class $\cP$,
\begin{align}\label{eq:samze}
    ||p-\psi_\cP(p')||_{TV} \le 3 \cdot\text{opt}_{\cP}(p)+ 4 ||p-p'||_{\cY(\cP)}.
\end{align}
Therefore, given a distribution that approximates $p$ in $\cY(\cP)$-distance, it is possible to find a distribution in $\cP$  approximating $p$ in TV-distance. In particular, when $p\in\cP$, the $\text{opt}$ term is zero. 

If the Yatracos class $\cY(\cP)$ has finite 
VC dimension, the VC Uniform deviation inequality ensures that for the empirical distribution $p'$ of i.i.d. samples from $p$, $||p'-p||_{\cY(\cP)}$ decreases to zero, and can be used to approximate $p$ in TV-distance.
This general method has lead to many sample- and computationally-efficient algorithms for estimating structured distributions in TV-distance. 

However, as discussed earlier, with a $\advfrac$-fraction of adversarial batches, the empirical distribution of all samples can be at a 
${\cY(\cP)}$-distance as large as  $\Theta(\advfrac)$ from $p$, leading to a large TV-distance.

Yet Theorem~\ref{th:main1} shows that data can be "cleaned" to remove outlier batches and retain batches whose empirical distribution approximates $p$ to a much smaller ${\cY(\cP)}$-distance of $\cO(\advfrac\sqrt{(\log 1/\advfrac)\bsize})$.
Combined with Equation~\eqref{eq:samze}, we
obtain a much better approximation of $p$ in total variation distance.

\begin{theorem}\label{th:vc}
For a distribution class $\cP$ with Yatracos Class of finite VC dimension $v$, 
for any $\bsize$, $\advfrac\le 0.4$, $\delta>0$, and 
$\btotal\ge \cO\Paren{\frac{v \log(\bsize/\advfrac) +\log 1/\delta}{\advfrac^2 }}$,
there is an algorithm that with probability $\ge\! 1\!-\!\delta$ returns a distribution $p'\in\cP$ such that
\[
||\targetdis-p'||_{TV} \le 3 \cdot\text{opt}_{\cP}(p) + \cO\Paren{\advfrac \sqrt{\frac{\log (1/\advfrac)}{\bsize}}}.
\] 
\end{theorem}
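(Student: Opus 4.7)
The plan is to instantiate Theorem~\ref{th:main1} with $\cF = \cY(\cP)$ and then compose the resulting $\cY(\cP)$-distance guarantee with the classical Yatracos inequality~\eqref{eq:samze}. Since the hypothesis states that $\cY(\cP)$ has VC dimension $v$, Theorem~\ref{th:main1} applies verbatim with $V_\cF = v$: given the assumed number of batches $m$, it produces (with probability at least $1-\delta$) a sub-collection $\Bsc \subseteq \allbatches$ with $|\Bsc \cap \bgood| \ge (1 - \beta/6) \bgoodsize$ and
\[
||\VUempprob - \targetdis||_{\cY(\cP)} \le \cO\Paren{\advfrac \sqrt{\frac{\log(1/\advfrac)}{\bsize}}}.
\]

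Next, I would define the output of the algorithm to be the Yatracos minimizer of the cleaned empirical distribution, that is, $p' \triangleq \psi_{\cP}(\VUempprob) = \arg\min_{q\in\cP}\|q - \VUempprob\|_{\cY(\cP)}$. By construction $p' \in \cP$, as required. Substituting $p' \leftarrow \VUempprob$ into~\eqref{eq:samze} (Theorem~6.3 of~\cite{devroye2001combinatorial}) yields
\[
||\targetdis - \psi_{\cP}(\VUempprob)||_{TV} \;\le\; 3\cdot\text{opt}_{\cP}(\targetdis) + 4\,||\targetdis - \VUempprob||_{\cY(\cP)}.
\]
Combining the two displays gives the claimed bound, absorbing the constant $4$ into the $\cO(\cdot)$ on the second term.

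The argument is essentially a two-line reduction once Theorem~\ref{th:main1} is in hand, so there is no serious obstacle on the analytic side. The only conceptual point worth flagging is that, as with Theorem~\ref{th:main1}, this proof is existential/non-constructive in nature: both the ``cleaning'' step and the computation of $\psi_{\cP}$ may be intractable for a generic class $\cP$, and computational efficiency is recovered only in the structured instantiations treated later in the paper (e.g.\ via Theorem~\ref{th:main2} for $\cF_k$, which in turn powers the piecewise-polynomial application). It is also worth verifying that the sample-complexity expression inherited from Theorem~\ref{th:main1} carries over unchanged, since $V_{\cY(\cP)} = v$ is the only quantity the bound depends on beyond $n$, $\beta$, and $\delta$.
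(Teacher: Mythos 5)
Your proposal is correct and follows the same route as the paper: instantiate Theorem~\ref{th:main1} with $\cF=\cY(\cP)$, output the Yatracos minimizer $\psi_\cP(\VUempprob)$ of the cleaned empirical distribution, and conclude via Equation~\eqref{eq:samze}, absorbing the factor $4$ into the $\cO(\cdot)$ term. Your closing remarks on non-constructiveness and on the sample complexity depending only on $v$, $n$, $\beta$, $\delta$ match the paper's discussion as well.
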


The estimation error achieved in the theorem for TV-distance matches the lower to a small logarithmic factor of $O(\sqrt{\log (1/\advfrac)})$, and is valid for any class $\cP$ with finite VC Dimensional Yatracos Class.

%examples of finite Yatracos's class.... 
Moreover, the upper bound on the number of samples (or batches) required by the algorithm to estimate $p$ to the above distance matches a similar general upper bound obtained for non adversarial setting to a log factor. This results for the first time shows that it is possible to learn a wide variety of distributions robustly using batches, even over continuous domains.

The theorem describes the rate at which $p$ can be learned in TV-distance.
This rate mathces the similar upper bound for non-adversarial seeting to a small logarithmic factor of $O(\sqrt{\log (1/\advfrac)})$, and is valid for any class $\cP$ with finite VC Dimensional Yatracos Class.
%examples of finite Yatracos's class.... 
Moreover, the upper bound on the number of samples (or batches) required by the algorithm to estimate $p$ to the above distance matches a similar general upper bound obtained for non adversarial setting to a log factor. This results for the first time shows that it is possible to learn a wide variety of distributions robustly using batches, even over continuous domains.
%\tcr{Review}

\subsection{Learning univariate structured distributions}
We apply the general results in the last two subsections to 
estimate distributions over the real line. 
We start with one of the most studied, and important, distribution families, the class of piecewise-polynomial distributions, and then observe that it can be generalized to even broader classes.

A distribution $p$ over $[a,b]$ is  $t$-piecewise, degree-$d$, if there is a partition of $[a,b]$ into $t$ intervals $I_1\upto I_t$, and degree-$d$ polynomials $r_1\upto r_t$ such that $\forall j$ and $x\in I_j$, $p(x) = r_j(x)$. The definition extends naturally to discrete distributions over $[k]=\sets{1\upto k}$.

Let $\cP_{t,d}$ denote the collection of all $t$-piece-wise degree $d$ distributions. 
$\cP_{t,d}$ is interesting in its own right, as it contains important distribution classes such as histograms. In addition, it approximates other important distribution classes, such as monotone,  log-concave, Gaussians, and their mixures, arbitrarily well, \eg~\cite{acharya2017sample}.
  
Note that for any two distributions $p,q\in \cP_{t,d}$,
the difference $p-q$ is a $2t$-piecewise degree-$d$ polynomial, hence every set in the Yatracos class of $\cP_{t,d}$,
\[
\sets{x\in\reals:p(x)\ge q(x)} = \sets{x\in\reals:p(x)-q(x)\ge 0}
\]
is the union of at most $2t\cdot d$ intervals in $\reals$. 
Therefore, $\cY(\cP_{t,d})\subseteq\cF_{2t\cdot d}$. And since $V_{\cF_k} = O(k)$ for any $k$, 
$\cY(\cP_{t,d})$ has VC dimension $\cO(td)$.

Theorem~\ref{th:vc} can then be applied to show that any target distribution $\targetdis$ can be estimated by a distribution in $\cP_{t,d}$ to a TV-distance that is within a small $\sqrt{\log(1/\advfrac)}$ factor from adversarial lower bound, using a number of samples, and hence batches, 
that is within a logarithmic factor from the information-theoretic lower bound~\cite{chan2014efficient}.

\begin{corollary}
Let $p$ be distribution over $\reals$.
For any $\bsize$, $\advfrac\le 0.4$, $t$, $d$, $\delta>0$, and 
$\btotal\ge \cO\Paren{\frac{td\log(\bsize/\advfrac) +\log 1/\delta}{\advfrac^2 }}$,
there is an algorithm that with probability $\ge\! 1\!-\!\delta$ returns a distribution $p'\in\cP_{t,d}$ such that
\[
||\targetdis-p'||_{TV} \le 3 \cdot\text{opt}_{\cP_{t,d}}(p) + \cO\Paren{\advfrac \sqrt{\frac{\log (1/\advfrac)}{\bsize}}}.
\] 
\end{corollary}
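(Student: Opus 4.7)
The plan is to observe that this corollary follows almost immediately from Theorem~\ref{th:vc} by bounding the VC dimension of the Yatracos class of $\cP_{t,d}$, and then invoking that theorem with the appropriate parameter.

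First, I would recall the computation already sketched in the text immediately preceding the corollary: for any $p,q \in \cP_{t,d}$, the difference $p-q$ is a $2t$-piecewise polynomial of degree $d$, so the sign set $\{x: p(x)\ge q(x)\}$ is a union of at most $2td$ intervals of $\reals$. This gives the containment $\cY(\cP_{t,d}) \subseteq \cF_{2td}$. Since $V_{\cF_k} = O(k)$ (a standard VC-dimension computation for unions of $k$ intervals on the line), monotonicity of VC dimension under set containment yields $V_{\cY(\cP_{t,d})} = O(td)$.

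Next, I would simply invoke Theorem~\ref{th:vc} with $\cP = \cP_{t,d}$ and $v = O(td)$. The hypothesis on $\btotal$ in the corollary matches the hypothesis of Theorem~\ref{th:vc} after substituting this VC-dimension bound, and the conclusion of Theorem~\ref{th:vc} is exactly the displayed TV-bound in the corollary, with $\mathrm{opt}_{\cP}(p)$ replaced by $\mathrm{opt}_{\cP_{t,d}}(p)$. The algorithm is the one guaranteed by Theorem~\ref{th:vc}: it runs the data-cleaning routine of Theorem~\ref{th:main1} to obtain a sub-collection $\Bsc$ whose empirical distribution is $O(\advfrac\sqrt{\log(1/\advfrac)/\bsize})$-close to $p$ in $\cY(\cP_{t,d})$-distance, then outputs the Yatracos minimizer $\psi_{\cP_{t,d}}(\empprob{\Bsc}) \in \cP_{t,d}$, and the guarantee follows from inequality~\eqref{eq:samze}.

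There is no real obstacle; the content is entirely in Theorem~\ref{th:vc} and the embedding $\cY(\cP_{t,d}) \subseteq \cF_{2td}$. The only minor point to check is that the algorithm is well-defined, i.e.\ that $\psi_{\cP_{t,d}}$ can be taken to output some element of $\cP_{t,d}$; since the infimum defining the Yatracos minimizer might not be attained in general, I would either break ties arbitrarily (as the definition allows) or replace $\arg\min$ by a near-minimizer, which changes the constants but not the stated bound. Everything else is a direct citation of the general theorem.
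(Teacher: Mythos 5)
Your proposal matches the paper's own argument: the paper likewise observes that $p-q$ is a $2t$-piecewise degree-$d$ polynomial, concludes $\cY(\cP_{t,d})\subseteq\cF_{2td}$ so the Yatracos class has VC dimension $\cO(td)$, and then applies Theorem~\ref{th:vc} directly. The additional remark about tie-breaking in the Yatracos minimizer is a harmless refinement already accommodated by the paper's definition, so the proof is correct and essentially identical to the paper's.
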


Next we provide a polynomial-time algorithm for estimating $p$ to  the same $\cO(\advfrac\sqrt{(\log 1/\advfrac)/\bsize})$
TV-distance, but with an extra $\cO(\sqrt{n}/\advfrac)$ factor in sample complexity.

Theorem~\ref{th:main2} provides a polynomial time algorithm that returns a sub-collection $\Bsc\subseteq\allbatches$ of batches whose empirical distribution $\VUempprob$ is close to $\targetdis$ in $\cF_{2td}$-distance.  \cite{acharya2017sample} provides a polynomial time algorithm that for any distribution $q$ returns a distribution in $\hat p \in \cP_{t,d}$ minimizing $||\hat p - q||_{\cF_{2td}}$ to an additive error. Then 
%choosing $q=\VUempprob$,
Equation~\eqref{eq:samze} and Theorem~\ref{th:main2} yield the following result.  

\begin{theorem} Let $p$ be any distribution over $\reals$. 
For any $\bsize$, $\advfrac\le 0.4$, $t$, $d$, $\delta>0$, and 
$\btotal\ge \cO\Paren{\frac{td\log(\bsize/\advfrac) +\log 1/\delta}{\advfrac^3 }\cdot\sqrt{\bsize}}$,
there is a polynomial time algorithm that with probability $\ge\! 1\!-\!\delta$ returns a distribution $p'\in\cP_{t,d}$ such that
\[
||\targetdis-p'||_{TV} \le \cO( \text{opt}_{\cP_{t,s}}(p))
%?
+ \cO\Paren{\advfrac \sqrt{\frac{\log (1/\advfrac)}{\bsize}}}.
\] 
\end{theorem}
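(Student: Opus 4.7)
The plan is to compose the polynomial-time cleaning algorithm of Theorem~\ref{th:main2} with the polynomial-time approximate Yatracos-type minimizer of~\cite{acharya2017sample}, and glue them together via inequality~\eqref{eq:samze}. The key structural observation, already noted just before the corollary, is that any difference of two distributions in $\cP_{t,d}$ is a $2t$-piecewise degree-$d$ polynomial and therefore has at most $2td$ sign changes on $\reals$. Hence every set in $\cY(\cP_{t,d})$ is a union of at most $2td$ intervals, so $\cY(\cP_{t,d}) \subseteq \cF_{2td}$ and any upper bound in $\cF_{2td}$-distance transfers directly to $\cY(\cP_{t,d})$-distance.

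First I would instantiate Theorem~\ref{th:main2} with $k = 2td$. Under the stated lower bound on $m$, which is exactly the bound of Theorem~\ref{th:main2} with $k = 2td$, the algorithm returns in polynomial time a sub-collection $\Bsc \subseteq \allbatches$ whose empirical distribution $\VUempprob$ satisfies
\[
\|\VUempprob - \targetdis\|_{\cF_{2td}} \le \cO\Paren{\advfrac \sqrt{\frac{\log(1/\advfrac)}{\bsize}}}
\]
with probability at least $1-\delta$. By the inclusion $\cY(\cP_{t,d}) \subseteq \cF_{2td}$, the same bound holds in $\cY(\cP_{t,d})$-distance.

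Second, I would feed $\VUempprob$ to the polynomial-time algorithm of~\cite{acharya2017sample}, which given any query distribution $q$ returns some $p' \in \cP_{t,d}$ whose $\cF_{2td}$-distance to $q$ is within a constant factor, plus a small user-specified additive slack, of the infimum $\inf_{r \in \cP_{t,d}} \|r - q\|_{\cF_{2td}}$. I would run it with slack of order $\advfrac\sqrt{\log(1/\advfrac)/\bsize}$, which keeps the total runtime polynomial. The triangle-inequality chain that underlies~\eqref{eq:samze} (Theorem~6.3 of~\cite{devroye2001combinatorial}) then goes through with this approximate minimizer in place of the exact Yatracos minimizer, yielding
\[
\|\targetdis - p'\|_{TV} \le \cO(\text{opt}_{\cP_{t,d}}(\targetdis)) + \cO\Paren{\|\targetdis - \VUempprob\|_{\cY(\cP_{t,d})}}.
\]
Plugging in the bound from the first step gives the claimed estimate.

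The main obstacle is the bookkeeping around step two: one must verify that (i)~the approximation procedure of~\cite{acharya2017sample} really delivers a guarantee in the stronger $\cF_{2td}$-distance (rather than some weaker proxy) and can be tuned to additive error $\cO(\advfrac\sqrt{\log(1/\advfrac)/\bsize})$ in polynomial time, and (ii)~the derivation of~\eqref{eq:samze} extends cleanly from the exact to an approximate minimizer, at the cost of absorbing the approximation factor into the $\cO(\text{opt})$ term; this is why the statement has $\cO(\text{opt}_{\cP_{t,d}}(\targetdis))$ rather than the tighter constant $3\,\text{opt}_{\cP_{t,d}}(\targetdis)$ that appears in Theorem~\ref{th:vc}. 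Everything else is a direct composition of the two polynomial-time subroutines and a triangle inequality.
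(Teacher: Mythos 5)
Your proposal is correct and matches the paper's own argument: the paper likewise instantiates Theorem~\ref{th:main2} with $k=2td$ (using $\cY(\cP_{t,d})\subseteq\cF_{2td}$), feeds the cleaned empirical distribution to the polynomial-time approximate $\cF_{2td}$-distance minimizer of~\cite{acharya2017sample}, and concludes via Equation~\eqref{eq:samze}. Your extra care about the approximate (rather than exact) Yatracos minimizer, and why this relaxes the constant to $\cO(\text{opt}_{\cP_{t,d}}(p))$, is exactly the detail the paper leaves implicit.
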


\subsection{Binary classification}
The framework developed in this paper extends beyond distribution estimation. 
Here we describe its application to Binary classification.
Consider a family $\mathcal{H} : \Samplespace\to\sets{0,1}$ of Boolean functions, and a distribution $p$ over $\Samplespace\times\sets{0,1}$. Let $(X,Y)\sim\targetdis$, where $X\in\Samplespace$ and $Y\in\sets{0,1}$. The loss of hypothesis $h\in\mathcal{H}$ for distribution $p$ is
\[
r_{p}(h) = \textstyle{\Pr_{(X,Y)\sim p}}[h(X)\neq Y].
\]
The \emph{optimal classifier} for distribution $p$ is
\[
h^*(p)= \arg\min_{h\in\mathcal{H}}r_{p}(h),
\]
and the \emph{optimal loss} is
\[
r^*_{p}(\mathcal{H}) = r_{p}(h^*(p)).
\]

The goal is to return a hypothesis $h\in\mathcal H$ whose loss $r_{p}(h)$ is close to the optimal loss $r^*_{p}(\mathcal{H})$. 
%Let $\Samplespace'=\Samplespace\times\sets{0,1}$. 

Consider the following natural extension of VC-dimension from families of subsets to families of Boolean functions.
For a boolean-function family $\mathcal{H}$, define the family 
\[
\cF_{\mathcal{H}}\triangleq \sets{(\sets{\omega\in\Samplespace: h(\omega) = z},y): h\in \mathcal{H}, y,z\in\sets{0,1}}
\]
of subsets of $\Samplespace\times\sets{0,1}$, and let the VC dimesnsion of $\mathcal{H}$ be $V_{\mathcal{H}}\triangleq V_{\cF_{\mathcal{H}}}$.

The largest difference between the loss of a classifier for
two distributions $p$ and $q$ over $\omega\times\sets{0,1}$ is related to their $\cF_{\mathcal{H}}$-distance,
\begin{align}
\sup_{h\in\mathcal{H}} |r_p(h)- r_q(h)| &= \sup_{h\in\mathcal{H}}|\textstyle{\Pr_{(X,Y)\sim p}}[h(X)\neq Y] -  \textstyle{\Pr_{(X,Y)\sim q}}[h(X)\neq Y]|\nonumber\\
&\le  \sup_{h\in\mathcal{H}}\sum_{y\in\sets{0,1}}|\textstyle{\Pr_{(X,Y)\sim p}( h(X)=\bar y, Y=y )}-  \textstyle{\Pr_{(X,Y)\sim q}( h(X)=\bar y, Y=y )}|\nonumber\\
&\le  2 ||p-q||_{\cF_{\mathcal{H}}}.\label{eq:apw}
\end{align}

The next simple lemma, proved in the appendix, upper bounds the excess loss of the optimal classifier in $\mathcal{H}$ for a distribution $q$ for another distribution $p$ in terms of $\cF_{\mathcal{H}}$ distance between the distributions.
\begin{lemma}\label{lem:relloss}
For any two distributions $p$ and $q$ and hypothesis class $\mathcal{H}$,
\[
r_p(h^*(q))-r^*_p(\mathcal{H}) \le 4 ||p-q||_{\cF_{\mathcal{H}}}.
\]
\end{lemma}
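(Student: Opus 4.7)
The plan is a classical excess-risk decomposition combined with the uniform-deviation bound in \eqref{eq:apw}. Specifically, I would write the excess loss as a telescoping sum that passes through the $q$-risks of the two relevant classifiers:
\begin{align*}
r_p(h^*(q)) - r_p(h^*(p))
&= \bigl[r_p(h^*(q)) - r_q(h^*(q))\bigr] \\
&\quad + \bigl[r_q(h^*(q)) - r_q(h^*(p))\bigr] \\
&\quad + \bigl[r_q(h^*(p)) - r_p(h^*(p))\bigr].
\end{align*}

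First I would argue that the middle bracket is non-positive: by definition $h^*(q) = \arg\min_{h \in \mathcal{H}} r_q(h)$, so $r_q(h^*(q)) \le r_q(h^*(p))$. This is the standard trick that lets us pay only for the gap between $p$- and $q$-risks, not for the suboptimality of $h^*(q)$ on $q$.

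Next I would bound each of the first and third brackets by $\sup_{h \in \mathcal{H}} |r_p(h) - r_q(h)|$, since $h^*(q)$ and $h^*(p)$ both lie in $\mathcal{H}$. Equation \eqref{eq:apw} already provides the needed bound $\sup_{h \in \mathcal{H}} |r_p(h) - r_q(h)| \le 2\,\|p - q\|_{\cF_{\mathcal{H}}}$. Adding the two contributions gives $4\,\|p - q\|_{\cF_{\mathcal{H}}}$, which is exactly the claimed bound once we recall that $r^*_p(\mathcal{H}) = r_p(h^*(p))$.

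There is no real obstacle here; the only thing to be slightly careful about is the direction of the inequality when dropping the middle term (it must be non-positive, not just bounded in absolute value), and ensuring that both $h^*(p)$ and $h^*(q)$ are indeed members of $\mathcal{H}$ so that \eqref{eq:apw} applies pointwise to each. Both facts are immediate from the definitions, so the lemma follows in a few lines.
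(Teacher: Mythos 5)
Your proposal is correct and follows essentially the same argument as the paper: the identical three-term telescoping decomposition through the $q$-risks, dropping the middle term by optimality of $h^*(q)$ under $q$, bounding the remaining two brackets by $\sup_{h\in\mathcal{H}}|r_p(h)-r_q(h)|$, and invoking \eqref{eq:apw} to get the factor $4$. No gaps.
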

%The next inequality shows that if a distribution $q$ approximates another distribution $p$ in $\cF_{\mathcal{H}}$ distance, then an optimal classifier for $q$ is also a near optimal classifier for $p$. 

When $q$ is the empirical distribution of non-adversarial i.i.d. samples
from $p$, $h^*(q)$ is called the \emph{empirical risk minimizer}, and the excess loss of the empirical risk minimizer in the above equation goes to zero if VC dimension of $\mathcal{H}$ is finite.

Yet as discussed earlier, when a $\advfrac$-fractions of the batches, and hence samples, are chosen by an adversary, the empirical distribution of all samples can be at a large $\cF_{\mathcal{H}}$-distance $\cO(\advfrac)$ from $p$, leading to an excess classification loss up to $\cO(\advfrac)$ for 
the empirical-risk minimizer. 

Theorem~\ref{th:main1} states that the collection of batches can be  
"cleaned" to obtain a sub-collection whose empirical distribution has a lower $\cF_\mathcal{H}$-distance from $\targetdis$.
The above lemma then implies that the optimal classifier for the empirical distribution of the cleaner batches will have a small excess risk for $p$ as well. The resulting non-constructive algorithm has excess risk and sample complexity that are optimal to a logarithmic factor.
\begin{theorem}\label{th:mainclas}
For any $\mathcal{H}$, $\bsize$, $\advfrac\le 0.4$, $\delta>0$, and 
$\btotal\ge \cO\Paren{\frac{V_{\mathcal{H}} \log(\bsize/\advfrac) +\log 1/\delta}{\advfrac^2 }}$,
there is an algorithm that with probability $\ge\! 1\!-\!\delta$ returns a sub-collection $\Bsc\!\subseteq\!B$ such that $|\Bsc\!\cap\!\bgood| \ge (1 -\frac\advfrac6) \bgoodsize$ and 
\[
r_p(h^*(\VUempprob))-r_p^*(\mathcal{H}) \le \cO\Paren{\advfrac \sqrt{\frac{\log (1/\advfrac)}{\bsize}}}.
\] 
\end{theorem}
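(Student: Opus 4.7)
The plan is to reduce Theorem~\ref{th:mainclas} to a direct application of Theorem~\ref{th:main1} applied to the family $\cF_\mathcal{H}$ over the product domain, followed by Lemma~\ref{lem:relloss}. The main observation is that the batched-sample model in the classification setting fits the framework of Theorem~\ref{th:main1} verbatim if we take the underlying domain to be $\Omega \times \{0,1\}$ and regard each labeled pair $(X,Y)$ as a single sample. Good batches consist of $n$ i.i.d.\ draws from the joint distribution $p$, and adversarial batches consist of $n$ arbitrary elements of $\Omega \times \{0,1\}$; this is precisely the setting required in Section~\ref{sec:algmajg}. Moreover, $\cF_\mathcal{H}$ is a family of subsets of $\Omega \times \{0,1\}$, and by the paper's definition $V_{\cF_\mathcal{H}} = V_\mathcal{H}$, so the hypotheses on $\btotal$ in the two theorems match exactly.

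First, I would invoke Theorem~\ref{th:main1} with subset family $\cF_\mathcal{H}$ and parameters $n,\advfrac,\delta$ as in the statement. Since $m \ge \cO((V_\mathcal{H}\log(n/\advfrac)+\log(1/\delta))/\advfrac^2)$, the theorem guarantees, with probability at least $1-\delta$, a sub-collection $\Bsc \subseteq B$ with $|\Bsc \cap \bgood| \ge (1-\advfrac/6)\bgoodsize$ and
\[
\|\VUempprob - p\|_{\cF_\mathcal{H}} \le \cO\!\left(\advfrac\sqrt{\frac{\log(1/\advfrac)}{n}}\right).
\]

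Second, I would feed this into Lemma~\ref{lem:relloss} with $q = \VUempprob$, which yields
\[
r_p(h^*(\VUempprob)) - r_p^*(\mathcal{H}) \le 4\,\|\VUempprob - p\|_{\cF_\mathcal{H}}.
\]
Combining the two bounds absorbs the constant factor $4$ into the $\cO(\cdot)$ and gives the advertised excess-risk guarantee.

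The only non-routine point is the setup: one must be careful that Theorem~\ref{th:main1} is really stated for an arbitrary domain and an arbitrary VC family, and that the i.i.d.\ assumption in the good batches carries over to labeled pairs drawn from the joint distribution. Once that is verified, there is no remaining obstacle — the theorem is essentially a corollary of Theorem~\ref{th:main1} and Lemma~\ref{lem:relloss}, and in particular inherits both the sample-complexity rate and the $\sqrt{\log(1/\advfrac)}$ suboptimality from Theorem~\ref{th:main1}.
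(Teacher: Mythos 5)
Your proposal is correct and matches the paper's own argument: the paper derives Theorem~\ref{th:mainclas} precisely by applying Theorem~\ref{th:main1} to the family $\cF_{\mathcal{H}}$ over the labeled domain $\Samplespace\times\sets{0,1}$ and then invoking Lemma~\ref{lem:relloss} with $q=\VUempprob$, absorbing the factor $4$ into the $\cO(\cdot)$. Nothing is missing; your care about the product-domain setup and the identification $V_{\cF_{\mathcal{H}}}=V_{\mathcal{H}}$ is exactly the (implicit) content of the paper's derivation.
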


To derive a computationally efficient algorithm, we focus on the following class of binary functions. 
For $k\ge 0$ let $\mathcal{H}_k$ denote the collection of all binary functions over $\reals$ whose decision region, namely values mapping to 1,
consists of at most $k$-intervals.
The VC dimension of $\cF_{\mathcal{H}_k}$ is clearly $\cO(k)$.

Theorem~\ref{th:main2} describes a polynomial time algorithm that returns a cleaner data w.r.t. $\cF_{\mathcal{H}_k}$ distance. From Lemma~\ref{lem:relloss}, the hypothesis that minimizes the loss for the empirical distribution of this cleaner data will have a small excess loss.
Furthermore,~\cite{maass1994efficient} derived a polynomial time algorithm to find the hypothesis $h\in \mathcal{H}_k$ that minimizes the loss for a given empirical distribution.
Combining these results, we obtain a computationally efficient classifier
in $\mathcal{H}_k$ that achieves the excess loss in the above theorem.

\begin{theorem}\label{th:mainclaspol}
For any $\mathcal{H}=\mathcal{H}_k$, $\bsize$, $\advfrac\le 0.4$, $\delta>0$, and 
$\btotal\ge \cO\Paren{\frac{k \log(\bsize/\advfrac) +\log 1/\delta}{\advfrac^3 }\cdot\sqrt{n}}$,
there is a polynomial time algorithm that with probability $\ge\! 1\!-\!\delta$ returns a sub-collection $\Bsc\!\subseteq\!B$ such that $|\Bsc\!\cap\!\bgood| \ge (1 -\frac\advfrac6) \bgoodsize$ and 
\[
r_p(h^*(\VUempprob))-r_p^*(\mathcal{H}_k) \le \cO\Paren{\advfrac \sqrt{\frac{\log (1/\advfrac)}{\bsize}}}.
\] 
\end{theorem}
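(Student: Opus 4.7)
The plan is to chain together three ingredients already assembled in the discussion preceding the statement: a polynomial-time batch-cleaner tuned to $\cF_{\mathcal{H}_k}$, the Maass--Warmuth empirical risk minimizer for $\mathcal{H}_k$, and Lemma~\ref{lem:relloss}, which converts $\cF_{\mathcal{H}_k}$-closeness into excess classification loss.

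First, I would verify that Theorem~\ref{th:main2} applies to $\cF_{\mathcal{H}_k}$. Every element of $\cF_{\mathcal{H}_k}$ has the form $\sets{x:h(x)=z}\times\sets{y}$ with $h\in\mathcal{H}_k$ and $y,z\in\sets{0,1}$; since the decision region of each such $h$ is a union of at most $k$ intervals in $\reals$, the set, restricted to its $y$-slice, is a union of at most $k+1$ intervals. Hence $\cF_{\mathcal{H}_k}$ embeds into two parallel copies of $\cF_{k+1}$ (one per value of $y$), and $V_{\cF_{\mathcal{H}_k}}=\cO(k)$. Consequently the polynomial-time cleaner guaranteed by Theorem~\ref{th:main2} (applied to the product domain $\reals\times\sets{0,1}$, or equivalently run on each $y$-slice and taking the intersection of retained batches) outputs in polynomial time a sub-collection $\Bsc\subseteq\allbatches$ with $|\Bsc\cap\bgood|\ge(1-\advfrac/6)\bgoodsize$ and
\[
\|\VUempprob-\targetdis\|_{\cF_{\mathcal{H}_k}}\le\cO\Paren{\advfrac\sqrt{\frac{\log(1/\advfrac)}{\bsize}}},
\]
whenever $\btotal$ meets the stated lower bound (with the VC dimension $\cO(k)$ absorbed into the constant factor).

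Next, I would feed $\VUempprob$ to the algorithm of~\cite{maass1994efficient}, which in time polynomial in $k$ and in the total sample size $\bsize\btotal$ produces the empirical risk minimizer $h^*(\VUempprob)\in\mathcal{H}_k$. Applying Lemma~\ref{lem:relloss} with $q=\VUempprob$ then yields
\[
r_p(h^*(\VUempprob))-r_p^*(\mathcal{H}_k)\le 4\,\|\targetdis-\VUempprob\|_{\cF_{\mathcal{H}_k}}\le\cO\Paren{\advfrac\sqrt{\frac{\log(1/\advfrac)}{\bsize}}},
\]
which is exactly the excess-loss bound claimed; since both sub-steps run in polynomial time, so does the overall pipeline.

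The main obstacle is the first step, namely confirming that the Theorem~\ref{th:main2} algorithm, stated for unions of $k$ intervals over $\reals$, lifts to $\cF_{\mathcal{H}_k}$ over the augmented domain $\reals\times\sets{0,1}$ without sacrificing polynomial runtime or the sample bound. Because the product structure only appends a binary label, this amounts to bookkeeping: one has to check that the inner subroutine of the $\cF_k$ cleaner, which searches over candidate witness subsets, still enumerates only polynomially many candidates when witnesses are drawn from $\cF_{\mathcal{H}_k}$, and that the uniform-deviation estimates driving its analysis go through with VC dimension $\cO(k)$. Once that reduction is in place, the remaining steps follow immediately from Lemma~\ref{lem:relloss} and the empirical risk minimizer of~\cite{maass1994efficient}.
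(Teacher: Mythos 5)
Your proposal is correct and takes essentially the same route as the paper: the paper's proof of this theorem is precisely the chaining of the Theorem~\ref{th:main2} cleaner (applied with respect to $\cF_{\mathcal{H}_k}$, whose VC dimension is $\cO(k)$), Lemma~\ref{lem:relloss} with $q=\VUempprob$, and the polynomial-time empirical risk minimizer of~\cite{maass1994efficient}. The lift of the $\cF_k$ cleaner from $\reals$ to the labeled domain $\reals\times\sets{0,1}$, which you flag as the main obstacle, is left implicit in the paper as well (each $y$-slice of a set in $\cF_{\mathcal{H}_k}$ is a union of $\cO(k)$ intervals, so the family embeds into an $\cF_{\cO(k)}$ class over an ordered domain), so your treatment is, if anything, slightly more explicit than the paper's.
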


\section{Preliminaries}
\label{sec:preliminaries}
We introduce terminology that helps describe the approach and results.
Some of the work builds on results in~\cite{jain2019robust}, and 
we keep the notation consistent. 

Recall that $\allbatches$, $\bgood$, and $\badv$ are the collections of 
all-, good-, and adversarial-batches. 
Let $\Bsc\subseteq\allbatches$, $\Gsc\subseteq\bgood$, and 
$\Asc\subseteq\badv$, denote sub-collections of all-, good-, and bad-batches.
We also let $\alphsubset$ denote a subset of the Borel $\sigma$-field $\Sigmafield$.

Let $X^{b}_1,X^{b}_2,...,X^{b}_n$ denote the $\bsize$ samples in a batch $b$, and let $\mathbf{1}_S$ denote the indicator random variable for a subset $S\in\Sigmafield$.
Every batch $b\in B$ induces an empirical measure $\Vbempprob$ over the domain $\Samplespace$, where for each $\alphsubset\in\Sigmafield$,
\[
\bempprob \triangleq \frac1\bsize\sum_{i\in [n]}\mathbf{1}_S(X_{i}^b).
\]
Similarly, any sub-collection $\Bsc\subseteq\allbatches$
of batches induces an empirical measure $\VUempprob$
defined by
\[
\Uempprob \triangleq  \frac1{|\Bsc|\bsize}\sum_{b\in \Bsc}\sum_{i\in [n]}\mathbf{1}_S(X_{i}^b) = \frac{1}{|\Bsc |}\sum_{b\in \Bsc }\bempprob.
\]
We use two different symbols to denote empirical distribution defined by single batch and a sub-collection of batches to make them easily distinguishable. 
Note that $\VUempprob$ is the mean of the empirical measures $\Vbempprob$ defined by the batches $b\in\Bsc$.

%For subset $\alphsubset\in\Sigmafield$, let $\med$ be the median of the set of estimates $\{\bempprob : b\in \allbatches \}$. Note that the median has been computed using the estimates $\bempprob$ for all the batches in $b\in \allbatches$.

Recall that $\bsize$ is the batch size.
For $r\in[0,1]$, let $\var r \triangleq  \frac{r(1-r)}{\bsize}$,  the variance of a Binomial$(r,n)$ random variable. 
Observe that 
\begin{equation}\label{eq:fineq}
\forall\, r,s\in [0,1],\, \var r\le \frac{1}{4\bsize}
\quad\text{ and }\quad
|\var r-\var s|\le \frac{|r-s|}\bsize,
\end{equation}
where the second property follows as 
$|r(1-r)-s(1-s)|=|r-s|\cdot|1-(r+s)|\le|r-s|$.

For $b\in \bgood$, the random 
variables $\mathbf{1}_S(X_{i}^b)$ for $i\in[\bsize]$
are distributed i.i.d. $\text{Bernoulli}(\subsetprobtarget)$, and 
since $\bempprob$ is their average, 
\[
E[\,\bempprob\,] = \subsetprobtarget \quad\text{ and } \quad \text{Var}[\,\bempprob\,]=
E[(\bempprob-\subsetprobtarget)^2] = \var\subsetprobtarget.
\]

For batch collection $\Bsc  \subseteq \allbatches $ and subset $\alphsubset\in\Sigmafield$, the empirical probability $\bempprob$ of $\alphsubset$ will vary with the batch $b\in \Bsc$.
The \emph{empirical variance} of these empirical probabilities is
\[
 \empvarsub {\Bsc} \triangleq  \frac{1}{|{\Bsc}|}\sum_{b\in {\Bsc}}(\bempprob- \Uempprob)^2. 
\]

\section{Vapnik-Chervonenkis (VC) theory}~\label{sec:Vc}
We recall some basic concepts and results in VC theory, and derive some of their simple consequences that we use later in deriving our main results.

The \emph{VC shatter coefficient} of $\cF$ is 
\[
S_{\cF }(t)
\ed
\sup_{x_1,x_2,..,x_t\in\Samplespace}|\{\{x_1,x_2,..,x_t\}\cap S : S\in\cF \}|,
\]
the largest number of subsets of $t$ elements in $\Samplespace$ obtained by intersections with subsets in $\cF $. 
The VC dimension of $\cF $ is 
\[
V_{\cF }
\ed
\sup\{t: S_{\cF }(t)= 2^t\},
\] 
the largest number of $\Samplespace$ elements that are "fully shattered" by $\cF $.
The following Lemma~\cite{devroye2001combinatorial} bounds the Shatter coefficient for a VC family of subsets.
\begin{lemma}[\cite{devroye2001combinatorial}]
For all $t \ge V_{\cF }$,\quad
$S_{\cF }(t)
\le\Paren{\frac{t\,e}{V_{\cF }}}^{V_{\cF }}$. 
\end{lemma}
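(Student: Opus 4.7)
The plan is to establish the classical Sauer--Shelah bound $S_\cF(t)\le\sum_{i=0}^{V_\cF}\binom{t}{i}$ and then convert the right-hand side into the stated exponential form via a one-line binomial estimate. The argument is purely combinatorial: since the VC dimension is defined in terms of fully shattered subsets and the shatter coefficient counts trace patterns, I only need to show that a collection of traces on $t$ points that does not shatter any set of size larger than $V_\cF$ cannot be too large.

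For the combinatorial bound I would induct on $t+V_\cF$, with trivial base cases $t=0$ and $V_\cF=0$. Fix any $x_1,\dots,x_t\in\Samplespace$ and let $\mathcal T\subseteq 2^{[t]}$ be the trace $\{\{i:x_i\in S\}:S\in\cF\}$. Split by whether $t$ is included, setting $\mathcal T_0=\{A\in\mathcal T:t\notin A\}$ and $\mathcal T_1=\{A\setminus\{t\}:A\in\mathcal T,\,t\in A\}$, so that $|\mathcal T|=|\mathcal T_0|+|\mathcal T_1|$. The family $\mathcal T_0$ lives on $t-1$ points and has VC dimension at most $V_\cF$, while $\mathcal T_0\cap\mathcal T_1$ also lives on $t-1$ points but has VC dimension at most $V_\cF-1$; the latter because any subset of $\{1,\dots,t-1\}$ shattered by $\mathcal T_0\cap\mathcal T_1$ can be enlarged by $\{t\}$ and remains shattered by $\cF$. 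Applying the inductive hypothesis to both pieces and using Pascal's identity $\binom{t}{i}=\binom{t-1}{i}+\binom{t-1}{i-1}$ closes the induction.

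For the conversion step, I observe that since $t\ge V_\cF$ the quantity $(V_\cF/t)^{i-V_\cF}$ is at least one for $i\le V_\cF$, so
\[
\Paren{\frac{V_\cF}{t}}^{V_\cF}\sum_{i=0}^{V_\cF}\binom{t}{i}
\le \sum_{i=0}^{V_\cF}\binom{t}{i}\Paren{\frac{V_\cF}{t}}^{i}
\le \sum_{i=0}^{t}\binom{t}{i}\Paren{\frac{V_\cF}{t}}^{i}
=\Paren{1+\frac{V_\cF}{t}}^{t}\le e^{V_\cF}.
\]
Rearranging gives $\sum_{i=0}^{V_\cF}\binom{t}{i}\le(te/V_\cF)^{V_\cF}$, which combined with the first step completes the proof.

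The main obstacle, both conceptually and in bookkeeping, is the inductive step: verifying that $\mathcal T_0\cap\mathcal T_1$ strictly drops in VC dimension is the combinatorial heart of Sauer--Shelah and requires the extension-by-$x_t$ argument to be made precise. By contrast, the binomial-to-exponential passage is a textbook one-line manipulation.
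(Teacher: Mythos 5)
You are right that the paper itself does not prove this lemma---it is quoted directly from \cite{devroye2001combinatorial}---so reproving it is fair game, and your intended route (the Sauer--Shelah bound $S_\cF(t)\le\sum_{i=0}^{V_\cF}\binom{t}{i}$ followed by the binomial-to-exponential estimate) is exactly the classical argument used in that reference. Your conversion step is correct as written: for $t\ge V_\cF$ one has $\Paren{V_\cF/t}^{V_\cF}\sum_{i=0}^{V_\cF}\binom{t}{i}\le\sum_{i=0}^{t}\binom{t}{i}\Paren{V_\cF/t}^{i}=\Paren{1+V_\cF/t}^{t}\le e^{V_\cF}$.

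The inductive step, however, does not close as you have set it up. With $\mathcal{T}_0=\{A\in\mathcal{T}:t\notin A\}$ and $\mathcal{T}_1=\{A\setminus\{t\}:A\in\mathcal{T},\,t\in A\}$ your counting identity is $|\mathcal{T}|=|\mathcal{T}_0|+|\mathcal{T}_1|$, yet you apply the inductive hypothesis to $\mathcal{T}_0$ and to $\mathcal{T}_0\cap\mathcal{T}_1$. Bounding $|\mathcal{T}_0\cap\mathcal{T}_1|$ says nothing about $|\mathcal{T}_1|$: there may be many $A\subseteq\{1,\dots,t-1\}$ with $A\cup\{t\}\in\mathcal{T}$ but $A\notin\mathcal{T}$, and these are counted in $|\mathcal{T}_1|$ but in neither of the two families you bound. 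Nor can you simply substitute $\mathcal{T}_1$ for $\mathcal{T}_0\cap\mathcal{T}_1$ in the second application, since $\mathcal{T}_1$ alone need not have VC dimension $\le V_\cF-1$; the dimension drop genuinely requires the intersection, because your extension-by-$t$ argument needs both $C$ and $C\cup\{t\}$ to occur as traces. The standard repair is one line: by inclusion--exclusion $|\mathcal{T}_0|+|\mathcal{T}_1|=|\mathcal{T}_0\cup\mathcal{T}_1|+|\mathcal{T}_0\cap\mathcal{T}_1|$, the union $\mathcal{T}_0\cup\mathcal{T}_1$ is precisely the restriction of $\mathcal{T}$ to the first $t-1$ points and so has VC dimension $\le V_\cF$, and applying the inductive hypothesis to $\mathcal{T}_0\cup\mathcal{T}_1$ and $\mathcal{T}_0\cap\mathcal{T}_1$ together with Pascal's identity finishes the induction. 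With that correction the proof is complete and coincides with the argument in the cited reference.
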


Next we state the VC-inequality for relative deviation~\cite{vapnik1974theory,anthony1993result}.
\begin{theorem}\label{th:vcrel}
Let $\targetdis$ be a distribution over $(\Samplespace,\Sigmafield)$, and $\cF$ be a VC-family of subsets of $\Omega$ and $\bar\targetdis_t$ denote the empirical distribution from $t$ i.i.d samples from $\targetdis$. Then for any $\epsilon > 0$, with probability $\ge 1-8S_{\cF }(2t)e^{-t\epsilon^2/4}$,
\begin{align*}
\sup_{S\in\cF } \max \Big\{\frac{\bar\targetdis_t (S) - \subsetprobtarget}{\sqrt{\bar\targetdis_t (S) }},\frac{\subsetprobtarget-\bar\targetdis_t (S) }{\sqrt{\subsetprobtarget }}\Big\}\le {\epsilon}.
\end{align*}

\ignore{
\begin{align*}
\sup_{S\in\cF}\frac{\bar\targetdis_t (S) - \subsetprobtarget}{\sqrt{\bar\targetdis_t (S) }}< {\epsilon}\ \  \text{ and }\ \  \sup_{S\in\cF }\frac{\subsetprobtarget-\bar\targetdis_t (S) }{\sqrt{\subsetprobtarget }}\le {\epsilon}.
\end{align*}
}

\end{theorem}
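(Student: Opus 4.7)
The statement is the classical Vapnik-Chervonenkis relative-deviation inequality; I would reproduce the standard proof in four conceptual steps, with the twist that the maximum of the two normalized deviations is handled in parallel.

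\textbf{Step 1 (Symmetrization).} The plan is to introduce an independent \emph{ghost} sample $X'_1,\dots,X'_t$ drawn i.i.d.\ from $p$, with empirical measure $\bar\targetdis'_t$. The goal of symmetrization is to replace the unknown $p(S)$ by the observable $\bar\targetdis'_t(S)$. Concretely, I would show that for $\epsilon^2 t$ large enough,
\[
\Pr\!\left[\sup_{S\in\cF}\frac{p(S)-\bar\targetdis_t(S)}{\sqrt{p(S)}}>\epsilon\right]
\;\le\;2\,\Pr\!\left[\sup_{S\in\cF}\frac{\bar\targetdis'_t(S)-\bar\targetdis_t(S)}{\sqrt{(\bar\targetdis_t(S)+\bar\targetdis'_t(S))/2}}>\tfrac{\epsilon}{\sqrt 2}\right],
\]
and an analogous inequality with the roles of $\bar\targetdis_t(S)$ and $p(S)$ swapped (for the other branch of the $\max$). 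The classical trick is to condition on the event that $\sup_S(p(S)-\bar\targetdis_t(S))/\sqrt{p(S)}>\epsilon$ is witnessed by some $S^\star$, and bound from below the probability that the ghost empirical $\bar\targetdis'_t(S^\star)$ is close to $p(S^\star)$, using Chebyshev on a Binomial with variance $p(S^\star)(1-p(S^\star))/t$.

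\textbf{Step 2 (Random permutation / swapping).} Conditionally on the multiset $Z=\{X_1,\dots,X_t,X'_1,\dots,X'_t\}$, the pairs $(X_i,X'_i)$ are jointly exchangeable. I would introduce i.i.d.\ signs $\sigma_i\in\{\pm 1\}$ and swap the pair when $\sigma_i=-1$; this leaves the joint law invariant. The symmetrized deviation becomes $\tfrac{1}{t}\sum_i\sigma_i(\mathbf{1}_S(X_i)-\mathbf{1}_S(X'_i))$ divided by the square root of a quantity that depends only on $Z$ (not on the $\sigma_i$), which is exactly the feature that makes the relative bound tractable.

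\textbf{Step 3 (Finite union bound via the shatter coefficient).} Once conditioned on $Z$, the indicator vectors $(\mathbf{1}_S(X_i),\mathbf{1}_S(X'_i))_{i\le t}$ as $S$ ranges over $\cF$ take at most $S_\cF(2t)$ distinct values, by the very definition of the shatter coefficient. Hence the supremum over $\cF$ reduces, conditionally on $Z$, to a maximum over at most $S_\cF(2t)$ fixed subsets, and I can apply a union bound.

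\textbf{Step 4 (Hoeffding for each fixed subset).} For each such fixed $S$, conditional on $Z$ the numerator $\sum_i\sigma_i(\mathbf{1}_S(X_i)-\mathbf{1}_S(X'_i))$ is a sum of bounded independent zero-mean random variables, while the denominator $\sqrt{(\bar\targetdis_t(S)+\bar\targetdis'_t(S))/2}$ is fixed. A Hoeffding bound on this sum, tuned with the denominator, yields a probability of the order $\exp(-t\epsilon^2/4)$. Multiplying by $S_\cF(2t)$ and the factor of $2$ from the two branches in the $\max$ gives the claimed tail probability $8\,S_\cF(2t)e^{-t\epsilon^2/4}$.

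The only delicate point is the symmetrization in Step 1: because we divide by $\sqrt{p(S)}$ (which can be arbitrarily small), the usual variance bound on $|\bar\targetdis'_t(S)-p(S)|$ must be relativized, and one must check that the event of symmetrization succeeds with probability at least $1/2$ uniformly in the conditioning subset. This is the place where one needs $t\epsilon^2$ to exceed an absolute constant; otherwise the bound is vacuous and can be absorbed into the stated constants. Everything else is a routine combination of Hoeffding and the shatter-coefficient union bound.
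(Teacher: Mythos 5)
This theorem is not proved in the paper at all: it is quoted as a known result, the classical VC relative-deviation inequality, with a pointer to~\cite{vapnik1974theory,anthony1993result}. Your sketch reconstructs exactly the proof strategy of those references (ghost-sample symmetrization, random swapping, reduction to at most $S_\cF(2t)$ cells, a tail bound per cell), so in spirit there is nothing to compare against the paper itself — you are re-deriving the cited result rather than taking a different route.

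As a proof sketch, the architecture is right, but the quantitative bookkeeping as written does not deliver the stated bound. If in Step 1 you pass to the event $\sup_S(\bar\targetdis'_t(S)-\bar\targetdis_t(S))/\sqrt{(\bar\targetdis_t(S)+\bar\targetdis'_t(S))/2}>\epsilon/\sqrt2$, then in Step 4 the Rademacher/Hoeffding bound for a fixed cell gives $\exp(-t\epsilon^2/8)$, not $\exp(-t\epsilon^2/4)$: with $v=(\bar\targetdis_t(S)+\bar\targetdis'_t(S))/2$ the sum has at most $2tv$ nonzero terms, so the tail at threshold $t(\epsilon/\sqrt2)\sqrt v$ is $\exp\bigl(-t\epsilon^2/8\bigr)$. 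Relatedly, your factor count ($2$ from symmetrization times $2$ from the two branches) gives $4S_\cF(2t)$, whereas the theorem's $8$ comes from a factor $4$ per one-sided branch in the careful argument; this mismatch is a symptom of the same slippage. To actually recover $e^{-t\epsilon^2/4}$ one has to set up the symmetrized event more carefully (as in~\cite{anthony1993result}, or the later complete treatments of relative-deviation bounds), where the threshold is not degraded by $\sqrt2$ relative to the normalizer used in the per-cell tail bound. Also note that the second branch, normalized by $\sqrt{\bar\targetdis_t(S)}$, is not literally obtained by "swapping the roles" of $p$ and $\bar\targetdis_t$; it needs its own (standard but slightly different) symmetrization step. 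None of this changes the overall plan, but as stated the sketch proves a weaker exponent than the theorem you are asked to establish.
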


Another important ingredient commonly used in VC Theory is the concept of covering number that reflects the smallest number of subsets that approximate each subset in the collection. 

Let $\targetdis$ be any probability measure over $(\Samplespace,\Sigmafield)$ 
and $\cF \subseteq\Sigmafield$ be a family of subsets.
A collection of subsets {$\cC\subseteq\Sigmafield$} is an \emph{$\epsilon$-cover} of $\cF $ if for any $S\in 
\cF $, there exists a $S ' \in\cC$ with $\targetdis(S\triangle S ') \le \epsilon$.
The \emph{$\epsilon$-covering number} of $\cF $ is
\[
N(\cF,\targetdis,\epsilon)
\triangleq
\inf\{|\cC|: \cC\text{ is an }\epsilon\text{-cover of } \cF \}.
\]

%Any epsilon cover $\cC$ of $\cF $ such that $\cC\subseteq\cF $ is also referred as \emph{$\epsilon$-self cover} of $\cF $.

If $\cC\subseteq \cF $ is an \emph{$\epsilon$-cover} of $\cF $, then
$\cC$ is \emph{$\epsilon$-self cover} of $\cF $.

The \emph{$\epsilon$-self-covering number} is
\[
N^s(\cF,\targetdis,\epsilon)
\triangleq
\inf\{|\cC|: \cC \text{ is an }\epsilon\text{-self-cover of } \cF \}.
\]
Clearly, $N^s(\cF,\targetdis,\epsilon)\ge N(\cF,\targetdis,\epsilon)$.
The next lemma establishes a reverse relation.
\begin{lemma}\label{lem:covnumrel}
For any $\epsilon\ge0$, $N^s(\cF,\targetdis,\epsilon)\le N(\cF,\targetdis,\epsilon/2)$.
\end{lemma}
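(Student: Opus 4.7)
The plan is to convert an $\epsilon/2$-cover of $\cF$ (whose elements are arbitrary measurable sets) into an $\epsilon$-self-cover (whose elements lie in $\cF$ itself), without increasing the size. The key fact I will use is that for any probability measure $\targetdis$, the map $(A,B)\mapsto \targetdis(A\triangle B)$ is a pseudometric on $\Sigmafield$, in particular it satisfies the triangle inequality $\targetdis(A\triangle B)\le \targetdis(A\triangle C)+\targetdis(C\triangle B)$.

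First, let $\cC=\{C_1,\dots,C_N\}\subseteq\Sigmafield$ be an $\epsilon/2$-cover of $\cF$ of minimal size $N=N(\cF,\targetdis,\epsilon/2)$. For each index $i$ such that there exists some $S\in\cF$ with $\targetdis(S\triangle C_i)\le \epsilon/2$, pick one such $S_i\in\cF$ arbitrarily; for indices $i$ where no such $S$ exists, simply discard $C_i$ (it covers nothing in $\cF$ at radius $\epsilon/2$, so it is useless for covering $\cF$). Let $\cC'$ be the resulting collection of chosen $S_i$'s; clearly $\cC'\subseteq\cF$ and $|\cC'|\le N$.

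Next, I check that $\cC'$ is an $\epsilon$-self-cover of $\cF$. Fix any $S\in\cF$. Since $\cC$ is an $\epsilon/2$-cover, there is some $C_i\in\cC$ with $\targetdis(S\triangle C_i)\le \epsilon/2$. In particular $C_i$ was not discarded, so the corresponding $S_i\in\cF$ was chosen and satisfies $\targetdis(S_i\triangle C_i)\le \epsilon/2$. By the triangle inequality,
\[
\targetdis(S\triangle S_i)\;\le\;\targetdis(S\triangle C_i)+\targetdis(C_i\triangle S_i)\;\le\;\tfrac{\epsilon}{2}+\tfrac{\epsilon}{2}\;=\;\epsilon,
\]
so $S_i\in\cC'$ witnesses $\epsilon$-covering of $S$ from within $\cF$. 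Therefore $\cC'$ is an $\epsilon$-self-cover of $\cF$, giving $N^s(\cF,\targetdis,\epsilon)\le|\cC'|\le N=N(\cF,\targetdis,\epsilon/2)$.

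There is no real obstacle here; the only subtlety is recognizing that the symmetric-difference pseudometric satisfies a triangle inequality (which follows from $\mathbf{1}_{A\triangle B}\le \mathbf{1}_{A\triangle C}+\mathbf{1}_{C\triangle B}$ pointwise) and handling the bookkeeping step of discarding cover elements that have no $\cF$-representative within $\epsilon/2$.
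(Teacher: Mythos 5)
Your proof is correct and follows essentially the same route as the paper: take a minimal $\epsilon/2$-cover, replace each cover element by a nearby set in $\cF$, and conclude via the triangle inequality for the symmetric-difference pseudometric. The only cosmetic difference is that you discard cover elements with no $\cF$-representative within $\epsilon/2$, whereas the paper notes that minimality rules such elements out; both handle the bookkeeping equally well (and the infinite-cover case is trivial either way).
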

\begin{proof}
If $N(\cF,\targetdis,\epsilon/2)=\infty$, the lemma clearly holds.
Otherwise, let $\cC$ be an $\epsilon/2$-cover of size $N(\cF,\targetdis,\epsilon/2)$. We construct an 
$\epsilon$-self-cover of equal or smaller size. 

For every subset $S_\cC\in\cC$, there is a subset $S=f(S_\cC)\in\cF $ with 
$\targetdis(S_\cC\,\triangle\, f(S_\cC))\le\epsilon/2$.
Otherwise, $S_\cC$ could be removed from $\cC$ to obtain a strictly smaller $\epsilon/2$ cover, which is impossible. 

The collection $\sets{f(S_\cC):S_\cC\in\cC}\subseteq\cF $ 
has size $\le|\cC|$, and it is an $\epsilon$-self-cover of $\cF $ because for any $S\in\cF $, there is an $S_\cC\in\cC$ with 
$\targetdis(S\,\triangle\, S_\cC)\le\epsilon/2$, and
by the triangle inequality, 
$\targetdis\big(S\,\triangle\, f(S_\cC)\big)\le\epsilon$.
\end{proof}

Let $N_{\cF,\epsilon } \triangleq \sup_{\targetdis} N(\cF,\targetdis,\epsilon)$ and $N^s_{\cF,\epsilon } \triangleq \sup_{\targetdis} N^s(\cF,\targetdis,\epsilon )$ be the largest covering numbers under any distribution. 

The next theorem bounds the covering number of $\cF $ in terms of its VC-dimension. 
\begin{theorem}[\cite{vaart1996weak}]\label{th:covnumbound}
There exists a universal constant $c$ such that for any $\epsilon>0$,
and any family $\cF $ with VC dimension $V_\cF $, 
\[
N_{\cF,\epsilon }\le c V_\cF  \Big(\frac{4e}{\epsilon}\Big)^{V_\cF }.
\]
\end{theorem}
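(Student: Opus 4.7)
The plan is to bound the covering number by first relating it to a packing number, then combining a probabilistic sampling argument with the Sauer-Shelah lemma, in the spirit of the classical argument of Haussler and Dudley.

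First I would define the $\epsilon$-packing number $D(\cF,p,\epsilon)$ as the largest integer $M$ for which there exist subsets $S_1,\dots,S_M\in\cF$ with pairwise distance $p(S_i\triangle S_j)\ge\epsilon$ for all $i\ne j$. A maximal $\epsilon$-packing is automatically an $\epsilon$-cover (any uncovered set could be added to the packing, contradicting maximality), so it suffices to bound $D(\cF,p,\epsilon)$.

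The core of the argument is a probabilistic double-counting. Given a maximal $\epsilon$-packing $S_1,\dots,S_M$, draw $m$ i.i.d. samples $X_1,\dots,X_m\sim p$ and consider the ``pattern'' each $S_i$ induces, namely the vector $(\mathbf{1}_{S_i}(X_1),\dots,\mathbf{1}_{S_i}(X_m))\in\{0,1\}^m$. For a fixed pair $i\ne j$ the probability that $S_i$ and $S_j$ induce the same pattern is $(1-p(S_i\triangle S_j))^m\le (1-\epsilon)^m\le e^{-\epsilon m}$. A union bound over the $\binom{M}{2}$ pairs shows that, as long as $\binom{M}{2}e^{-\epsilon m}<1$, there is a realization of $X_1,\dots,X_m$ on which all $M$ patterns are distinct. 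On the other hand, the Sauer-Shelah lemma (as already stated in the Lemma preceding Theorem~\ref{th:vcrel}) bounds the number of distinct patterns by $S_\cF(m)\le (em/V_\cF)^{V_\cF}$ whenever $m\ge V_\cF$. Combining these, whenever $m$ is chosen so that $\binom{M}{2}e^{-\epsilon m}<1$, we must have $M\le (em/V_\cF)^{V_\cF}$.

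The remaining step is to choose $m$ to close the loop. Picking $m=\lceil (V_\cF/\epsilon)\log(CV_\cF/\epsilon)\rceil$ for a suitable absolute constant $C$ and then using $\binom{M}{2}<M^2/2$, one arrives at an inequality of the form $M\le c\, V_\cF(4e/\epsilon)^{V_\cF}$ for a universal constant $c$; since $N_{\cF,\epsilon}\le D(\cF,p,\epsilon)$ for every $p$, taking the supremum over $p$ gives the claimed bound. The main obstacle, and the only genuinely delicate part, is balancing $m$ so that $(em/V_\cF)^{V_\cF}$ absorbs the logarithmic factor created by the condition $\binom{M}{2}e^{-\epsilon m}<1$; this is standard but must be done carefully to recover the clean $V_\cF(4e/\epsilon)^{V_\cF}$ shape rather than a bound carrying an extra $\log(1/\epsilon)$ factor. (A slightly weaker version with that log factor drops out almost trivially; the tight form uses Haussler's refinement, which replaces the union bound by a more careful packing argument inside the samples.)
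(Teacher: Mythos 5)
You should first note that the paper does not prove this statement at all: it is quoted from \cite{vaart1996weak} (it is Haussler's packing bound, Theorem 2.6.4 there), so the only question is whether your argument actually establishes the inequality as stated. It does not. The double-counting scheme you describe is Dudley's argument, and it inherently produces an extra $\log(1/\epsilon)$ factor inside the $V_\cF$-th power. Concretely, the requirement $\binom{M}{2}e^{-\epsilon m}<1$ forces $m\gtrsim (2/\epsilon)\ln M$, and feeding the smallest admissible $m$ into Sauer--Shelah gives $M\le (em/V_\cF)^{V_\cF}$ with $m\approx (2/\epsilon)\ln M$; the self-consistent solution of this pair of inequalities is $M\le\bigl(c\,\epsilon^{-1}\log(1/\epsilon)\bigr)^{V_\cF}$, not $c\,V_\cF(4e/\epsilon)^{V_\cF}$. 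Your proposed choice $m=\lceil (V_\cF/\epsilon)\log(C V_\cF/\epsilon)\rceil$ does not repair this: with that $m$ the Sauer bound reads $\bigl((e/\epsilon)\log(C V_\cF/\epsilon)\bigr)^{V_\cF}$, which still carries the logarithm and exceeds $(4e/\epsilon)^{V_\cF}$ for small $\epsilon$, and no balancing of $m$ can help, since enlarging $m$ only weakens the Sauer bound while shrinking it violates the union-bound condition. Your own parenthetical concedes this and defers to ``Haussler's refinement,'' but that refinement (replacing the union bound over pairs by a genuinely different packing argument inside the sample) is precisely the content of the theorem beyond the easy log-weaker version, and it is not supplied; so as written the proposal only proves $N_{\cF,\epsilon}\le (c\,\epsilon^{-1}\log(1/\epsilon))^{V_\cF}$.

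Two smaller remarks. The reduction from covering to packing (a maximal packing is a cover) and the uniformity over the measure $p$ are correct and unobjectionable. Also, for how the paper actually uses Theorem~\ref{th:covnumbound} --- only through $\log N^s_{\cF,\epsilon}$ with $\epsilon=\Theta(\advfrac^2/\bsize)$ in the proof of Lemma~\ref{lem:prophold} --- the weaker Dudley-type bound would suffice up to constants, since $\log\bigl((c\epsilon^{-1}\log(1/\epsilon))^{V_\cF}\bigr)=\cO(V_\cF\log(\bsize/\advfrac))$; but that is an observation about the application, not a proof of the statement as stated.
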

Combining the theorem and Lemma~\ref{lem:covnumrel}, we obtain the following corollary.
\begin{corollary}\label{cor:intcovbou}
\[
N^s_{\cF,\epsilon }\le c  V_\cF  \Big(\frac{8e}{\epsilon}\Big)^{ V_\cF }.
\]
\end{corollary}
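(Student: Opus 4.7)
The plan is to simply chain the two preceding results. By Lemma~\ref{lem:covnumrel}, for every probability measure $\targetdis$ we have $N^s(\cF,\targetdis,\epsilon)\le N(\cF,\targetdis,\epsilon/2)$. Taking the supremum over $\targetdis$ on both sides preserves the inequality, yielding
\[
N^s_{\cF,\epsilon} \;=\; \sup_\targetdis N^s(\cF,\targetdis,\epsilon) \;\le\; \sup_\targetdis N(\cF,\targetdis,\epsilon/2) \;=\; N_{\cF,\epsilon/2}.
\]

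Then I would plug into Theorem~\ref{th:covnumbound} with $\epsilon/2$ substituted for $\epsilon$, which gives $N_{\cF,\epsilon/2}\le c\,V_\cF(4e/(\epsilon/2))^{V_\cF} = c\,V_\cF(8e/\epsilon)^{V_\cF}$. Combining the two displays establishes the claim. There is no real obstacle here: the corollary is literally the composition of Lemma~\ref{lem:covnumrel} with Theorem~\ref{th:covnumbound}, so the proof is a one-line chain of inequalities.
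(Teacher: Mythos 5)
Your proposal is correct and is exactly the paper's argument: the paper obtains the corollary by combining Lemma~\ref{lem:covnumrel} (giving $N^s_{\cF,\epsilon}\le N_{\cF,\epsilon/2}$ after taking the supremum over distributions) with Theorem~\ref{th:covnumbound} applied at radius $\epsilon/2$. Nothing further is needed.
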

For any distribution $\targetdis$ and family $\cF $, let $\mathcal{C}^s(\cF,\targetdis,\epsilon )$ be any 
minimal-size $\epsilon$-self-cover for $\cF $ of size $\le N^s_{\cF,\epsilon }$.

\section{A framework for distribution estimation from corrupted sample
batches}\label{sec:algmajg}
We develop a general framework to learn $\targetdis$ in $\cF$ distance and derive Theorem~\ref{th:main1}.
Recall that the $\cF $ distance between two distributions $p$ and $q$ is 
\[
||p-q||_{\cF }
=
\sup_{\alphsubset\in \cF } |p(\alphsubset)-q(\alphsubset)|.  
\]

The algorithms presented enhance the algorithm of~\cite{jain2019robust}, developed for $\cF=2^{\Samplespace}$ of a discrete domain $\Samplespace=[k]$, to any VC-family $\cF$ of subsets of any sample space $\Samplespace$. We retain the part of the analysis and notation that are common in our enhanced algorithm and the one presented in~\cite{jain2019robust}.

At a high level, we remove the adversarial, or "outlier" batches, and return a sub-collection $\Bsc\subseteq\allbatches$ of batches whose empirical distribution $\VUempprob$ is close to $p$ in $\cF$ distance. 
The uniform deviation inequality in VC theory states
that the sub-collection $\bgood$ of good batches has
empirical distribution $\bar{\targetdis}_{\bgood}$ 
that approximates $p$ in $\cF$ distance, 
thereby ensuring the existence of such a sub-collection.

The family $\cF$ can be potentially uncountable, hence
learning a distribution to a given $\cF$ distance may entail
simultaneously satisfying infinitely many constrains. 
To decrease the constraints to a finite number, 
Corollary~\ref{cor:intcovbou} shows that for any 
distribution and any $\epsilon> 0$, there exists a finite
$\epsilon$-cover of $\cF$ w.r.t this distribution. 

%Our goal is to find an $\epsilon$-cover of $\cF$ w.r.t. the target distribution $\targetdis$. Yet this distribution is unknown to us, and
%its samples provided in the collection $B$ of batches are corrupted by an adversary. 
%But the question is what distribution can be used to construct the $\epsilon$-cover $\cC$ of $\cF$ so that a small $\cC$-distance $||\VUempprob-\targetdis||_\cC$ would imply a small $\cF$-distance $||\VUempprob-\targetdis||_\cF$.

Our goal therefore is to find an $\epsilon$-cover $\cC$ of $\cF$ w.r.t.\ an appropriate distribution such that if for some sub-collection $\Bsc$ the empirical distribution $\VUempprob$ approximates $p$ in $\cC$-distance it would also approximate $p$ in $\cF$-distance. The two natural distribution choices are the target distribution $p$ or empirical distribution from its samples. Yet the distribution $p$ is unknown to us, and
its samples provided in the collection $B$ of batches are corrupted by an adversary.

The next theorem overcomes this challenge by 
showing that although the collection $\allbatches$ includes adversarial batches, for small enough $\epsilon$, for any $\epsilon$-cover $\cC$ of $\cF$ w.r.t. the empirical distribution $\empprob{\allbatches}$,  
a small $\cC$-distance $||\VUempprob-\targetdis||_\cC$, between $\targetdis$ and the empirical distribution induced by a sub-collection $\Bsc\subseteq\allbatches$ would imply a small $\cF$-distance $||\VUempprob-\targetdis||_\cF$, between the two distributions.

%The next theorem overcomes this challenge by 
%showing that although the collection $\allbatches$ includes adversarial batches, for small enough $\epsilon$, any $\epsilon$-cover $\cC$ of $\cF$ w.r.t. the empirical distribution $\empprob{\allbatches}$ approximates $\cF$ w.r.t. the target distribution $\targetdis$.
%In particular, the theorem shows that 
%a small $\cC$-distance $||\VUempprob-\targetdis||_\cC$, between $\targetdis$ and the empirical distribution induced by a sub-collection $\Bsc\subseteq\allbatches$ would imply a small $\cF$-distance $||\VUempprob-\targetdis||_\cF$, between the two distributions.

%Note that the theorem allows for the possibility that the subsets in the $\epsilon$-cover $\cC$ of $\cF$ can be from a family of subsets $\cF'$, larger than $\cF$.

Note that the theorem allows the $\epsilon$-cover $\cC$ of $\cF$ 
to include sets in the subset family $\cF'$ containing $\cF$.

\begin{theorem}~\label{th:relate}
For $\btotal\ge \cO(\frac{V_{\cF'}\log(\bsize/\advfrac)+\log(1/\delta)}{\advfrac^2})$ and $\epsilon \le \frac{\advfrac}{\sqrt{\bsize}}$, let $\mathcal{C}\subseteq \cF'$ be an $\epsilon$-cover of family $\cF $ w.r.t. the empirical distribution $\empprob{\allbatches}$. Then with probability $\ge 1-\delta$, for any sub-collection of batches  $\Bsc\subseteq B$ of size $|\Bsc|\ge \btotal/2$,
\[
||\bar\targetdis_{\Bsc}-\targetdis||_\cF \le ||\bar\targetdis_{\Bsc}-\targetdis||_\cC+\frac{5\advfrac}{\sqrt{\bsize}}.
\]
\end{theorem}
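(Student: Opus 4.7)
}
Fix an arbitrary $S\in\cF$. Since $\mathcal C\subseteq\cF'$ is an $\epsilon$-cover of $\cF$ with respect to $\bar\targetdis_{\allbatches}$, there exists $S'\in\mathcal C$ with $\bar\targetdis_{\allbatches}(S\triangle S')\le\epsilon$. The plan is to pivot from $S$ to $S'$ via the triangle inequality
\[
|\bar\targetdis_{\Bsc}(S)-\targetdis(S)|\le \underbrace{|\bar\targetdis_{\Bsc}(S)-\bar\targetdis_{\Bsc}(S')|}_{T_1}+\underbrace{|\bar\targetdis_{\Bsc}(S')-\targetdis(S')|}_{T_2}+\underbrace{|\targetdis(S')-\targetdis(S)|}_{T_3},
\]
where $T_2\le\|\bar\targetdis_{\Bsc}-\targetdis\|_\cC$ by definition, and the remaining task is to show that $T_1,T_3=O(\advfrac/\sqrt\bsize)$ with constants summing to $\le 5$. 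Taking the supremum over $S\in\cF$ then yields the claim.

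\paragraph{Bounding $T_1$.} Since $|\bar\targetdis_{\Bsc}(S)-\bar\targetdis_{\Bsc}(S')|\le\bar\targetdis_{\Bsc}(S\triangle S')$, I would exploit that $\bar\targetdis_{\allbatches}$ averages nonnegative contributions $\bempprob(S\triangle S')$ over all $\btotal$ batches, whereas $\bar\targetdis_{\Bsc}$ averages only over the $|\Bsc|\ge\btotal/2$ batches in $\Bsc$. This yields the deterministic inequality
\[
\bar\targetdis_{\Bsc}(S\triangle S')\le \frac{\btotal}{|\Bsc|}\,\bar\targetdis_{\allbatches}(S\triangle S')\le 2\epsilon\le \frac{2\advfrac}{\sqrt\bsize}.
\]

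\paragraph{Bounding $T_3$.} Here $|\targetdis(S')-\targetdis(S)|\le\targetdis(S\triangle S')$, and since symmetric differences live in the family $\cF''\triangleq\{T\triangle T':T,T'\in\cF'\}$ whose VC dimension is $O(V_{\cF'})$, I would apply Theorem~\ref{th:vcrel} (relative deviation) to $\bar\targetdis_{\bgood}$, which is the empirical distribution of $|\bgood|\bsize\ge(1-\advfrac)\btotal\bsize$ i.i.d.\ samples from $\targetdis$. With tolerance $\epsilon'=\Theta(\advfrac/\sqrt\bsize)$ and using the shatter-coefficient bound, the event
\[
\targetdis(T)\le \bar\targetdis_{\bgood}(T)+\epsilon'\sqrt{\targetdis(T)}\quad\text{ for all }T\in\cF''
\]
holds with probability $\ge 1-\delta$, precisely under the hypothesis $\btotal\ge\tilde\cO(V_{\cF'}/\advfrac^2)$. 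Solving this quadratic-in-$\sqrt{\targetdis(T)}$ inequality gives $\targetdis(T)\le 2\bar\targetdis_{\bgood}(T)+2(\epsilon')^2$. Combining this with the same averaging trick as in $T_1$, namely $\bar\targetdis_{\bgood}(S\triangle S')\le\frac{1}{1-\advfrac}\bar\targetdis_{\allbatches}(S\triangle S')\le 2\epsilon$, yields $T_3\le 4\epsilon+2(\epsilon')^2=O(\advfrac/\sqrt\bsize)$.

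\paragraph{Main obstacle.} The delicate point is calibrating $\epsilon'$ and the number-of-batches hypothesis so that the relative VC inequality simultaneously (i) holds uniformly over the symmetric-difference family $\cF''$ (which is only implicit; we never construct it algorithmically), and (ii) gives a bound of the same order $\advfrac/\sqrt\bsize$ as $T_1$. The relative rather than the absolute VC inequality is essential: the symmetric differences $S\triangle S'$ typically have $\targetdis$-mass as small as $\epsilon\sim\advfrac/\sqrt\bsize$, and an additive uniform deviation would only give $O(1/\sqrt{\btotal\bsize})$-type slack, which cannot be sharpened to $\advfrac/\sqrt\bsize$ without exploiting the square-root dependence on $\targetdis(T)$. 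Once this calibration is set and the constants are carefully tracked, summing the bounds on $T_1$ and $T_3$ gives the constant $5$ in the statement.
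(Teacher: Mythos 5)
Your proposal follows essentially the same route as the paper: the same triangle-inequality pivot through a cover element $S'$, the same averaging argument $\bar\targetdis_{\Bsc}(S\triangle S')\le\frac{\btotal}{|\Bsc|}\bar\targetdis_{\allbatches}(S\triangle S')\le 2\epsilon$ (applied once to $\Bsc$ and once to $\bgood$), and the same relative-deviation VC inequality over the symmetric-difference family (the paper's Lemma~\ref{lem:impofrd}) to control $\targetdis(S\triangle S')$. The only cosmetic difference is in the last step: the paper simply uses $\targetdis(S\triangle S')-\bar\targetdis_{\bgood}(S\triangle S')\le\frac{\advfrac}{\sqrt{\bsize}}\sqrt{\targetdis(S\triangle S')}\le\frac{\advfrac}{\sqrt{\bsize}}$, which yields $2\epsilon+2\epsilon+\epsilon'\le\frac{5\advfrac}{\sqrt{\bsize}}$ exactly, whereas your quadratic-solving step gives $4\epsilon+2(\epsilon')^2$ for that term and hence a slightly worse constant (about $6$) unless you replace it by the direct bound.
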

\begin{proof}
Consider any batch sub-collection $\Bsc\subseteq \allbatches$.
%Let $\alphsubset,\, \alphsubset'$ be such a pair of subsets.
For every $S,S'\subseteq\Samplespace$, by the triangle inequality,
\begin{align}
    |\VUempprob(\alphsubset)-\subsetprobtarget|& =
    \left|\Big(\VUempprob(\alphsubset^\prime)+\VUempprob(S\setminus \alphsubset^\prime)-\VUempprob(\alphsubset^\prime\setminus S)\Big)-\Big(\targetdis(\alphsubset^\prime)+ \targetdis(S\setminus \alphsubset^\prime)-\targetdis(\alphsubset^\prime\setminus S)\Big)\right|\nonumber\\
    & \le |\VUempprob(\alphsubset^\prime)-\targetdis(\alphsubset^\prime)|+\VUempprob(S\setminus \alphsubset^\prime)+\VUempprob(\alphsubset^\prime\setminus S)+ \targetdis(S\setminus \alphsubset^\prime)+\targetdis(\alphsubset^\prime\setminus S)\nonumber\\
    & = |\VUempprob(\alphsubset^\prime)-\targetdis(\alphsubset^\prime)|+\VUempprob(S\triangle \alphsubset^\prime)+\targetdis(S\triangle \alphsubset^\prime)\label{eq:e1}.
\end{align}
Since $\mathcal{C}$ is an $\epsilon$-cover  w.r.t. $\empprob{\allbatches}$, for every $\alphsubsetdef$ there is an $ \alphsubset^\prime\in \mathcal{C}$ such that $ \empprob{\allbatches}(S\triangle \alphsubset^\prime)\le \epsilon$.
For such pairs, we bound the second term on the right in the above equation. 
\begin{align}
\VUempprob(S\triangle \alphsubset^\prime)
&=\frac1{|\Bsc|\bsize}\sum_{b\in \Bsc}\sum_{i\in [n]}\mathbf{1}_{S\triangle \alphsubset^\prime}(X_{i}^b)\nonumber\\
&\le\frac1{|\Bsc|\bsize}\sum_{b\in B}\sum_{i\in [n]}\mathbf{1}_{S\triangle \alphsubset^\prime}(X_{i}^b)\nonumber\\
&= \frac{|B|}{|\Bsc|}\cdot\frac1{|B|\bsize}\sum_{b\in B}\sum_{i\in [n]}\mathbf{1}_{S\triangle \alphsubset^\prime}(X_{i}^b)\nonumber\\
&= \frac{m}{|\Bsc|} {\empprob{\allbatches}(S\triangle \alphsubset^\prime)}\le \frac{\btotal\epsilon}{|\Bsc|}.\label{eq:e2}
\end{align}
Choosing $\Bsc=\bgood$ in the above equation and using $\bgood = (1-\advfrac)\btotal\ge \btotal/2$ gives,
\begin{align}
   \empprob{\bgood}(S\triangle \alphsubset^\prime) < 2\epsilon .\label{eq:e3}
\end{align}
Then 
\begin{align*}
\targetdis(S\triangle \alphsubset^\prime) &\le| \targetdis(S\triangle \alphsubset^\prime)-\empprob{\bgood}(S\triangle \alphsubset^\prime)|+\empprob{\bgood}(S\triangle \alphsubset^\prime)\\
&\overset{\text{(a)}}\le \sup_{S,\,S'\in \cF'} | \targetdis(S\triangle \alphsubset^\prime)-\empprob{\bgood}(S\triangle \alphsubset^\prime)|+ 2\epsilon\\
&\overset{\text{(b)}}\le 2\epsilon+\frac{\advfrac}{\sqrt{\bsize}},
\end{align*}
with probability $\ge 1-\delta$, here (a) used equation~\eqref{eq:e3} and (b) follows from Lemma~\ref{lem:impofrd}.
Combining equations~\eqref{eq:e1},~\eqref{eq:e2} and the above equation completes the proof. 
\end{proof}

The above theorem reduces the problem of estimating in $\cF$ distance to finding a sub-collection $\Bsc\subseteq\allbatches$ of at least $\btotal/2$ batches such that for an $\epsilon$-cover $\cC$ of $\cF$ w.r.t. distribution $\empprob{\allbatches}$, the distance $||\VUempprob-\targetdis||_\cC$ is small. If we choose a finite $\epsilon$-cover $\cC$ of $\cF$, the theorem would ensure that the number of constrains is finite.

To find a sub-collection of batches as suggested above, we show that
with high probability, certain concentration properties hold for all
subsets in $\cF'$. 
Note that the cover $\cC$ is chosen after seeing the samples in
$\allbatches$, but since $\cC\subseteq\cF'$, the
results also hold for all subsets in $\cC$.

The following discussion develops some notation and intuitions that leads to these properties.  

We start with the following observation.
Consider a subset $\alphsubset\in \cF'$. For evey good batch $b\in \bgood$, $\bempprob$ has a sub-gaussian distribution  $\text{subG}(\subsetprobtarget,\frac{1}{4{\bsize}})$ with variance $\var{\subsetprobtarget}$. Therefore, most of the good batches $b\in\bgood$ assign the empirical probability  $\bempprob \in \subsetprobtarget \pm \tilde O(1/\sqrt{\bsize})$. Moreover, the empirical mean and variance of $\bempprob$ over $b\in\bgood$ converges to the expected values $\subsetprobtarget$ and $\var{\subsetprobtarget}$, respectively. 

In addition to the good batches, the collection $\allbatches$ of batches 
also includes an adversarial sub-collection $\badv$ of batches that constitute up to a $\advfrac-$fraction of $\allbatches$.
If the difference between $\subsetprobtarget$ 
and the average of $\bempprob$ over all adversarial batches $b\in\badv$ is $\le \tilde O(\frac{1}{\sqrt{\bsize}})$, 
namely comparable to the standard deviation of $\bempprob$ for the good batches $b\in\bgood$, then the adversarial batches can change the overall mean of empirical probabilities $\bempprob$ by at most $\tilde O(\frac{\advfrac}{\sqrt{\bsize}})$, which is within our tolerance.
Hence, the mean of $\bempprob$ will deviate significantly from $\subsetprobtarget$ only in the presence of a large number of adversarial batches $b\in\badv$ whose empirical probability $\bempprob$ differs from $\subsetprobtarget$ by $\gg\tilde O(\frac{1}{\sqrt{\bsize}})$.
%, displayed by the good batches. 

%We want to find such subsets $\alphsubset$ to identify the adversarial batches efficiently.

%In this section, we present a computationally efficient algorithm to recover the distribution $\targetdis$. 
%To separate the statistical part from the analysis of the algorithm, we assume that the sub-collection of good batches $\bgood$ satisfy certain deterministic conditions. 
%These conditions are shown to hold with a high probability for sub-collection of good batches in $\bgood$. Nothing is assumed about the adversarial batches, except that thy form a $\le\advfrac$ fraction of the overall batches.
%Our randomized algorithm~\ref{alg2} recovers the distribution $\targetdis$ with high probability.
%Like the simple estimate, the proposed algorithm~\ref{alg1} estimates the probability of each symbol by the empirical count, except that instead of considering all batches, the count is taken over an "appropriate" sub-collection $\Bsc_{f} \subseteq\allbatches$ of the batches.

%Our algorithm assumes a number of deterministic conditions holds simultaneously for a collection of good batches $\bgood$.
%To state these conditions, and the algorithm later, we state the following definitions.
To quantify this effect, for a subset $\alphsubset\in\cF'$ let
\[
\med \triangleq \text{median}\{\bempprob : b\in \allbatches\}
\]
be the median empirical probability of $\alphsubset$ over all batches. 
Property~\ref{con} shows that $\med$ is a good approximation of
$\subsetprobtarget$.
Define the \emph{corruption score}
%Because good batches are in majority in $\allbatches$, and for most of the good batches $|\bempprob - \subsetprobtarget| \le O(\sqrt{1/n})$, we get
%\[
%|\med-\subsetprobtarget|\le O(\sqrt{1/n}).
%\]
of batch $b$ for $\alphsubset$ to be
\[
\indcorruption
\triangleq
\begin{cases}
0 &\text{if }\ |\bempprob- \med| \le 3 \sqrt{\frac{\ln (6e/\advfrac)}\bsize} ,\\
(\bempprob- \med)^2& \text{otherwise}.
\end{cases}
\]
The preceding discussion shows that the corruption score of most good batches for a fixed subset $\alphsubset$ is zero, and that adversarial batches that may significantly change the overall mean of empirical probabilities have high corruption score.

The \emph{corruption score} of a sub-collection $\Bsc$ for a subset $\alphsubset$ is the sum of the \emph{corruption score} of its batches,
\[
\genoverallcorruptionsub  \triangleq \sum_{b\in \Bsc} \indcorruption.
\]
A high corruption score of $\Bsc$ for a subset $\alphsubset$ indicates that  $\Bsc$ has many batches $b$ with large difference $|\bempprob- \med|$.
Finally, the \emph{corruption score} of a sub-collection $\Bsc$ for a family of subsets $\cF''\subseteq\cF'$ is the largest corruption score of any $\alphsubset\in\cF''$, 
\[
{\corruption_{\Bsc}(\alphabet'')} \triangleq\max_{\alphsubset\in\cF''}\genoverallcorruptionsub .
\]

Note that removing batches from a sub-collection reduces its corruption. 
We can simply make corruption zero by removing all batches, but we would lose all the information as well. 
%In fact the goal of making corruption zero is not ideal as in the example we discussed at the start of this section, for each batch $|\bempprob-\subsetprobtarget|$ for at least some subsets, hence this goal is bound to loose all batches. 
As described later in this section, the algorithm reduces the corruption below a threshold by removing a few batches while not sacrificing too many good batches in the process.  
%The set $U(\alphsubset)$ is a collection of \emph{suspicious batches} based on the distance of empirical probability $\bempprob$ from the median of the estimates $\{\bempprob: b\in \allbatches\}$ for a set $\alphsubset$.

Recall that $\allbatches$ is a collection of $\btotal$ batches, each containing $\bsize$ samples, and that a sub-collection $\bgood \subseteq \allbatches$ consists
of $\ge (1-\advfrac)\btotal$ good batches where all samples are drawn from the target distribution $\targetdis$.
%, without any assumptions on the adversarial batches 
We show that regardless of the samples in adversarial batches, 
with high probability, $\allbatches$  satisfies the following three \emph{concentration properties}. 
\begin{enumerate}
\item\label{con} For all $\alphsubsetdef'$, the median of the estimates $\sets{\bempprob:b\in\allbatches}$
approximates $\subsetprobtarget$ well,
\[
|\med-\subsetprobtarget|\le \sqrt{\ln (6)/\bsize}.
\]
\item\label{con2} 
For every sub-collection $\Gsc\subseteq\bgood$ containing a large portion of the good batches, $|\Gsc| \ge (1-\advfrac/6)\bgoodsize$, and for all $\alphsubsetdef'$,
the empirical mean and variance of $\bempprob$ estimate $\subsetprobtarget$ and $\var{\subsetprobtarget}$ well,
\begin{align*}
&|\UGempprob - \subsetprobtarget | \le  \frac \advfrac 2\sqrt{\frac{\ln (6e/\advfrac) } {\bsize}},
\end{align*}
and
\begin{align*}
\Big|\frac1{|\Gsc |}  \sum_{b\in \Gsc } (\bempprob -   \subsetprobtarget)^2  - \var{\subsetprobtarget}\Big|\le {\frac{ 6\advfrac\ln (\frac{6e}\advfrac) } {\bsize}}.
  %\label{eq:goodvardiff}
\end{align*}
\item\label{con3} The corruption score of the collection $\bgood$ of good batches for family $\cF'$ is small,
    %\item\label{con3} The total corruption score of the good batches is small, namely 
\begin{align*}
{\corruption_{\bgood}(\alphabet')} \le {\frac{ \advfrac \btotal\ln ({6e}/\advfrac) } {\bsize}}\triangleq \kappa_G.
\end{align*}
\end{enumerate}

\begin{lemma}\label{lem:prophold}
Let $\cF'$ have finite VC dimension and 
$\btotal \ge  \cO(\frac{V_{\cF'}\log(\bsize/\advfrac)+\log(1/\delta)}{\advfrac^2})$.
With probability $\ge 1-\delta$ the three 
essential properties hold.
\end{lemma}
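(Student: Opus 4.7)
The plan is to establish each of the three concentration properties with probability $\ge 1-\delta/3$ and then union-bound. For each property, the pattern is the same: fix a subset $S\in\cF'$, apply standard scalar concentration inequalities to the iid samples within good batches, and then lift to a uniform statement over $\cF'$ via (i) a finite $\epsilon$-self-cover $\mathcal{C}^s(\cF',p,\epsilon)$ of size $O((1/\epsilon)^{V_{\cF'}})$ from Corollary~\ref{cor:intcovbou}, and (ii) a uniform VC deviation bound (Theorem~\ref{th:vcrel}) applied to the class of symmetric differences $\sets{S\triangle S':\ S\in\cF',\ S'\in\mathcal{C}}$, whose VC dimension is still $O(V_{\cF'})$. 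The right choice of $\epsilon$ will be $\Theta(1/n)$ or $\Theta(\alpha/\sqrt n)$, which after taking $\log(1/\epsilon)$ contributes only the $\log(n/\alpha)$ factor in the bound on $m$; the $1/\alpha^2$ factor comes from the Chernoff/Bernstein concentration that must resolve fractions as small as $\alpha/6$ of the good batches.

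For Property~\ref{con}, fix $S\in\cF'$: by Hoeffding each good batch satisfies $|\bempprob-p(S)|\le\sqrt{\ln 6/n}$ with probability $\ge 1-1/18$, so a Chernoff bound gives at least $(5/6)|\bgood|\ge 0.5m$ good batches in this window, forcing the median over all of $\allbatches$ into it; cover plus union bound extend this to all $S\in\cF'$. For Property~\ref{con2}, I would first prove that for every $S\in\cF'$, $|\bgood|^{-1}\sum_{b\in\bgood}(\bempprob-p(S))^2$ is within $O(\alpha\ln(6e/\alpha)/n)$ of $\var{p(S)}$ (Bernstein on sub-gaussian random variables, uniformized via the cover); this variance bound then immediately yields both stated estimates, because for any $\Gsc$ of size $(1-\alpha/6)|\bgood|$ the ``missing'' contribution $\sum_{b\in\bgood\setminus\Gsc}|\bempprob-p(S)|$ is bounded by Cauchy--Schwarz as $\sqrt{(\alpha/6)|\bgood|\cdot\sum_{b\in\bgood}(\bempprob-p(S))^2}=O(\alpha|\bgood|\sqrt{\ln(6e/\alpha)/n})$, so dividing by $|\Gsc|$ gives the desired $\tfrac{\alpha}{2}\sqrt{\ln(6e/\alpha)/n}$ mean deviation; the variance claim is the intermediate bound itself combined with~\eqref{eq:fineq}.

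For Property~\ref{con3}, note that since Property~\ref{con} gives $|\med-p(S)|\le\sqrt{\ln 6/n}$, a good batch contributing to $\genoverallcorruptionsub$ must satisfy $|\bempprob-p(S)|\ge 2\sqrt{\ln(6e/\alpha)/n}$, which by Hoeffding happens with per-batch probability at most $2e^{-8\ln(6e/\alpha)}=O((\alpha/6e)^8)$ and whose squared deviation is at most $1$. A Bernstein-type bound gives $\sum_{b\in\bgood}\indcorruption \le O(\alpha m\ln(6e/\alpha)/n)=\kappa_G$ with the required failure probability, once we union-bound over the cover and absorb the slack $\sqrt{\ln(6e/\alpha)/n}$ into the threshold $3\sqrt{\ln(6e/\alpha)/n}$ so that a cover element $S'$ with $p(S\triangle S')\le\epsilon$ certifies the truncation for $S$ itself. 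The main obstacle I anticipate is this last step: the corruption score is a nonlinear truncation that depends on $\med$, which itself varies with $S$, and passing from a cover element to an arbitrary $S\in\cF'$ requires that the chosen thresholds (constant $3$ versus the $2$ appearing in the Hoeffding calculation) leave enough room for both the median shift and the VC deviation of $\bempprob(S\triangle S')$; tuning $\epsilon\approx\alpha/\sqrt n$ is what makes everything fit within the stated $m\ge\tOrder(V_{\cF'}/\alpha^2)$ budget.
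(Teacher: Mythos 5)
Your high-level skeleton is the same as the paper's: reduce to a finite $\epsilon$-self-cover via Corollary~\ref{cor:intcovbou}, prove the statements on the cover by per-subset concentration over the good batches, and pass from cover elements to arbitrary $S\in\cF'$ using the relative-deviation inequality (Theorem~\ref{th:vcrel}) applied to the class of symmetric differences (this is exactly Lemma~\ref{lem:impofrd}; the paper delegates the per-subset concentration to the finite-collection analysis of \cite{jain2019robust}). However, your derivation of the mean estimate in Property~\ref{con2} has a genuine quantitative gap. You bound the removed mass by Cauchy--Schwarz against the \emph{full} good-collection sum of squares: $\sum_{b\in\bgood\setminus\Gsc}|\bar\mu_b(S)-p(S)|\le\sqrt{(\advfrac/6)\bgoodsize\cdot\sum_{b\in\bgood}(\bar\mu_b(S)-p(S))^2}$. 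Since $\sum_{b\in\bgood}(\bar\mu_b(S)-p(S))^2\approx\bgoodsize\,\var{p(S)}$ and $\var{p(S)}$ can be as large as $1/(4\bsize)$ (e.g.\ $p(S)=1/2$), this yields only $O\big(\bgoodsize\sqrt{\advfrac/\bsize}\big)$, not the claimed $O\big(\advfrac\,\bgoodsize\sqrt{\ln(6e/\advfrac)/\bsize}\big)$; for small $\advfrac$ these differ by roughly a $1/\sqrt{\advfrac}$ factor, so the stated $\tfrac{\advfrac}{2}\sqrt{\ln(6e/\advfrac)/\bsize}$ deviation does not follow. The rate $\advfrac\sqrt{\ln(1/\advfrac)/\bsize}$ is only obtainable with tail control of the kind the paper imports from \cite{jain2019robust} (its conditions~\eqref{eq:lem3} and~\eqref{eq:lem4}): at most $O(\advfrac)\bgoodsize$ good batches deviate beyond $\Theta\big(\sqrt{\ln(1/\advfrac)/\bsize}\big)$, and any $O(\advfrac)\bgoodsize$-sized sub-collection has total squared deviation $O\big(\advfrac\bgoodsize\ln(1/\advfrac)/\bsize\big)$; one then splits the removed batches into in-threshold batches (each contributing $O(\sqrt{\ln(1/\advfrac)/\bsize})$) and outliers (Cauchy--Schwarz applied only to those).

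Two further points. First, you correctly flag that extending the median-dependent, truncated corruption score from cover elements to all of $\cF'$ is the delicate step, but you do not resolve it; the paper sidesteps it by never extending the corruption score directly: it extends the median-free conditions~\eqref{eq:lem1}--\eqref{eq:lem4} (stated in terms of $\bar\mu_b(S)-p(S)$) using the pointwise inequality $\big|(\bar\mu_b(S)-p(S))-(\bar\mu_b(S')-p(S'))\big|\le\bar\mu_b(S\triangle S')+\cO(\epsilon)$ together with Lemma~\ref{lem:impofrd} (which converts $p(S\triangle S')\le\epsilon$ into $\bar p_{\bgood}(S\triangle S')=\cO(\epsilon)$), and only then invokes the reduction of \cite{jain2019robust} from Properties~\ref{con}--\ref{con3} to those conditions. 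Second, your proposed $\epsilon\in\{\Theta(1/\bsize),\Theta(\advfrac/\sqrt\bsize)\}$ is too coarse: in the variance and corruption extensions the cover error enters as $\cO(\sqrt{\epsilon/\bsize})$ and must be below $\cO(\advfrac\ln(1/\advfrac)/\bsize)$, which forces $\epsilon=\cO(\advfrac^2/\bsize)$ (the paper's choice); this only changes logarithmic factors in $\btotal$, but as written your parameters do not make the variance bound close.
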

These properties extend the same properties in~\cite{jain2019robust} (Section 2) from subsets of discrete domains
to families of subset with finite VC dimension in any euclidean space.

To prove that properties hold with high probability, we first show that for an appropriately chosen epsilon, they hold for all subsets in a minimal-size $\epsilon$-cover of $\cF'$ w.r.t. the target distribution $\targetdis$.
Since the cover has finite size, a proof similar to the one in~\cite{jain2019robust}, for discrete domains, shows that the properties hold for all subsets in the cover. This uses the observation that for $b\in\bgood$, $\bempprob$ has a sub-gaussian distribution $\text{subG}(\subsetprobtarget,\frac{1}{4\bsize})$, and variance $\var{\subsetprobtarget}$. 
We then use Lemma~\ref{lem:impofrd} to extend the properties from subsets in the cover to all subsets in class $\cF'$ 
The proof is in Appendix~\ref{sec:goodprop}.

The remainder of this section assumes that the properties in the above lemma hold.

For any $\cC\in \cF'$ conditions~\ref{con},~\ref{con2} and~\ref{con3} holds for all subsets in $\cC$ w.h.p.

Next a simple adaptation of the Batch Deletion algorithm in~\cite{jain2019robust} is used to find a sub collection of batches $\Bsc$ such that $||\VUempprob - \targetdis ||_{\cC}$ is small.

For any $\cC\subseteq\cF'$, the next Lemma bounds $\cC$-distance of empirical distribution $\VUempprob$ in terms of the corruption of $\Bsc$ for sub-family $\cC$. 

\begin{lemma}\label{lem:corruptionandell1}
Suppose Properties~\ref{con}-~\ref{con3} hold. Then for any $\Bsc$ such that $|\Bsc\cap\bgood| \ge (1 -\frac\advfrac6) \bgoodsize$ and any family $\cC\subseteq\cF'$ such that $\corruption_{\Bsc}(\cC) \le t\cdot\kappa_G$, for some $t\ge 0$,
then
\[
||\VUempprob - \targetdis ||_{\cC}  \le  (5+1.5\sqrt{t}) \advfrac\sqrt{{\frac{\ln (6e/\advfrac)}\bsize} }.
\]
\end{lemma}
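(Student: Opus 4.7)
\textbf{Proof plan for Lemma~\ref{lem:corruptionandell1}.} The plan is to fix an arbitrary $\alphsubset\in\cC$, bound $|\Uempprob(\alphsubset)-\subsetprobtarget|$ by a quantity that depends on $\alphsubset$ only through the corruption score $\genoverallcorruptionsub$, and then take the supremum over $\cC$.

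First I would split $\Bsc$ into its good and adversarial parts, $\Gsc=\Bsc\cap\bgood$ and $\Asc=\Bsc\cap\badv$, and write
\[
\Uempprob(\alphsubset)-\subsetprobtarget
\;=\; \frac{|\Gsc|}{|\Bsc|}\bigl(\UGempprob-\subsetprobtarget\bigr) \;+\; \frac{1}{|\Bsc|}\sum_{b\in\Asc}\bigl(\bempprob-\subsetprobtarget\bigr).
\]
By Property~\ref{con2}, since $|\Gsc|\ge(1-\advfrac/6)\bgoodsize$, the first summand is at most $\frac{\advfrac}{2}\sqrt{\ln(6e/\advfrac)/\bsize}$ in absolute value, uniformly over $\alphsubset\in\cF'\supseteq\cC$.

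For the adversarial summand I would insert and subtract $\med$, using $\bempprob-\subsetprobtarget=(\bempprob-\med)+(\med-\subsetprobtarget)$. Property~\ref{con} controls $|\med-\subsetprobtarget|\le\sqrt{\ln 6/\bsize}$, and since $|\Asc|/|\Bsc|=\cO(\advfrac)$ (because $|\Bsc|\ge(1-\advfrac/6)(1-\advfrac)\btotal\ge\btotal/2$ while $|\Asc|\le\advfrac\btotal$), the $(\med-\subsetprobtarget)$ piece contributes at most a constant times $\advfrac\sqrt{\ln(6e/\advfrac)/\bsize}$. Then I would partition $\Asc$ into
\[
L_A=\bigl\{b\in\Asc:|\bempprob-\med|\le 3\sqrt{\ln(6e/\advfrac)/\bsize}\bigr\},
\qquad H_A=\Asc\setminus L_A.
\]
The low part contributes $\frac{1}{|\Bsc|}\sum_{b\in L_A}|\bempprob-\med|\le 3\,(|\Asc|/|\Bsc|)\sqrt{\ln(6e/\advfrac)/\bsize}=\cO(\advfrac)\sqrt{\ln(6e/\advfrac)/\bsize}$.

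The high part is where the corruption score enters: on $H_A$ we have $\indcorruption=(\bempprob-\med)^2$, and $H_A\subseteq\{b\in\Bsc:\indcorruption>0\}$, so $\sum_{b\in H_A}(\bempprob-\med)^2\le\genoverallcorruptionsub\le t\,\kappa_G$. By Cauchy--Schwarz and $|H_A|\le|\Asc|\le\advfrac\btotal$,
\[
\sum_{b\in H_A}|\bempprob-\med|\;\le\;\sqrt{|H_A|}\,\sqrt{t\,\kappa_G}\;\le\;\sqrt{\advfrac\btotal}\cdot\sqrt{t\,\advfrac\btotal\,\ln(6e/\advfrac)/\bsize}\;=\;\advfrac\btotal\sqrt{t\,\ln(6e/\advfrac)/\bsize}.
\]
Dividing by $|\Bsc|\ge\btotal/2$ yields a contribution of at most $2\advfrac\sqrt{t\,\ln(6e/\advfrac)/\bsize}$. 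Summing the four pieces (with their respective constants $\advfrac/2$, constant$\cdot\advfrac$ from $|\med-\subsetprobtarget|$, constant$\cdot\advfrac$ from $L_A$, and the $\sqrt{t}$ piece), bookkeeping the constants to match $(5+1.5\sqrt{t})\advfrac\sqrt{\ln(6e/\advfrac)/\bsize}$, and taking the supremum over $\alphsubset\in\cC$ completes the argument.

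The main obstacle is bookkeeping: the numerical constants $5$ and $1.5$ are tight, so I would be careful to use $|\Bsc|\ge(1-\advfrac/6)(1-\advfrac)\btotal$ rather than the looser $\btotal/2$ wherever it matters, bound $|\Asc|/|\Bsc|\le\advfrac/((1-\advfrac/6)(1-\advfrac))$ precisely, and exploit $\advfrac\le 0.4$ in combining terms. The conceptual step (splitting $\Asc$ into low/high and invoking Cauchy--Schwarz against $\genoverallcorruptionsub$) is standard; only the constant tracking is delicate.
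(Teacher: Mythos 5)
Your plan is correct and is essentially the paper's own argument, which the paper only sketches and defers to Lemma 4 of~\cite{jain2019robust}: split $\Bsc$ into $\Gsc=\Bsc\cap\bgood$ and $\Asc=\Bsc\cap\badv$, bound the good-batch part via Property~\ref{con2}, and bound the adversarial part by recentering at the median (Property~\ref{con}), treating low-deviation batches via the corruption threshold and high-deviation batches via Cauchy--Schwarz against $\corruption_{\Bsc}(\cC)\le t\,\kappa_G$. The only open point is the one you flag yourself: the constants $(5+1.5\sqrt{t})$ are tight, and at $\advfrac=0.4$ the normalizations you propose (even $|\Bsc|\ge(1-\advfrac/6)(1-\advfrac)\btotal$ gives $|\Asc|/|\Bsc|\approx1.79\,\advfrac$) make the low-deviation and Cauchy--Schwarz terms overshoot the stated coefficients, so matching the exact constants requires the finer accounting carried out in the cited lemma rather than the estimates as written.
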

The proof of the above lemma is the same as the proof of a similar Lemma 4 in~\cite{jain2019robust}, hence we only give a high level idea here.
For any sub-collection $\Bsc$ retaining a major portion of good batches, from Property~\ref{con2}, the mean of $\Vbempprob$ of the good batches $\Bsc\cap\bgood$  approximates $\targetdis$.
Then showing that a small corruption score of $\Bsc$ w.r.t. all subsets $\alphsubset\subseteq\cC$ imply that the adversarial batches $\Bsc\cap\badv$ have limited effect on $\Uempprob$ proves the above lemma.

Next we describe
%a \emph{Batch Deletion} Algorithm
%\emph{Batch Deletion}~\cite{jain2019robust},
the \emph{Batch Deletion} Algorithm in~\cite{jain2019robust}.
Given a sub-collection $\Bsc$ and any subset $S\in \cF$, 
the algorithm successively removes batches from $\Bsc$,
invoking Property~\ref{con3} to ensure that each batch removed is adversarial with probability $\ge 0.95$.
The algorithm stops when the sub-collection's corruption score w.r.t. $S$ is at most $20\kappa_G$.
\begin{algorithm}[H]
%\begin{algorithm}
  \caption{Batch Deletion}\label{alg2}
  \begin{algorithmic}[1]
    \STATE {\bfseries Input:} Sub-Collection $\Bsc$ of Batches, subset $\alphsubset\subseteq\cF'$, $\medV$=$\med$, and $\kappa_G$
  \STATE {\bfseries Output:} A smaller sub-collection $\Bsc$ of batches
  \STATE {\bfseries Comment:} The terms $\kappa_G$, $\indcorruption$, and $\genoverallcorruptionsub$ used below are defined earlier in this section, and computing $\indcorruption$ and $\genoverallcorruptionsub$ require $\med$ as input.
   % \STATE $V = \{b\in \Bsc : |\bempprob- \medV| \ge 4 \sqrt{\frac{\ln (6e/\advfrac)}\bsize}) \} $; \\// \COMMENT{equivalent to $V\gets U(\alphsubset)\cap \Bsc$}
   % \STATE $\Sigma \gets \sum_{b\in V} (\bempprob- \medV)^2$.
    \WHILE{  $\genoverallcorruptionsub \, \geq\,  20\kappa_G$} 
        \STATE {Select a single batch $b\in \Bsc$ where batch $b$ is selected with probability $ \frac{\indcorruption}{\genoverallcorruptionsub }$};
        \STATE $\Bsc \gets \{\Bsc \setminus {b}\}$;
        %\STATE $\Sigma\ \gets \Sigma-(\bempprob- \medV)^2$;
    \ENDWHILE
    \STATE \textbf{return }$(\Bsc)$;
  \end{algorithmic}
\end{algorithm}

%(as it is a bit involved, we don't give the details here and refer the readers to the original paper.)

Given any finite $\cC\subseteq\cF'$, the next algorithm~\ref{alg3} uses Batch Deletion to successively update $\allbatches$ and decrease the corruption score for each subset $\alphsubsetdef$. 

%This Deletion can be called recursively for subsets with high corruption to update the collection of batches to obtain a 

Since each batch removed is adversarial with probability $\ge 0.95$ and the number of adversarial batches $\le\advfrac\btotal$, the the final sub-collection returned by the algorithm retains a large fraction of good batches. 

\begin{algorithm}[H]
%\begin{algorithm}
  \caption{}\label{alg3}
  \begin{algorithmic}[1]
    \STATE {\bfseries Input:} Collection $\allbatches$ of Batches, finite subset family $\cC\subseteq\cF'$, adversarial batches fraction $\advfrac$
  \STATE {\bfseries Output:} A sub-collection $\Bsc$ of batches
  \STATE {\bfseries Comment:} The terms $\kappa_G$, $\genoverallcorruptionsub$, and $\med$ used below are defined earlier in this section
  \STATE {$\Bsc = \allbatches$};
    \FOR{  $\alphsubset\in \cC$} 
        \IF{$\genoverallcorruptionsub \, \geq\,  25\kappa_G$ } 
        \STATE $\medV \gets \medV(\bar{\mu}(\alphsubset))$;
        \STATE $\Bsc\gets $\text{Batch Deletion}{($\Bsc,\alphsubset,\medV$)};
        \ENDIF
    \ENDFOR
    \STATE \textbf{return }$(\Bsc)$;
  \end{algorithmic}
\end{algorithm}

The next lemma characterizes the algorithm's performance. The proof of the lemma is immediate from the above discussion. 
\begin{lemma}
Suppose Properties~\ref{con},~\ref{con2} and~\ref{con3} hold. Let $\cC\subseteq\cF'$ be a finite family of subsets. Then algorithm~\ref{alg3} 
%calls Batch-Deletion only $O(n)$ times, and 
returns a sub-collection of batches $\Bsc$ such that with probability $\ge 1- e^{-O(\advfrac\btotal)}$,
$|\Bsc\cap\bgood| \ge (1 -\frac\advfrac6) \bgoodsize$ and $\corruption_{\Bsc}(\cC) \le 20\kappa_G$.
\end{lemma}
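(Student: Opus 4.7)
The plan is to establish the lemma's two claims in turn. The corruption bound is essentially a bookkeeping consequence of the algorithm's design, while the substantive claim is that Algorithm~\ref{alg3} deletes few good batches, which I would prove by a martingale-style concentration argument.

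For the corruption bound, observe that for every subset $S$ and batch $b$, $\indcorruption\ge 0$, so $\genoverallcorruptionsub$ is monotone non-increasing as batches are removed from $\Bsc$. Algorithm~\ref{alg3} processes each $S\in\cC$ in turn: either $\genoverallcorruptionsub<25\kappa_G$ already (and no call to Batch Deletion is made), or Batch Deletion reduces it to $<20\kappa_G$. In either case, subsequent iterations (triggered by other subsets $S'$) can only further decrease $\genoverallcorruptionsub$. Taking the maximum over $S\in\cC$ gives $\corruption_{\Bsc}(\cC)<25\kappa_G$, which matches the stated bound up to a small constant slack that propagates harmlessly into Lemma~\ref{lem:corruptionandell1}.

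For the good-batch retention claim, the key step is to show that at every single step of every Batch Deletion call, the conditional probability of deleting a good batch (given the history) is at most $1/20$. Indeed, with $\Bsc$ the current sub-collection and $S$ the active subset, the while-loop invariant gives $\genoverallcorruptionsub\ge 20\kappa_G$, while Property~\ref{con3} and monotonicity yield
\[
\sum_{b\in\Bsc\cap\bgood}\indcorruption\;\le\;\sum_{b\in\bgood}\indcorruption\;=\;\corruption_{\bgood}(S)\;\le\;\kappa_G.
\]
Since the batch to be deleted is sampled with probability proportional to $\indcorruption$, the conditional probability of selecting a good batch is at most $\kappa_G/(20\kappa_G)=1/20$, independent of the previous choices.

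Let $T$ be the total number of batches removed across the entire run, and let $X$ and $Y=T-X$ count the good and adversarial removals respectively. The sequence of indicators $\mathbf{1}\{\text{step }t\text{ deletes a good batch}\}$ is therefore dominated by a conditionally-$\mathrm{Bernoulli}(1/20)$ process. To handle the random stopping time, I would couple this process to a fixed-horizon process of length $T_{\max}=2\advfrac\btotal$ by appending independent $\mathrm{Bernoulli}(1/20)$ ``phantom'' draws after termination. A one-sided Chernoff bound then controls the total number of successes in $T_{\max}$ trials by $(1+\delta)T_{\max}/20$ with probability $\ge 1-e^{-\Omega(\advfrac\btotal)}$. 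Since $Y\le\advfrac\btotal$ deterministically (there are only that many adversarial batches available), the event $T> T_{\max}$ would force $X\ge \advfrac\btotal$, contradicting the Chernoff estimate; hence $T\le T_{\max}$ on the good event, and $X$ itself is bounded by the same Chernoff estimate. Choosing $\delta$ small enough and using $\bgoodsize\ge(1-\advfrac)\btotal$ with $\advfrac\le 0.4$ then gives $X\le(\advfrac/6)\bgoodsize$, i.e.\ $|\Bsc\cap\bgood|\ge(1-\advfrac/6)\bgoodsize$, as required.

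The main obstacle is the random stopping time $T$ together with the history-dependence of the Bernoulli success probabilities; the coupling to a fixed-horizon i.i.d.\ process described above is what decouples the stopping time from the concentration bound, and the monotonicity of $\corruption_{\bgood}(S)$ in $\bgood$ ensured by Property~\ref{con3} is what keeps the per-step success probability uniformly bounded by $1/20$ throughout the run.
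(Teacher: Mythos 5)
Your argument follows the same route the paper intends (the paper itself only sketches it: each removed batch is adversarial with probability at least $0.95$, and there are at most $\advfrac\btotal$ adversarial batches), and you correctly supply the two ingredients it leaves implicit: the per-step bound $\Pr(\text{good batch deleted}\mid\text{history})\le \corruption_{\bgood}(S)/\corruption_{\Bsc}(S)\le \kappa_G/(20\kappa_G)=1/20$, which follows from Property~\ref{con3} together with the while-loop condition, and a stopping-time-robust concentration bound. Your observation that the algorithm as written only guarantees $\corruption_{\Bsc}(\cC)< 25\kappa_G$ (because unprocessed subsets are only checked against the $25\kappa_G$ trigger) is an honest reading of the pseudocode; the discrepancy with the stated $20\kappa_G$ is in the paper's own constants, and as you say it is harmless since Lemma~\ref{lem:corruptionandell1} is stated for an arbitrary multiple $t\kappa_G$.

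The one step that does not close as written is the final numerical conclusion. With horizon $T_{\max}=2\advfrac\btotal$, your Chernoff bound gives $X\le(1+\delta)\,T_{\max}/20=(1+\delta)\,\advfrac\btotal/10$, while the budget you must meet is $X\le(\advfrac/6)\bgoodsize$, whose worst case is $(\advfrac/6)(1-\advfrac)\btotal$; at $\advfrac=0.4$ this equals exactly $\advfrac\btotal/10$, so ``choosing $\delta$ small enough'' requires $\delta\le 0$ and the argument has zero slack at the endpoint of the allowed range $\advfrac\le 0.4$ (and a degenerating exponent as $\advfrac\to 0.4$). The fix is a one-line change of constants rather than a new idea: take a tighter horizon, e.g.\ $T_{\max}=1.2\advfrac\btotal$. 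The same bootstrap contradiction still works (if $T>T_{\max}$, the first $T_{\max}$ steps contain at least $0.2\advfrac\btotal$ good deletions, while the Chernoff bound caps them at $(1+\delta)\,0.06\advfrac\btotal$), and then $X\le(1+\delta)\,0.06\advfrac\btotal\le 0.1\advfrac\btotal\le(\advfrac/6)\bgoodsize$ holds with a uniformly constant $\delta$, giving the claimed $1-e^{-O(\advfrac\btotal)}$ probability for all $\advfrac\le 0.4$. With that adjustment your proof is complete and is a fleshed-out version of the paper's argument.
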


Next choose $\cF'=\cF$, and $\cC$ to be the $\epsilon$-self-cover of $\cF$. The above Lemma, Theorem~\ref{th:relate}, Lemma~\ref{lem:corruptionandell1}, and Lemma~\ref{lem:prophold} imply the following theorem that derives the upper bounds for robust distribution estimation from batches. 
%While the algorithm may not be efficient for all subset families, it provides insight on..
%In the next section we build on these technique and provide a polynomial-time algorithm for robust estimation of one of the most important and practical family of subsets. 

\begin{theorem}[Theorem~\ref{th:main1} restated]
For any given $\advfrac\le 0.4$, $\delta>0$, $\bsize$, $\cF$, and
$\btotal\ge \cO\Paren{\frac{V_\cF \log(\bsize/\advfrac) +\log 1/\delta}{\advfrac^2 }}$,
there is a non-constructive algorithm that with probability $\ge 1- \delta$ returns a sub-collection of batches $\Bsc$ such that $|\Bsc\cap\bgood| \ge (1 -\frac\advfrac6) \bgoodsize$ and 
\[
||\VUempprob-\targetdis||_\cF \le \cO\Paren{\advfrac \sqrt{\frac{\log (1/\advfrac)}{\bsize}}}.
\] 
\end{theorem}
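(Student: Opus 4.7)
The plan is to assemble the theorem directly from the ingredients already prepared in Section~\ref{sec:algmajg}: the concentration properties of Lemma~\ref{lem:prophold}, the existence of finite self-covers from Corollary~\ref{cor:intcovbou}, the reduction from $\cF$-distance to $\cC$-distance in Theorem~\ref{th:relate}, Algorithm~\ref{alg3} together with its performance lemma, and the corruption-to-distance bound of Lemma~\ref{lem:corruptionandell1}. The role of the proof is just to choose the parameters consistently and chain the bounds.

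First, set $\cF' = \cF$ and invoke Lemma~\ref{lem:prophold}: for $\btotal = \Omega(V_{\cF}\log(\bsize/\advfrac)/\advfrac^2 + \log(1/\delta)/\advfrac^2)$, all three concentration Properties~\ref{con}--\ref{con3} hold simultaneously for every $\alphsubset \in \cF$ with probability at least $1-\delta/2$. Condition on this event. Next, choose $\epsilon = \advfrac/\sqrt{\bsize}$ and let $\cC \triangleq \mathcal{C}^s(\cF, \empprob{\allbatches}, \epsilon)$ be a minimal $\epsilon$-self-cover of $\cF$ with respect to the empirical measure $\empprob{\allbatches}$; by Corollary~\ref{cor:intcovbou}, $\cC$ is a finite subset of $\cF$ of size at most $cV_\cF(8e\sqrt{\bsize}/\advfrac)^{V_\cF}$. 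Crucially, because $\cC \subseteq \cF$, the properties already established apply uniformly to every subset in $\cC$.

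Now feed $\cC$ into Algorithm~\ref{alg3}. The performance lemma stated just after the algorithm guarantees that, with probability at least $1 - e^{-\Omega(\advfrac \btotal)}$, the output $\Bsc$ satisfies the two key invariants: the good-batch retention $|\Bsc \cap \bgood| \ge (1 - \advfrac/6)\bgoodsize$, and the corruption bound $\corruption_{\Bsc}(\cC) \le 20 \kappa_G$. Applying Lemma~\ref{lem:corruptionandell1} with $t = 20$ converts the corruption bound into a $\cC$-distance bound
\[
\|\VUempprob - \targetdis\|_{\cC} \le \Paren{5 + 1.5\sqrt{20}}\,\advfrac\sqrt{\frac{\ln(6e/\advfrac)}{\bsize}} = \cO\!\Paren{\advfrac\sqrt{\frac{\log(1/\advfrac)}{\bsize}}}.
\]
Finally, invoke Theorem~\ref{th:relate} (taking $\cF' = \cF$, so that $\cC \subseteq \cF'$ is automatic and $V_{\cF'} = V_\cF$) with the same $\epsilon = \advfrac/\sqrt{\bsize}$. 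Because $|\Bsc| \ge (1-\advfrac)(1-\advfrac/6)\btotal \ge \btotal/2$, the theorem lifts the $\cC$-distance bound to the $\cF$-distance bound
\[
\|\VUempprob - \targetdis\|_{\cF} \;\le\; \|\VUempprob - \targetdis\|_{\cC} + \frac{5\advfrac}{\sqrt{\bsize}} \;=\; \cO\!\Paren{\advfrac\sqrt{\frac{\log(1/\advfrac)}{\bsize}}}.
\]
A union bound over the failure events of Lemma~\ref{lem:prophold}, Theorem~\ref{th:relate}, and the algorithm lemma closes out the probability $\ge 1-\delta$ guarantee, absorbing the $e^{-\Omega(\advfrac\btotal)}$ term into $\delta$ by the assumed lower bound on $\btotal$.

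The main conceptual hurdle is not any one of the inequalities but the interaction between them: the self-cover $\cC$ must be chosen after observing the samples (to be a cover with respect to $\empprob{\allbatches}$) while the properties of Lemma~\ref{lem:prophold} must hold for \emph{every} subset that $\cC$ might pick. This is precisely why the properties are established uniformly over the entire VC class $\cF$ (the $\cF'$ in that lemma) rather than over a fixed finite family, and why Theorem~\ref{th:relate} is set up to allow $\cC \subseteq \cF'$ rather than requiring a data-independent family. Once this data-adaptive-cover issue is recognized, the remainder is just bookkeeping of constants and a union bound; the statement that the algorithm is ``non-constructive'' simply reflects that producing $\cC$ for an arbitrary VC class is not necessarily computable, though its size is finite and the removal procedure itself is explicit.
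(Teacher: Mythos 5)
Your proposal is correct and follows the paper's own route: the paper proves the restated theorem exactly by taking $\cF'=\cF$, letting $\cC$ be the $\epsilon$-self-cover of $\cF$ with respect to $\empprob{\allbatches}$, and chaining the algorithm's performance lemma, Lemma~\ref{lem:corruptionandell1}, Theorem~\ref{th:relate}, and Lemma~\ref{lem:prophold}, which is precisely your assembly (including the observation that the self-cover must lie inside $\cF$ so the concentration properties apply to it). Your write-up just makes explicit the parameter choices and union bound that the paper leaves implicit.
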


\section{Computationally efficient algorithm for $\cF_k$ distance}\label{sec:F_k}

For discrete domains $\Samplespace=[\ell]$ and $\cF' =2^{\Samplespace}$, where properties~\ref{con}, \ref{con2}, and \ref{con3} hold for all subsets of $D\in \cF'$,~\cite{jain2019robust} derived a method that finds high corruption subsets in $\cF'$ in time polynomial in the domain size $\ell$. Then instead of brute force search over all $2^{[\ell]}$ subsets as in algorithm~\ref{alg3}, they found the subsets with high corruption score efficiently and use the Batch Deletion procedure for these subsets. This lead to a computationally efficient algorithm for learning discrete distributions $p$.

To obtain a computationally efficient algorithm for learning in $\cF_k$ distance over $\Samplespace=\reals$, and derive Theorem~\ref{th:main2}, we first reduce this problem to that
of robust learning distributions over discrete domains in total variation distance and use the algorithm in~\cite{jain2019robust}.

For $\ell>0$, let $\cI_{\ell}$ be the collection of all 
interval partitions $I\triangleq\{I_1\upto I_\ell\}$ of $\reals$.
For $I\in\cI_\ell$, let ${I}^{-1}: \mathbb{R}\rightarrow [\ell]$ map any $x\in\reals$ to the unique $j$ such that $x\in I_j$. 
The mapping $I^{-1}$ converts every continuous distribution $q$ over $\Samplespace = \reals$ 
to the discrete distribution $q^I$ over $\Samplespace=[\ell]$, where $q^I(j)=q(I_j)$ for each $j\in[\ell]$.
Given samples from $q$ the mapping $I^{-1}$
can be used to simulate samples from the distribution $q^I$.

For a subset $D\subseteq [\ell]$ and a partition $I\in \cI_\ell$, let
\[
S_{D}^I
= \cup_{j\in D} I_j,
\]
be the union of $I$ intervals corresponding to elements of $D$.
It follows that for any $I\in \cI_\ell$, distribution $q$ over $\reals$, and $D\subseteq [\ell]$,
\[
q(S_D^I) = q^I(D).
\]

For $I\in \cI_\ell$, define the collection of intervals
\[
\mathcal S(I) \triangleq \{ S_D^I: D \in 2^{[\ell]}\}
\]
to be the family of all possible unions of intervals in $I$. Observe that $\forall\, I\in \cI_\ell$
\[
\mathcal S(I)\subseteq \cF_\ell.
\]

The next theorem describes a simple modification of a
polynomial-time algorithm in~\cite{jain2019robust}, 
that for any $I\in \cI$ returns a sub-collection $\allbatches^*$
of batches whose empirical distribution estimates $\targetdis$ to a small $\cS(I)$-distance. 

\begin{theorem}\label{th:ahsss}
If Properties~\ref{con},~\ref{con2}, and~\ref{con3} in Lemma~\ref{lem:prophold} hold for $\cF'=\cF_\ell$, then for any given partition $I\in \cI_\ell$, there is an algorithm that runs in time polynomial in partition size $\ell$, number of batches $m$, and batch-size $n$, and with probability $\ge 1-e^{-O(\advfrac\btotal)}$ returns a sub-collection of batches $\allbatches^*\subseteq\allbatches$ such that $\allbatches^*\cap\bgood \ge (1-\advfrac/6)\bgoodsize$ and
\[
||\targetdis-\bar\targetdis_{\allbatches^*} ||_{\cS(I)}\le 100\advfrac\sqrt{\frac{\ln(1/\advfrac)}{\bsize}}.
\]
\end{theorem}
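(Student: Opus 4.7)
\Proof (Plan)
The plan is to reduce the continuous problem to the discrete one already solved in~\cite{jain2019robust}, exploiting the partition $I$ to "discretize" all the data. Concretely, I would first replace every sample $X^b_i\in\reals$ by its cell label $I^{-1}(X^b_i)\in[\ell]$, obtaining a collection $\allbatches$ of batches of $[\ell]$-valued samples. Under this map, each good batch's samples are i.i.d.\ from the discrete distribution $\targetdis^I$ on $[\ell]$; each continuous subset $S_D^I\in\cS(I)$ pulls back to $D\subseteq[\ell]$; and crucially, for every batch $b$ and every $D\subseteq[\ell]$, the empirical probability of $D$ under the discretized batch equals $\bar\mu_b(S_D^I)$. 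Consequently, $\cS(I)$-distance on the continuous side equals TV-distance on the discrete side: $\|\targetdis-\bar\targetdis_{\Bsc}\|_{\cS(I)}=\|\targetdis^I-(\bar\targetdis_{\Bsc})^I\|_{TV}$ for any sub-collection $\Bsc$.

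Next I would verify that the three concentration properties transfer. Since $\cS(I)\subseteq\cF_\ell=\cF'$ and the hypotheses assume Properties~\ref{con}--\ref{con3} hold uniformly over $\cF'$, they in particular hold for every $S_D^I$ with $D\subseteq[\ell]$. Using the identity $\bar\mu_b(S_D^I)=$ (discrete empirical probability of $D$) for each batch, this is precisely the statement that Properties~\ref{con}--\ref{con3} hold for the discretized data over the discrete domain $[\ell]$, with the same constants. Thus the discretized batch collection meets the hypotheses required by the polynomial-time algorithm of~\cite{jain2019robust} for the discrete TV-distance problem on $[\ell]$.

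I would then invoke that polynomial-time algorithm from~\cite{jain2019robust} on the discretized batches. It runs in time polynomial in $\ell$, $\btotal$, $\bsize$, and with probability $\ge 1-e^{-\Order(\advfrac\btotal)}$ returns a sub-collection $\allbatches^*\subseteq\allbatches$ retaining at least a $(1-\advfrac/6)$-fraction of the good batches and satisfying
\[
\|\targetdis^I-(\bar\targetdis_{\allbatches^*})^I\|_{TV}\le 100\advfrac\sqrt{\frac{\ln(1/\advfrac)}{\bsize}}.
\]
Pulling this back through the identity above yields the claimed $\cS(I)$-distance bound, and the guarantee on $|\allbatches^*\cap\bgood|$ is inherited directly from the discrete algorithm. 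The overall algorithm runs in polynomial time because the only operations beyond the discrete sub-routine are evaluating $I^{-1}$ on each of the $\btotal\cdot\bsize$ samples, which is polynomial in the input size.

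The main obstacle, and what needs careful bookkeeping, is the transfer of Properties~\ref{con}--\ref{con3}: the discrete algorithm of~\cite{jain2019robust} requires these properties to hold for \emph{all} subsets of $[\ell]$, not just for the class over which the continuous hypothesis supplies them. Here the containment $\cS(I)\subseteq\cF_\ell$ together with the bijection $D\leftrightarrow S_D^I$ (which sends arbitrary subsets of $[\ell]$ to unions of at most $\ell$ intervals in $\cF_\ell$) is exactly what makes the transfer work. I would write out this bijection argument explicitly and confirm that the median, empirical mean, empirical variance, and corruption score of each $D\subseteq[\ell]$ computed from the discretized samples coincide with the corresponding quantities for $S_D^I$ computed from the original continuous samples, so that the constants $\kappa_G$ and the tolerance thresholds inherited from $\cF_\ell$ carry over unchanged.
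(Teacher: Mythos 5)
Your proposal is correct and follows essentially the same route as the paper: discretize via $I^{-1}$, use the containment $\cS(I)\subseteq\cF_\ell$ and the bijection $D\leftrightarrow S_D^I$ to transfer Properties~\ref{con}--\ref{con3} to all subsets of $2^{[\ell]}$, invoke the polynomial-time algorithm of~\cite{jain2019robust} on the discretized batches, and pull the TV guarantee back through the identity $\|q_1-q_2\|_{\cS(I)}=\|q_1^I-q_2^I\|_{TV}$. No substantive differences from the paper's argument.
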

\begin{proof}
Suppose Properties~\ref{con}--\ref{con3} hold for all subsets in $\cF_\ell$.
Since $\cF_\ell\supseteq \mathcal S(I)$ for all $I\in\cI_\ell$, these properties hold for all subsets in $\mathcal S(I)$.
For any partition $I\in\cI_\ell$, the one-to-one correspondence $I^{-1}$ maps
samples in $\reals$ to $[\ell]$, and subsets in $\cS(I)$ to subsets in $2^{[\ell]}$. This implies that the three properties hold also for the transformed distribution $p^I$ and the batches of discretized samples for all subsets of $2^{[\ell]}$. 

Recall that $\VUempprob$ denotes the empirical distribution induced by a sub-collection $\Bsc$, therefore ${\VUempprob}^I$ denotes the empirical distribution induced by a sub-collection $\Bsc$ over the transformed domain $[\ell]$.

Since these properties hold, Theorem~9 in~\cite{jain2019robust} implies that algorithm~2 therein runs in time polynomial in the domain size $\ell$, the number of batches $m$, and the batch-size $n$, and with probability $\ge 1-e^{-O(\advfrac\btotal)}$ returns a sub-collection of batches $\allbatches^*\subseteq\allbatches$ such that $\allbatches^*\cap\bgood \ge (1-\advfrac/6)\bgoodsize$ and
\[
||\targetdis^I-\bar\targetdis^I_{\allbatches^*} ||_{TV}\le 100\advfrac\sqrt{\frac{\ln(1/\advfrac)}{\bsize}}.
\]

Next we show that a pair of distributions $q_1$ and $q_2$ over the reals is close in $\cS(I)$-distance iff $q_1^I$ and $q_2^I$ are close in total variation distance. 
For every distribution pair $q_1,q_2$ over $\reals$, 
\begin{align*}
 ||q_1-q_2||_{\cS(I)} &= \max_{S\in \cS(I)} |q_1(S)-q_2(S)|\nonumber\\
 &= \max_{S_D^I\in \cS(I)} |q_1(S_D^I)-q_2(S_D^I)|\nonumber\\
 &= \max_{D\in 2^{[\ell]}} |q_1^I(D)-q_2^I(D)|\nonumber\\
&= ||q_1^I-q_2^I||_{TV}.
\end{align*}

Therefore the empirical distribution of the sub-collection $\allbatches^*$ of samples over the original domain $\reals$ estimates $\targetdis$ in ${\cS(I)}$-distance,
\[
||\targetdis-\bar\targetdis_{\allbatches^*} ||_{\cS(I)}\le 100\advfrac\sqrt{\frac{\ln(1/\advfrac)}{\bsize}}.\qedhere
\]
\end{proof}

Next, we construct $I^*\in \cI_\ell$ such that $\cS(I^*)$ is a $\frac{2k}\ell$-cover of $\cF_k$ w.r.t. the empirical measure $\bar\targetdis_{\allbatches}$. %Theorem~\ref{th:relate} then implies...

Recall that $\allbatches$ is a collection of $m$ batches and each batch has $n$ samples. Let $s=n\cdot m$ and let $x^s=x_1,x_2\upto x_s\in\reals$ be the samples of $\allbatches$ arranged in non-decreasing order.
And recall that the points $x^s$ induce an empirical measure $\bar\targetdis_{\allbatches}$ over $\reals$, where
for $S\subseteq\reals$,
\[
\bar\targetdis_{\allbatches}(S)=|\sets{i: x_i\in S}|/s.
\]

Let $\Delta\ed\frac s\ell$, and for simplicity assume that it is an integer.
Construct the $\ell$-partition
$I^* \ed \{I^*_1\upto I^*_\ell\}$ of $\reals$, where
\[
I^*_j
\triangleq
\begin{cases}
(-\infty,x_{\Delta}] & j=1,\\
(x_{(j-1)\Delta},x_{j\Delta}] & 2\le j<\ell,\\
(x_{t-\Delta},\infty) & j=\ell.
\end{cases}
\]

We show that $\mathcal S(I^*)$ is
an $2k/\ell-$cover of $\cF _k$  w.r.t. the empirical measure $\bar\targetdis_{\allbatches}$ of points $x_1^s$. 

\begin{lemma}
For any $k$, and $\ell$, $\cS(I^*)$ is an $\frac{2k}\ell$-cover of $\cF _k$ w.r.t. $\bar\targetdis_{\allbatches}$.
\end{lemma}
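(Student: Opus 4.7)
The plan is a direct discretization argument: every $S\in\cF_k$ is a disjoint union of at most $k$ real intervals, and each endpoint of these intervals can be ``snapped'' to a cell boundary of $I^*$ at a cost of at most one partition cell's worth of empirical mass.

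Fix $S\in\cF_k$ and write $S=\bigsqcup_{i=1}^{k}J_i$ as a disjoint union of at most $k$ intervals of $\reals$. For each $J_i$ define
\[
\tilde J_i\ \triangleq\ \bigcup\sets{I^*_j:I^*_j\subseteq J_i},
\]
the union of all cells of $I^*$ lying entirely inside $J_i$ (with $\tilde J_i=\emptyset$ if no such cell exists), and let $S'\triangleq\bigcup_{i=1}^{k}\tilde J_i$. By construction $S'$ is a union of cells of $I^*$, so $S'\in\cS(I^*)$.

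It remains to bound $\bar\targetdis_{\allbatches}(S\triangle S')$. Since $\tilde J_i\subseteq J_i$ we have $S'\subseteq S$, so $S\triangle S'=S\setminus S'$ is contained in the union of those cells of $I^*$ that meet some $J_i$ without being contained in it. Each interval $J_i$ has only two endpoints, so at most two cells of $I^*$ overlap $J_i$ without being swallowed by it, namely the cells containing the left and right endpoints of $J_i$. Summing over $i=1,\dots,k$ bounds the number of partition cells covering $S\triangle S'$ by $2k$.

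By construction of $I^*$, each cell $I^*_j$ contains at most $\Delta=s/\ell$ sample indices (exactly $\Delta$ for $2\le j<\ell$ by the half-open convention, and at most $\Delta$ for $j=1,\ell$), so $\bar\targetdis_{\allbatches}(I^*_j)\le \Delta/s=1/\ell$. Therefore
\[
\bar\targetdis_{\allbatches}(S\triangle S')\ \le\ 2k\cdot\frac{1}{\ell}\ =\ \frac{2k}{\ell},
\]
and since $S\in\cF_k$ was arbitrary, $\cS(I^*)$ is a $(2k/\ell)$-cover of $\cF_k$ with respect to $\bar\targetdis_{\allbatches}$. The only subtlety is handling ties among the $x_i$'s at cell boundaries, but the half-open definition of $I^*_j$ assigns each sample index to exactly one cell, so the per-cell bound $1/\ell$ on empirical mass is never violated. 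I expect this to be the entirety of the argument; there is no genuine obstacle beyond the boundary bookkeeping.
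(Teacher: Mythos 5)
Your proposal is essentially the paper's own proof: both arguments snap each of the $\le k$ intervals of $S$ to the union of the cells of $I^*$ fully contained in it, observe that the symmetric difference $S\setminus S'$ is covered by at most two boundary cells per interval, and bound each such cell's empirical mass by $\Delta/s = 1/\ell$, giving $2k/\ell$. One shared caveat (present in, and not resolved by, the paper either): your parenthetical claim that the half-open convention guarantees at most $\Delta$ sample indices per cell is not quite right when sample values repeat at a cell boundary (e.g. $x_{j\Delta}=x_{j\Delta+1}$ puts more than $\Delta$ indices into cell $j$), so strictly speaking the per-cell mass bound needs ties to be broken in the ordering (or the statement read for distinct sample values), a bookkeeping point that affects both your argument and the paper's identically.
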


\begin{proof}
Any set $S\in \cF _k$ is a union of $k$ real intervals $I_1\cup I_2 \cup\ldots\cup I_k$.
Let $S^*\subseteq\reals$ be the union of all $P_j$-intervals that  are fully
contained in one of the intervals $I_1\upto I_k$.
By definition, $S^*\in \mathcal S(I^*)$, and we show that $\bar\targetdis_{\allbatches}(S\triangle S^*) \le 2k/\ell$. 
By construction, $S^*\subseteq S$, hence,
\[
\bar\targetdis_{\allbatches}(S\triangle S^*) 
=
\bar\targetdis_{\allbatches}(S\setminus S^*)
=
\sum_{j=1}^k\bar\targetdis_{\allbatches}(I_j\setminus S^*) 
=
\sum_{j=1}^k\frac{|\{x_i\in I_j\setminus S^* \}|}s
\le
\sum_{j=1}^k 2\cdot \frac{\Delta}{s} 
=\frac{2k}{\ell},
\]
\ignore{
\begin{align*}
\bar \mu_{x^t}(S\triangle S^*) &= \bar \mu_{x^t}(S\setminus S^*)\\
&= \sum_{j\in [{k}]}\bar \mu_{x^t}(I_j\setminus S^*) 
= \sum_{j\in [{k}]} |\{x_i\in I_j\setminus S^* \}|/t\\
&\overset{\text{(a)}}\le \sum_{j\in [k]} 2\cdot \frac{\epsilon t}{2k t} \overset{\text{(b)}}
=\frac{\epsilon {k^\prime} }{k }
\le \epsilon,
\end{align*}
}
where the inequality follows as each $I_j\setminus S^*$ contains at most $\Delta$ points and the left and right.
\end{proof}

Next choose $\ell= \frac{2k\sqrt{n}}{\advfrac}$ then the lemma implies that the corresponding $\cS(I^*)$ is an $\frac{\epsilon}{\sqrt{n}}$ cover. Combining Theorems~\ref{th:relate} and~\ref{th:ahsss}, and the Lemma, we get the following theorem that implies learning in $\cF_k$ distance.
\begin{theorem}[Theorem~\ref{th:main2} restated]
For any given $\advfrac\le 0.4$, $\delta>0$, $\bsize$, $k>0$, and
$\btotal\ge \cO\Paren{\frac{k \log(\bsize/\advfrac) +\log 1/\delta}{\advfrac^3 }\cdot\sqrt{\bsize}}$,
there is an algorithm that runs in time polynomial in all parameters, and with probability $\ge 1- \delta$ returns a sub-collection of batches $\Bsc$ such that $|\Bsc\cap\bgood| \ge (1 -\frac\advfrac6) \bgoodsize$ and 
\[
||\VUempprob-\targetdis||_{\cF_k} \le \cO\Paren{\advfrac \sqrt{\frac{\log (1/\advfrac)}{\bsize}}}.
\] 
\end{theorem}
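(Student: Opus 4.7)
The plan is to combine the three ingredients already in place: the cover-based reduction from $\cF$-distance to $\cC$-distance (Theorem~\ref{th:relate}), the polynomial-time discrete algorithm lifted to intervals (Theorem~\ref{th:ahsss}), and the data-dependent partition $I^*$ that yields a small cover of $\cF_k$. Concretely, I would set $\ell = \lceil 2k\sqrt{n}/\advfrac\rceil$ and form the partition $I^* = \{I^*_1,\ldots,I^*_\ell\}$ of $\reals$ by cutting at the order statistics of the pooled sample pool of all $mn$ samples in $\allbatches$, so that each part contains exactly $mn/\ell$ sample points. By the preceding lemma, the induced family $\cS(I^*)$ is a $(2k/\ell)$-cover of $\cF_k$ with respect to $\empprob{\allbatches}$, and since $2k/\ell \le \advfrac/\sqrt{n}$, it satisfies the hypothesis of Theorem~\ref{th:relate} (applied with $\cF = \cF_k$ and $\cF' = \cF_\ell$, noting $\cS(I^*) \subseteq \cF_\ell$ and $V_{\cF_\ell} = \Order(\ell)$).

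Next, I would invoke Theorem~\ref{th:ahsss} on the partition $I^*$. The theorem requires Properties~\ref{con}--\ref{con3} to hold for $\cF' = \cF_\ell$, which follows from Lemma~\ref{lem:prophold} provided $m \ge \Order((V_{\cF_\ell}\log(n/\advfrac) + \log(1/\delta))/\advfrac^2) = \Order((\ell \log(n/\advfrac) + \log(1/\delta))/\advfrac^2)$. With our choice $\ell = 2k\sqrt{n}/\advfrac$, this bound becomes $\Order((k\log(n/\advfrac) + \log(1/\delta))\sqrt{n}/\advfrac^3)$, matching the sample complexity in the theorem statement. Theorem~\ref{th:ahsss} then outputs, in time polynomial in $\ell, m, n$ (hence polynomial in $k, n, m, 1/\advfrac$), a sub-collection $\Bsc \subseteq \allbatches$ with $|\Bsc \cap \bgood| \ge (1-\advfrac/6)|\bgood|$ and
\[
\|\empprob{\Bsc} - \targetdis\|_{\cS(I^*)} \le \Order\Paren{\advfrac\sqrt{\frac{\log(1/\advfrac)}{n}}}.
\]

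Finally, I would combine the two bounds. Theorem~\ref{th:relate}, applied with $\cC = \cS(I^*)$, gives
\[
\|\empprob{\Bsc} - \targetdis\|_{\cF_k} \le \|\empprob{\Bsc} - \targetdis\|_{\cS(I^*)} + \frac{5\advfrac}{\sqrt{n}} \le \Order\Paren{\advfrac\sqrt{\frac{\log(1/\advfrac)}{n}}},
\]
which is the conclusion of the theorem. A union bound over the $1-\delta$ success event of Lemma~\ref{lem:prophold} and the $1 - e^{-\Order(\advfrac m)}$ event of Theorem~\ref{th:ahsss} absorbs the latter failure probability into $\delta$ given the size of $m$.

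The main obstacle is the interplay between the choice of $\ell$ and the sample-complexity bound: $\ell$ must be large enough for $\cS(I^*)$ to approximate $\cF_k$ within $\advfrac/\sqrt{n}$, yet the properties in Lemma~\ref{lem:prophold} must hold uniformly over the much richer family $\cF_\ell$, whose VC dimension is $\Order(\ell)$. Balancing these forces the extra $\sqrt{n}$ factor in $m$. A subtlety worth verifying is that the cover $\cS(I^*)$ is constructed \emph{after} observing $\allbatches$, while Lemma~\ref{lem:prophold} must be proved against the adversary's choice; this is handled because the properties are established once and for all for every subset in $\cF_\ell \supseteq \cS(I^*)$, so the data-dependent choice of $I^*$ does not break the bound.
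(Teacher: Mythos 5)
Your proposal is correct and follows essentially the same route as the paper: choose $\ell = \Theta(k\sqrt{\bsize}/\advfrac)$, build the data-dependent equal-mass partition $I^*$ so that $\cS(I^*)$ is a $(2k/\ell)$-cover of $\cF_k$ w.r.t.\ $\empprob{\allbatches}$, establish Properties~\ref{con}--\ref{con3} for $\cF'=\cF_\ell$ via Lemma~\ref{lem:prophold}, run the polynomial-time discrete algorithm of Theorem~\ref{th:ahsss} on the discretized data, and transfer the $\cS(I^*)$-distance guarantee to $\cF_k$-distance via Theorem~\ref{th:relate}. Your accounting of the sample complexity (the extra $\sqrt{\bsize}/\advfrac$ factor coming from $V_{\cF_\ell}=\cO(\ell)$) and your remark that the data-dependent cover is harmless because the properties hold uniformly over $\cF_\ell\supseteq\cS(I^*)$ both match the paper's argument.
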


\section*{Acknowledgements}
We thank Vaishakh Ravindrakumar, Yi Hao and Jerry Li for helpful discussions and comments in the prepration of this manuscript.

We are grateful to the National Science
Foundation (NSF) for supporting this work through grants
CIF-1564355 and CIF-1619448.

\bibliographystyle{alpha}
\bibliography{ref}

\appendix
\section{Properties of the Collection of Good Batches}\label{sec:goodprop}

\begin{lemma}\label{lem:impofrd}
Let $\cF$ be a VC family of subsets of $\Samplespace$. Then for any $\delta> 0$ and $\bgoodsize \ge \cO(\frac{V_{\cF}\log(\bsize/\advfrac)+\log(1/\delta)}{\advfrac^2})$, with probability $\ge1-\delta$,
\begin{align*}
    \sup_{S,S'\in\cF } \max \Big\{\frac{\bar\targetdis_{\bgood} (S\triangle S') - \targetdis(S\triangle S')}{\sqrt{\bar\targetdis_{\bgood} (S\triangle S') }},\frac{\targetdis(S\triangle S')-\bar \targetdis_{\bgood} (S\triangle S') }{\sqrt{\targetdis(S\triangle S') }}\Big\}\le \frac{\advfrac}{\sqrt{\bsize}}.
\end{align*}
\end{lemma}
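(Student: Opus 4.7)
The claim is essentially Theorem~\ref{th:vcrel} (the VC relative-deviation inequality) applied not to $\cF$ itself but to the \emph{symmetric-difference class}
\[
\cF\triangle\cF \,\triangleq\, \{\, S \triangle S' : S, S' \in \cF\,\}.
\]
Since every batch in $\bgood$ consists of $\bsize$ i.i.d.\ samples from $\targetdis$, the pooled data is a set of $t \triangleq \bgoodsize\cdot \bsize$ i.i.d.\ observations from $\targetdis$, and $\bar\targetdis_{\bgood}$ is their empirical measure. Thus it suffices to establish
\[
\sup_{T\in \cF\triangle\cF} \max\!\left\{\frac{\bar\targetdis_{\bgood}(T) - \targetdis(T)}{\sqrt{\bar\targetdis_{\bgood}(T)}},\ \frac{\targetdis(T)-\bar\targetdis_{\bgood}(T)}{\sqrt{\targetdis(T)}}\right\}\,\le\, \frac{\advfrac}{\sqrt{\bsize}}.
\]

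First I would control the combinatorial complexity of the symmetric-difference class. For any points $x_1,\ldots,x_t \in \Samplespace$ and any $S,S'\in\cF$, the trace of $S \triangle S'$ on $\{x_1,\ldots,x_t\}$ is the XOR of the traces of $S$ and $S'$, so the number of distinct traces on these points is at most the square of the number of traces of $\cF$ alone:
\[
S_{\cF\triangle\cF}(t)\ \le\ S_{\cF}(t)^2\ \le\ \Big(\frac{t\,e}{V_\cF}\Big)^{2 V_\cF}
\]
for $t\ge V_\cF$, using the shatter-coefficient bound already recalled in Section~\ref{sec:Vc}. In particular, $V_{\cF\triangle\cF} = O(V_\cF)$.

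Next I would invoke Theorem~\ref{th:vcrel} on $\cF\triangle\cF$ with $t = \bgoodsize \bsize$ i.i.d.\ samples and deviation $\epsilon=\advfrac/\sqrt{\bsize}$. The exponent becomes $t\epsilon^2/4 = \bgoodsize\advfrac^2/4$, so the failure probability is at most
\[
8\,S_{\cF\triangle\cF}(2t)\,e^{-t\epsilon^2/4}\ \le\ 8\Big(\frac{2t\,e}{V_\cF}\Big)^{2V_\cF} \exp\!\Big(-\frac{\bgoodsize\, \advfrac^2}{4}\Big).
\]
Requiring this to be $\le \delta$ and taking logarithms yields the self-referential condition
\[
\bgoodsize\ \ge\ \frac{C}{\advfrac^2}\Big(V_\cF\log(\bgoodsize\, \bsize/V_\cF) + \log(1/\delta)\Big),
\]
which, after back-substituting the hypothesized lower bound on $\bgoodsize$, collapses to $\bgoodsize \ge C'\cdot(V_\cF\log(\bsize/\advfrac) + \log(1/\delta))/\advfrac^2$, matching the statement.

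The only non-routine step is bounding the shatter coefficient of $\cF\triangle\cF$, and it is disposed of immediately by the XOR observation above; the remainder is standard log-factor arithmetic needed to resolve the implicit inequality for $\bgoodsize$ into the clean form stated in the lemma. A minor caveat is handling the degenerate cases $\targetdis(T)=0$ or $\bar\targetdis_{\bgood}(T)=0$ inside the $\max$, which are harmless under the usual $0/0=0$ convention built into Theorem~\ref{th:vcrel}.
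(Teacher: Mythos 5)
Your proposal is correct and follows essentially the same route as the paper: define the symmetric-difference class, bound its shatter coefficient by $S_{\cF}(t)^2\le\big(\tfrac{t\,e}{V_\cF}\big)^{2V_\cF}$ via the XOR-of-traces observation, and apply the relative-deviation inequality (Theorem~\ref{th:vcrel}) to the pooled $\bgoodsize\bsize$ i.i.d.\ samples with $\epsilon=\advfrac/\sqrt{\bsize}$. The paper's Appendix~\ref{sec:goodprop} proof is identical in substance, merely leaving the final log-factor bookkeeping for the batch-count condition implicit where you spell it out.
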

\begin{proof}
Consider the collection of symmetric differences of subsets in $\cF $,
\[
\cF _{\triangle} \triangleq \{ S\triangle S' : S, S' \in \cF \}.
\]
The next auxiliary lemma bounds the shatter coefficient of $\cF _\triangle$.
\begin{lemma}\label{lem:shatter_int}
For $t \ge V_{\cF }$, $S_{\cF _{\triangle}}(t)\le \big(\frac{t\,e}{V_{\cF }}\big)^{2V_{\cF }}$. 
\end{lemma}
\begin{proof}
For $t\ge V_{\cF }$ and $x_1,x_2,..,x_t\in\Samplespace$, 
let
\[
\cF (x_1^t) =  \{\{x_1,x_2,..,x_t\}\cap S : S\in\cF \}.
\]
Note that $S_{\cF }(t) = \max_{x_1\upto x_t} |\cF (x_1^t)| $.

From the definition of shatter coefficient $|\cF (x_1^t)| \le S_{\cF }(t)$. Then 
\[
|\cF _\triangle(x_1^t)| =  |\{ \{x_1\upto x_t\}\triangle \{x_1^\prime\upto x_t^\prime\}: S,S^\prime\in\cF (x_1^t)\}|\le (S_{\cF }(t))^2\le \big(\frac{t\,e}{V_{\cF }}\big)^{2V_{\cF }}.\hfill\qedhere
\]
\end{proof}
Recall that the sub-collection of good batches has $n\bgoodsize$ samples. Then applying Theorem~\ref{th:vcrel} for family of subsets $\cF_\triangle$, and using Lemma~\ref{lem:shatter_int}, for $\bgoodsize \ge \cO(\frac{V_{\cF}\log(\bsize/\advfrac)+\log(1/\delta)}{\advfrac^2})$, with probability $\ge 1-\delta$,
\[
\sup_{S\in\cF_\triangle} \max \Big\{\frac{\bar\targetdis_{\bgood} (S) - \subsetprobtarget}{\sqrt{\bar\targetdis_{\bgood} (S) }},\sup_{S\in\cF }\frac{\subsetprobtarget-\bar \targetdis_{\bgood} (S) }{\sqrt{\subsetprobtarget }}\Big\}\le \frac{\advfrac}{\sqrt{\bsize}}.\hfill\qedhere
\]
\end{proof}

\subsection{Proof of Lemma~\ref{lem:prophold}}
We prove the Lemma without the constants stated in the properties here for simplicity of the presentation. The constant stated can be obtained with a more careful calculations.

In this section, we show that the properties~\ref{con}-\ref{con3} hold when the family $\cF'$ has a finite VC-dimension. 

The proof of a similar Lemma~\cite{jain2019robust} establish that for to show that the properties~\ref{con}-~\ref{con3} can be shown to hold for a subset $S$ if the following conditions are satisfied for $S$.

For any $\advfrac\in (0,0.4]$, 
\begin{enumerate}
\item For all ${\Gsc}\subseteq\bgood$, such that $|{\Gsc}| \ge (1-\advfrac/6)\bgoodsize$
\begin{align}
    &|\bar \targetdis_{{\Gsc}} (\alphsubset) - \targetdis(\alphsubset) | \le  \cO\Paren{\advfrac \sqrt{\frac{\ln (1/\advfrac) } {\bsize}}},\label{eq:lem1}\\
    &\Big|\frac{1}{|{\Gsc}|}\sum_{b\in {\Gsc}} (\bar \mu_b(\alphsubset)- \targetdis(\alphsubset))^2 - \var{\targetdis(\alphsubset^\prime)}\Big|\le  \cO\Paren{\frac{ \advfrac\ln (\frac{1}\advfrac) } {\bsize}} \label{eq:lem2}.
\end{align}
\item
\begin{align}
    &\big|\big\{b\in \bgood: |\bar \mu_b(\alphsubset)- \targetdis(\alphsubset)|\ge  \cO\Paren{\sqrt{\frac{\ln (1/\advfrac)}\bsize}} \big\}\big|\le O(1)\cdot\bgoodsize\advfrac.\label{eq:lem3}
\end{align}
\item For all ${\Gsc}\subseteq\bgood$, such that $|{\Gsc}| \le \cO(\advfrac)\bgoodsize$
\begin{align}
    &\sum_{b\in \bgood^{d}(\alphsubset,\epsilon)} (\bempprob- \subsetprobtarget)^2 < \cO\Paren{\advfrac\bgoodsize {\frac{ \ln (1/\advfrac) } {\bsize}}},\label{eq:lem4}
\end{align}
\end{enumerate}

They also showed that above conditions hold for all subsets in a fixed finite collection of subsets $\cC$, with probability $\ge 1-\delta$, if 
$\bgoodsize\ge O(\frac{\log |\cC|+\log 1/\delta}{\advfrac^2\ln (1/\advfrac)})$.

But this doesn't give the result for subsets in a general VC class $\cF'$ as it may have uncountable subsets.

From Corollary~\ref{cor:intcovbou}, there exist a minimal-self $\epsilon$-cover $C^*$ of $\cF'$ w.r.t. distribution $\targetdis$ of size $\cO\Paren{V_{\cF'}  (\frac{8e}{\epsilon})^{ V_{\cF'} }}$. 
Fix $\epsilon= \cO(\frac{\advfrac^2}{{\bsize}})$. 

Therefore, for $\bgoodsize\ge O(\frac{V_{\cF'}\log(n/\advfrac)+\log 1/\delta}{\advfrac^2\ln (1/\advfrac)})$, the above properties hold for all subsets in $C^*$.

To complete the proof, we show if the above conditions hold for all subsets in $\cC^*$, they also hold for all subsets in $\cF'$. 
For subset $S\in \cF' $ choose $ S'\in \mathcal{C}^*$ such that $p(S\triangle S')\le \epsilon$. Existence of such a subset $S'\in\mathcal{C}^*$ is guaranteed for all $S\in \cF'$ as $\mathcal{C}^*$ is an $\epsilon-$cover  w.r.t. $p$. 

Note that for any subset $\alphsubset,S'\in\cF'$ with $\targetdis(S\triangle S') \le \cO(\frac{\advfrac^2}{{\bsize}})$, Lemma~\ref{lem:impofrd} implies
\begin{equation}
\label{eq:reldev}
\bar \targetdis_{\bgood} (S\triangle S')
\le
 \cO(\frac{ \advfrac^2}{{\bsize}})=\cO(\epsilon).
\end{equation}

Then for any batch $b\in B$
\begin{align*}
    \bempprob-\subsetprobtarget&=  \Big(\bar \mu_b(S')+\bar \mu_b(S\setminus S')-\bar \mu_b(S'\setminus S)\Big)-\Big(\targetdis(S')+ \targetdis(S\setminus S')-\targetdis(S'\setminus S)\Big)\\
    &=  \Big(\bar \mu_b(S')- \targetdis(S')\Big) +\Big(\bar \mu_b(S\setminus S')-\bar \mu_b(S'\setminus S)\Big)-\Big(\targetdis(S\setminus S')-\targetdis(S'\setminus S)\Big)
\end{align*}
From the above equation we get
\begin{align}
    \Big|\Big(\bempprob-\subsetprobtarget\Big)- \Big(\bar \mu_b(S')- \targetdis(S')\Big)\Big|&\le\bar \mu_b(S\setminus S')+\bar \mu_b(S'\setminus S)+ \targetdis(S\setminus S')+\targetdis(S'\setminus S)\nonumber\\
    & = \bar \mu_b(S\triangle S')+\targetdis(S\triangle S') \nonumber\\
    &\le \bar \mu_b(S\triangle S')+\cO(\epsilon).\label{eq:absdiffcov}
\end{align}
Next we generalise condition~\eqref{eq:lem1} to any subset $S\in\cF'$.
\begin{align*}
    |\bar \targetdis_{{\Gsc}} (S) - \subsetprobtarget | &= \Big|
     \frac{1}{|{\Gsc}|}\sum_{b\in {\Gsc}}\bempprob - \subsetprobtarget \Big|= \Big|
     \frac{1}{|{\Gsc}|}\sum_{b\in {\Gsc}}\Big(\bempprob - \subsetprobtarget\Big) \Big|\\
     &\overset{\text{(a)}}\le \Big|
     \frac{1}{|{\Gsc}|}\sum_{b\in {\Gsc}}\Big(\bar \mu_b(S')- \targetdis(S')\Big) \Big|+\Big|
     \frac{1}{|{\Gsc}|}\sum_{b\in {\Gsc}}\Big(\bar \mu_b(S\triangle S')+\cO(\epsilon)\Big) \Big| \\
     &\le \Big|
     \frac{1}{|{\Gsc}|}\sum_{b\in {\Gsc}}\bar \mu_b(S')- \targetdis(S') \Big|+\Big|
     \frac{1}{|{\Gsc}|}\sum_{b\in \bgood}\bar \mu_b(S\triangle S') \Big| +\cO(\epsilon)\\
     &\le |\bar \targetdis_{{\Gsc}}(S')- \targetdis(S')|+
     \frac{\bgoodsize}{|{\Gsc}|}\bar \targetdis_\bgood(S\triangle S') +\cO(\epsilon)\\
      &\le \cO\Paren{\advfrac \sqrt{\frac{\ln (1/\advfrac) } {\bsize}}}+
     \frac{1}{(1-\advfrac/6)}\cdot\cO(\epsilon)+\cO(\epsilon)\\
     &\le \cO\Paren{\advfrac \sqrt{\frac{\ln (1/\advfrac) } {\bsize}}},
\end{align*}
here (a) uses~\eqref{eq:absdiffcov}.

Next we generalise condition~\eqref{eq:lem2} to subsets $S\in\cF'$. From equation~\eqref{eq:absdiffcov} we get
\begin{align*}
    &(\bempprob -   \subsetprobtarget)^2 \le 
    \Big(|\bar \mu_{b}(S')- \targetdis(S')|+(\bar\mu_b(S\triangle S')+\cO(\epsilon))\Big)^2\\
    &=(\bar \mu_{b}(S')- \targetdis(S'))^2+ 2|\bar \mu_{b}(S')- \targetdis(S')|(\bar\mu_b(S\triangle S')+\cO(\epsilon))+(\bar\mu_b(S\triangle S')+\cO(\epsilon))^2.
\end{align*}
Therefore,
\begin{align*}
    &\sum_{b\in {\Gsc}} (\bempprob -   \subsetprobtarget)^2-\sum_{b\in {\Gsc}} (\bar \mu_{b}(S')- \targetdis(S'))^2 \\
    &\le \sum_{b\in {\Gsc}}2|\bar \mu_{b}(S')- \targetdis(S')|(\bar\mu_b(S\triangle S')+\cO(\epsilon))+\sum_{b\in {\Gsc}} (\bar\mu_b(S\triangle S')+\cO(\epsilon))^2\\
    &\le 2\sqrt{\sum_{b\in {\Gsc}} (\bar \mu_{b}(S')- \targetdis(S'))^2}\sqrt{\sum_{b\in {\Gsc}} (\bar\mu_b(S\triangle S')+\cO(\epsilon))^2}+\sum_{b\in {\Gsc}} (\bar\mu_b(S\triangle S')+\cO(\epsilon))^2, 
\end{align*}
here the last inequality follows from Cauchy-Schwarz inequality.  Next, we bound the last terms on the right in above expression.
\begin{align*}
\sum_{b\in {\Gsc}} (\bar\mu_b(S\triangle S')+\cO(\epsilon))^2
&\le \sum_{b\in {\Gsc}} (\bar\mu_b(S\triangle S')+\cO(\epsilon)) (1+\cO(\epsilon)) \\
&\le 2\cdot \left( |{\Gsc}|\cO(\epsilon) + \sum_{b\in \bgood} (\bar\mu_b(S\triangle S')\right)\\
&\le 2 |{\Gsc}| \left( \cO(\epsilon) + \frac{\bgoodsize}{|{\Gsc}|} \bar\targetdis_\bgood(S\triangle S') \right)\\
&\le |{\Gsc}|\cO(\epsilon).
\end{align*}
Also,
\begin{align*}
  \sum_{b\in {\Gsc}} (\bar \mu_{b}(S')- \targetdis(S'))^2 &\le |{\Gsc}| \left( \cO\Paren{\frac{ \advfrac\ln (\frac{1}\advfrac) } {\bsize}}+ \var{\targetdis(S')}\right)\\
  &\le |{\Gsc}| \cO\left(\frac {1}{\bsize} \right),
\end{align*}
here we used equation~\eqref{eq:fineq} and the fact that $\advfrac\ln (e/\advfrac)=\cO(1)$.
Combining the above three equations we get
\begin{align*}
    &\sum_{b\in {\Gsc}} (\bempprob -   \subsetprobtarget)^2-\sum_{b\in {\Gsc}} (\bar \mu_{b}(S')- \targetdis(S'))^2 \\
    &\le 2\sqrt{|{\Gsc}| \cO\left(\frac {1}{\bsize} \right)}\sqrt{|{\Gsc}| \cO(\epsilon)}+|{\Gsc}| \cO(\epsilon)
    < |{\Gsc}| \cO\left(\sqrt{ \frac {\epsilon}{\bsize}}\right). 
\end{align*}
Similarly, one can prove the other direction of the inequality to get the following
\begin{align*}
    &\Big|\sum_{b\in {\Gsc}} (\bempprob -   \subsetprobtarget)^2-\sum_{b\in {\Gsc}} (\bar \mu_{b}(S')- \targetdis(S'))^2 \Big|
    < |{\Gsc}| \cO\left(\sqrt{ \frac {\epsilon}{\bsize}}\right). 
\end{align*}
And from~\eqref{eq:fineq} we get
\begin{align*}
    |\var{\subsetprobtarget}- \var{\targetdis(S')}| \le \frac{|{\subsetprobtarget}-\targetdis(S')|}{\bsize}\le \frac{|\targetdis(\alphsubset\triangle S')|}{\bsize} \le  \cO\left({ \frac {\epsilon}{\bsize}}\right).
\end{align*}
From the above two equations we get
\begin{align}
&\Big|\frac{1}{|{\Gsc}|}\sum_{b\in {\Gsc}} (\bempprob- \subsetprobtarget)^2 - \var{\subsetprobtarget}\Big|\nonumber\\
&\le 
\Big|\frac{1}{|{\Gsc}|}\sum_{b\in {\Gsc}} (\bar \mu_{b}(S')- p(S'))^2 - \var{p(S')}\Big|+ \cO\left(\sqrt{ \frac {\epsilon}{\bsize}}\right)+\cO\left({ \frac {\epsilon}{\bsize}}\right)\nonumber\\
&\overset{\text{(a)}}\le \cO\Paren{\frac{ \advfrac\ln (\frac{1}\advfrac) } {\bsize}}+ \cO\left(\sqrt{ \frac {\epsilon}{\bsize}}\right)+\cO\left({ \frac {\epsilon}{\bsize}}\right)\nonumber\\
&\overset{\text{(b)}}\le  \cO\Paren{\frac{ \advfrac\ln (\frac{1}\advfrac) } {\bsize}},
\end{align}
here inequality (a) uses equation~\eqref{eq:lem2}, (b) uses $\epsilon \le \cO\Paren{\frac{ \advfrac^2\ln (\frac{1}\advfrac) } {\bsize}}$. 

This completes the proof of the extension of condition~\eqref{eq:lem2} to subsets $S\in\cF'$ and in a similar fashion condition~\eqref{eq:lem4} can be extended.

Next, we extend condition~\eqref{eq:lem3} to subsets $S\in\cF'$.
\begin{align}
&\big|\big\{b\in \bgood: |\bempprob-\subsetprobtarget|\ge t \big\}\big| \nonumber\\
&\le \big|\big\{b\in \bgood: |\bar \mu_b(S')- p(S')|+\bar \mu_b(S\triangle S')+\cO(\epsilon)
\ge t \big\}\big|\nonumber\\
&\le \big|\big\{b\in \bgood: |\bar \mu_b(S')- p(S')|
\ge  \frac 2 3\cdot t \big\}\big|+\big|\big\{b\in \bgood: \bar \mu_b(S\triangle S')
\ge \frac t 3 -\cO(\epsilon) \big\}\big|\nonumber\\
&\le \big|\big\{b\in \bgood: |\bar \mu_b(S')- p(S')|
\ge  \frac 2 3\cdot t \big\}\big|+\frac{\sum_{b\in \bgood}\bar \mu_b(S\triangle S')}{\frac t 3 -\cO(\epsilon)}\nonumber\\
&\le \big|\big\{b\in \bgood: |\bar \mu_b(S')- p(S')|
\ge \frac 2 3\cdot t \big\}\big|+\bgoodsize\frac{\bar p_\bgood(S\triangle S')}{\frac t 3 -\cO(\epsilon)}\nonumber\\
&\le \big|\big\{b\in \bgood: |\bar \mu_b(S')- p(S')|
\ge  \frac 2 3\cdot t \big\}\big|+\bgoodsize\frac{\cO(\epsilon)}{\frac t 3 -\cO(\epsilon)}\nonumber\\
&\le \big|\big\{b\in \bgood: |\bar \mu_b(S')- p(S')|
\ge  \frac 2 3\cdot t \big\}\big|+\bgoodsize\frac{\cO(\epsilon)}{t-\cO(\epsilon)}.\label{eq:asd}
\end{align}
Choosing $t=\cO\Paren{\sqrt{\frac{\ln (1/\advfrac)}\bsize}}$ in the above equation extends condition~\eqref{eq:lem3} to subsets $S\in\cF'$.

\section{Proof of Lemma~\ref{lem:relloss}}
\begin{proof}
\begin{align}
&r_p(h^*(q))-r^*_p(\mathcal{H})\nonumber \\
&= 
r_p(h^*(q))-r_p(h^*(p))\nonumber \\
&= r_p(h^*(q))-r_q(h^*(q))+r_q(h^*(q))-r_q(h^*(p))+r_q(h^*(p))-r_p(h^*(p))\nonumber \\
&\le r_q(h^*(q))-r_q(h^*(p))+2\sup_{h\in\mathcal{H}}|r_q(h)-r_p(h)|\nonumber  \\
&\le 2\sup_{h\in\mathcal{H}}|r_q(h)-r_p(h)|\nonumber  \\
&\le  4 ||p-q||_{\cF_{\mathcal{H}}},\nonumber
\end{align}
here the last inequality uses~\eqref{eq:apw}.
\end{proof}

%\appendix
%\appendix
%\section{Preliminaries} 

\end{document}